\DeclareSymbolFont{rsfs}{U}{rsfs}{m}{n}
\DeclareSymbolFontAlphabet{\mathscrsfs}{rsfs}
\numberwithin{equation}{section}
\newtheoremstyle{myexample} 
    {\topsep}                    
    {\topsep}                    
    {\rm }                   
    {}                           
    {\bf }                   
    {.}                          
    {.5em}                       
    {}  
\newtheoremstyle{myremark} 
    {\topsep}                    
    {\topsep}                    
    {\rm}                        
    {}                           
    {\bf}                        
    {.}                          
    {.5em}                       
    {}  
\newtheorem{claim}{Claim}[section]
\newtheorem{lemma}[claim]{Lemma}
\newtheorem*{assumption}{Assumption}
\newtheorem{theorem}{Theorem}
\newtheorem{proposition}[claim]{Proposition}
\theoremstyle{myremark}
\newtheorem{remark}{Remark}[section]
\theoremstyle{myremark}
\theoremstyle{myexample}
\newtheorem{example}[remark]{Example}
\def\<{\langle}
\def\>{\rangle}
\def\eps{{\varepsilon}}
\def\id{{\rm I}}
\def\sT{{\sf T}}
\def\conv{{\rm conv}}
\def\one{\mathbf{1}}
\def\normal{{\sf N}}
\def\cS{{\cal S}}
\def\bbeta{\bar{\beta}}
\def\bq{\boldsymbol{q}}
\def\one{{\bf 1}}
\def\bG{{\bf G}}
\def\brho{\overline{\rho}}
\def\conv{{\rm{conv}}}
\def\aff{{\rm{aff}}}
\def\bzero{{\boldsymbol 0}}
\def\reals{\mathbb{R}}
\def\bB{{\boldsymbol B}}
\def\bM{{\boldsymbol M}}
\def\bE{{\boldsymbol E}}
\def\by{{\boldsymbol y}}
\def\bQ{{\boldsymbol Q}}
\def\Id{{\bf I}}
\def\bG{{\boldsymbol G}}
\def\bH{{\boldsymbol H}}
\def\bPi{{\boldsymbol \Pi}}
\def\be{{\boldsymbol e}}
\def\bpi{{\boldsymbol \pi}}
\def\bx{{\boldsymbol x}}
\def\bz{{\boldsymbol z}}
\def\ba{{\boldsymbol a}}
\def\bX{{\boldsymbol X}}
\def\bY{{\boldsymbol Y}}
\def\bZ{{\boldsymbol Z}}
\def\bP{{\boldsymbol P}}
\def\bv{{\boldsymbol v}}
\def\bu{{\boldsymbol u}}
\def\bw{{\boldsymbol w}}
\def\bW{{\boldsymbol W}}
\def\hbH{\widehat{\boldsymbol H}}
\def\tbH{\widetilde{\boldsymbol H}}
\def\bU{{\boldsymbol U}}
\def\bV{{\boldsymbol V}}
\def\bR{{\boldsymbol R}}
\def\bA{{\boldsymbol A}}
\def\bB{{\boldsymbol B}}
\def\bC{{\boldsymbol C}}
\def\balpha{{\boldsymbol \alpha}}
\def\bbeta{{\boldsymbol \beta}}
\def\brho{{\boldsymbol \rho}}
\def\blambda{{\boldsymbol \lambda}}
\def\ext{{\rm {ext}}}
\def\bSigma{{\boldsymbol \Sigma}}
\def\bh{{\boldsymbol h}}
\def\hbh{\hat{\boldsymbol h}}
\def\Ball{{\sf B}}
\def\Treg[#1]{T^{{\rm reg},#1}}
\def\GW[#1]{{\rm GW}(#1)}
\def\MGW[#1]{{\rm MGW}(#1)}
\def\cuD{\mathscrsfs{D}}
\def\cQ{{\mathcal Q}}
\def\bfone{{\boldsymbol 1}}
\def\cuL{\mathscrsfs{L}}
\def\cuR{\mathscrsfs{R}}
\def\balpha{{\boldsymbol \alpha}}
\def\Ind{{\rm\bf I}}
\title{Non-negative Matrix Factorization via Archetypal Analysis}
\author{Hamid Javadi\thanks{Department of Electrical Engineering, Stanford University}\;\;
\; and \;\; Andrea Montanari\thanks{Department of Electrical 
Engineering and Statistics, Stanford University}}
\begin{document}

\maketitle

\begin{abstract}
Given a collection of data points, non-negative matrix factorization (NMF) suggests to express  them as
convex combinations of a small set of `archetypes' with non-negative entries. This decomposition is unique only
if the true archetypes are non-negative and sufficiently sparse  (or the weights are sufficiently sparse), a 
regime that is captured by the separability condition and its generalizations. 

In this paper, we study an approach to NMF that can be traced back to the  work of Cutler and Breiman \cite{cutler1994archetypal} 
and does not require the data to be separable, while providing a generally unique decomposition. We optimize
the trade-off between two objectives: we minimize the distance of the data points from the convex envelope of the 
archetypes (which can be interpreted as an empirical risk), while minimizing the distance of the archetypes from the convex envelope of the data
(which can be interpreted as a data-dependent regularization). The archetypal analysis method of \cite{cutler1994archetypal} is recovered as the
limiting case in which the last term is given infinite weight. 

We introduce a `uniqueness condition' on the  data  which is necessary for exactly recovering the archetypes from noiseless data. We prove that, under uniqueness 
(plus additional regularity conditions on the geometry of the archetypes), our estimator is robust. While our approach requires solving a non-convex optimization
problem, we find that standard optimization methods succeed in  finding good solutions both for real and synthetic data.
\end{abstract}

\section{Introduction}

Given a set of data points $\bx_1,\bx_2,\cdots,\bx_n\in \reals^d$, it is often  useful to
represent them as convex combinations of a small set of vectors (the `archetypes' $\bh_1,\dots,\bh_{\ell}$):
\begin{align}
\bx_i \approx \sum_{\ell=1}^r w_{i,\ell}\bh_{\ell}\, ,\;\;\;\; w_{i,\ell}\ge   0, \;\; \sum_{\ell=1}^rw_{i,\ell} =1\, . \label{eq:Decomposition}
\end{align}
Decompositions of this type have wide ranging applications, from chemometrics \cite{paatero1994positive} to image processing
\cite{lee1999learning} and topic modeling \cite{xu2003document}. As an example, Figure \ref{fig:spectra-no-noise} displays the 
infrared reflection spectra\footnote{Data were  retrieved from the NIST Chemistry WebBook dataset \cite{nist}.} 
of four molecules (caffeine, sucrose, lactose and trioctanoin) for wavenumber between 
$1186\;  {\rm cm}^{-1}$ and $1530\; {\rm cm}^{-1}$. Each spectrum is a vector $\bh_{0,\ell}\in\reals^d$, with $d=87$ and 
$\ell\in \{1,\dots, 4\}$. If a mixture of these substances is analyzed, the resulting spectrum will be a convex combination of the 
spectra of the four analytes. This situation arises in hyperspectral imaging \cite{ma2014signal}, where a main focus is to 
estimate spatially  varying  proportions of a certain number of  analytes.
In order to mimic this setting, we generated $n=250$ synthetic random convex combinations $\bx_1,\dots,\bx_n\in\reals^d$
of the four spectra $\bh_{0,1}$, \dots, $\bh_{0,4}$, each containing two or more of these four  analytes, and tried to reconstruct the pure spectra from the $\bx_i$'s.
Each column in Figure \ref{fig:spectra-no-noise}  displays the reconstruction obtained using a different procedure.
We refer to Appendix \ref{app:Numerical} for further details.

\begin{figure}
  \phantom{A}\hspace{-2cm} \includegraphics[width=1.2\textwidth]{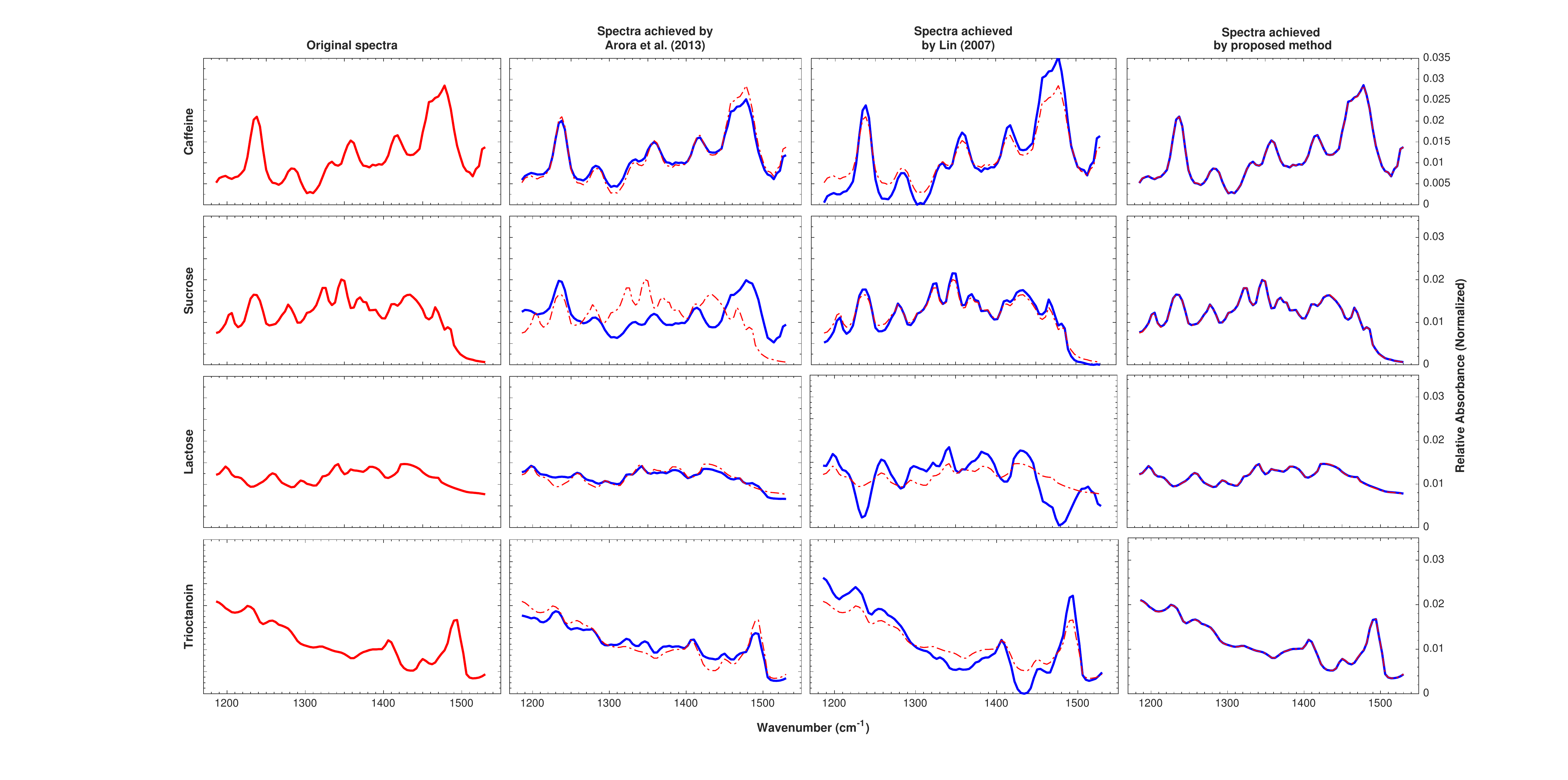}
\vspace{-1cm}

    \caption{Left column: Infrared reflection spectra of four molecules. Subsequent columns: Spectra estimated from $n=250$ spectra of mixtures of the
four original substances (synthetic data generated by taking random convex combinations of the pure spectra, see Appendix \ref{app:Numerical} for details). 
Each column reports the results obtained
with a different estimator: continuous blue lines correspond to the reconstructed spectra; dashed red lines correspond to the ground truth.}
    \label{fig:spectra-no-noise}
\end{figure}
Without further constraints, the decomposition (\ref{eq:Decomposition}) is dramatically underdetermined. 
Given a set of valid archetypes $\{\bh_{0,\ell}\}_{\ell\le r}$,  any set $\{\bh_{\ell}\}_{\ell\le r}$ whose convex hull contains the $\{\bh_{0,\ell}\}_{\ell\le r}$  
also satisfies Eq.~(\ref{eq:Decomposition}). For instance, we can set $\bh_\ell=\bh_{0,\ell}$ for $\ell\le r-1$,
and $\bh_r= (1+s)\bh_{0,r}-s\bh_{0,1}$ for any $s\ge 0$,  and obtain an equally good representation of the data $\{\bx_i\}_{i\le n}$.

How should we constrain the decomposition (\ref{eq:Decomposition}) in such a way that it is generally unique (up to permutations
of the $r$ archetypes)? Since the seminal work of Paatero and Tapper \cite{paatero1994positive,paatero1997least}, and of 
Lee and Seung \cite{lee1999learning,lee2001algorithms}, an overwhelming amount of work has addressed this question
by making the assumptions that the archetypes are componentwise non-negative $\bh_{\ell}\ge 0$. Among other applications
the non-negativity constraint is justified for chemometrics (reflection or absorption spectra are non-negative), and 
topic modeling (in this case archetypes correspond to topics, which are represented as probability distributions over words).
This formulation has become popular as non-negative matrix factorization (NMF).

Under the non-negativity constraint $\bh_{\ell}\ge 0$ the role of weights and archetypes becomes symmetric, and 
the decomposition (\ref{eq:Decomposition}) is unique provided that the archetypes or the weights are sufficiently sparse
(without loss of generality one can assume $\sum_{\ell=1}^rh_{\ell,i}=1$). This point was clarified by
Donoho and Stodden \cite{donoho2003does}, introduced a separability condition that ensure uniqueness. The non-negative archetypes
$\bh_1,\cdots,\bh_{r}$ are separable if, for each $\ell\in [r]$ there exists an index $i(\ell)\in[d]$ such that $(\bh_{\ell})_{i(\ell)}=1$,
and $(\bh_{\ell'})_{i(\ell)}=0$ for all $\ell'\neq \ell$. If we exchange the roles of  weights $\{w_{i,\ell}\}$ and archetypes $\{h_{\ell,i}\}$,
separability requires that $\ell\in [r]$ there exists an index $i(\ell)\in[n]$ such that $w_{i(\ell),\ell}=1$,
and $w_{i(\ell),\ell'}=0$ for all $\ell'\neq \ell$
This condition has a simple geometric interpretation: the data are separable
if for each archetype $\bh_{\ell}$ there is at least one data point $\bx_i$ such that $\bx_i=\bh_{\ell}$.
A copious literature has developed algorithms for non-negative matrix factorization under 
separability condition or its generalizations \cite{donoho2003does,arora2012computing,recht2012factoring,arora2013practical,ge2015intersecting}.  

Of course this line of work has a drawback: in practice we do not know
whether the data are separable. (We refer to the Section \ref{sec:Discussion} for a comparison with 
\cite{ge2015intersecting}, which relaxes the separability assumption.) 
Further, there are many cases in which the archetypes 
$\bh_1,\dots,\bh_{\ell}$ are not necessarily non-negative. For instance, in spike sorting, the data are measurements of neural activity ad the 
archetypes correspond to waveforms associated to different neurons \cite{roux2009adaptive}. In other applications the archetypes
$\bh_{\ell}$ are non-negative, but --in order to reduce complexity-- the data $\{\bx_i\}_{i\le n}$  are replaced by a random low-dimensional projection \cite{kim2008fast,wang2010efficient}.
The projected archetypes loose the non-negativity property.
 Finally, the  decomposition (\ref{eq:Decomposition})  is generally non-unique, even under the constraint $\bh_{\ell}\ge 0$. 
This is illustrated, again, in Figure~\ref{fig:spectra-no-noise}: all the spectra are strictly positive, and hence we can find archetypes $\bh_1,\dots,\bh_4$ that are still non-negative
and whose convex envelope contains $\bh_{0,1},\dots,\bh_{0,4}$.

Since NMF is underdetermined, standard methods fail in such applications, as illustrated in Figure~\ref{fig:spectra-no-noise} . 
We represent the data as a matrix $\bX\in\reals^{n\times d}$
whose $i$-th row is the vector $\bx_i$, the weights by a matrix $\bW = (w_{i,\ell})_{i\le n, \ell\le r}\in\reals^{n\times d}$ and the prototypes by a 
matrix $\bH= (h_{\ell,j})_{\ell\le r, j\le d}\in\reals^{r\times d}$. The third column of Figure~\ref{fig:spectra-no-noise} 
uses a projected gradient algorithm from \cite{lin2007projected} to solve the problem
\begin{align}
\mbox{minimize}&\;\;\;\;\;\; \|\bX-\bW\bH\|_F^2\, ,\label{eq:StandardNMF}\\
\mbox{subject to}& \;\;\;\;\;\;  \bW\ge 0, \;\; \bH\ge 0\, .\nonumber
\end{align}
Empirically, projected gradient converges to a point with very small fitting error $\|\bX-\bW\bH\|_F^2$, but the reconstructed spectra (rows of $\bH$)
are inaccurate. The second column in the same figure shows the spectra reconstructed using an algorithm from \cite{arora2013practical}, that assumes separability:
as expected, the reconstruction is not accurate.

In a less widely known paper, Cutler and Breiman  \cite{cutler1994archetypal} addressed the same problem using what they call `archetypal analysis.'
Archetypal analysis presents two important differences with respect to standard NMF: $(1)$ The archetypes $\bh_{\ell}$ are not necessarily required to be non-negative
(although this constraint can be easily incorporated);
$(2)$ The under-determination of the decomposition (\ref{eq:Decomposition}) is addressed by requiring that the archetypes belong to the convex hull
of the data points: $\bh_{\ell}\in \conv(\{\bx_{i}\}_{i\le n})$. 

In applications the condition $\bh_{\ell}\in \conv(\{\bx_{i}\}_{i\le  n})$  is too strict. 
This paper builds on the ideas of \cite{cutler1994archetypal} to propose a formulation of NMF that is
uniquely defined (barring degenerate cases) and provides a useful notion of optimality. 
In particular, we present the following contributions.

\vspace{0.1cm}

\noindent{\bf Archetypal reconstruction.} We propose to reconstruct the archetypes $\bh_1,\dots,\bh_{r}$ by optimizing a combination of
two objectives. On one hand, we minimize the error in the decomposition (\ref{eq:Decomposition}). This amounts to minimizing the distance between 
the data points and the convex hull of the archetypes. On the other hand, we minimize the distance of the archetypes from the convex hull of data 
points. This relaxes the original condition imposed in \cite{cutler1994archetypal} which required the archetypes to lie in
$\conv(\{\bx_{i}\})$, and allows to treat non-separable data. 

\vspace{0.1cm}

\noindent{\bf Robustness guarantee.}  We next assume that that the decomposition (\ref{eq:Decomposition}) approximately hold
for some `true' archetypes $\bh^0_{\ell}$ and weights $w_{i,\ell}^0$, namely  $\bx_i = \bx_i^0+\bz_i$, where 
$\bx^0_i=\sum_{\ell=1}^r w^0_{i,\ell}\bh^0_{\ell}$ and $\bz_i$ captures unexplained effects. We introduce a `uniqueness condition' on the  
data $\{\bx^0_i\}_{i\le n}$ which is necessary for exactly recovering the archetypes from the noiseless data. 
We prove that, under uniqueness (plus additional regularity conditions on the geometry of
the archetypes), our estimator is robust. Namely it outputs archetypes $\{\hbh_\ell\}_{\ell\le r}$ whose distance from the true ones $\{\bh^0_{\ell}\}_{\ell\le r}$
(in a suitable metric)  is controlled by $\sup_{i\le n}\|\bz_i\|_2$. 

\vspace{0.1cm}

\noindent{\bf Algorithms.} Our approach reconstructs the archetypes $\bh_1,\dots,\bh_r$ by minimizing a non-convex risk function $\cuR_{\lambda}(\bH)$.  We propose 
three descent algorithms that appear to perform well on realistic instances of the problem. 
In particular, Section \ref{sec:Algorithm} introduces a proximal alternating linearized minimization algorithm (PALM) that is guaranteed to converge to critical points
of the risk function. Appendix \ref{app:algo} discusses two alternative approaches. One possible explanation for the success of such descent algorithms is that reasonably 
good initializations can be constructed using spectral methods, or approximating the data as separable, cf. Section \ref{sec:Initialization}. We defer a study of global convergence of
this two-stages approach to future work.

\section{An archetypal reconstruction approach}

\begin{figure}
\begin{center}
\includegraphics[width=0.4\textwidth]{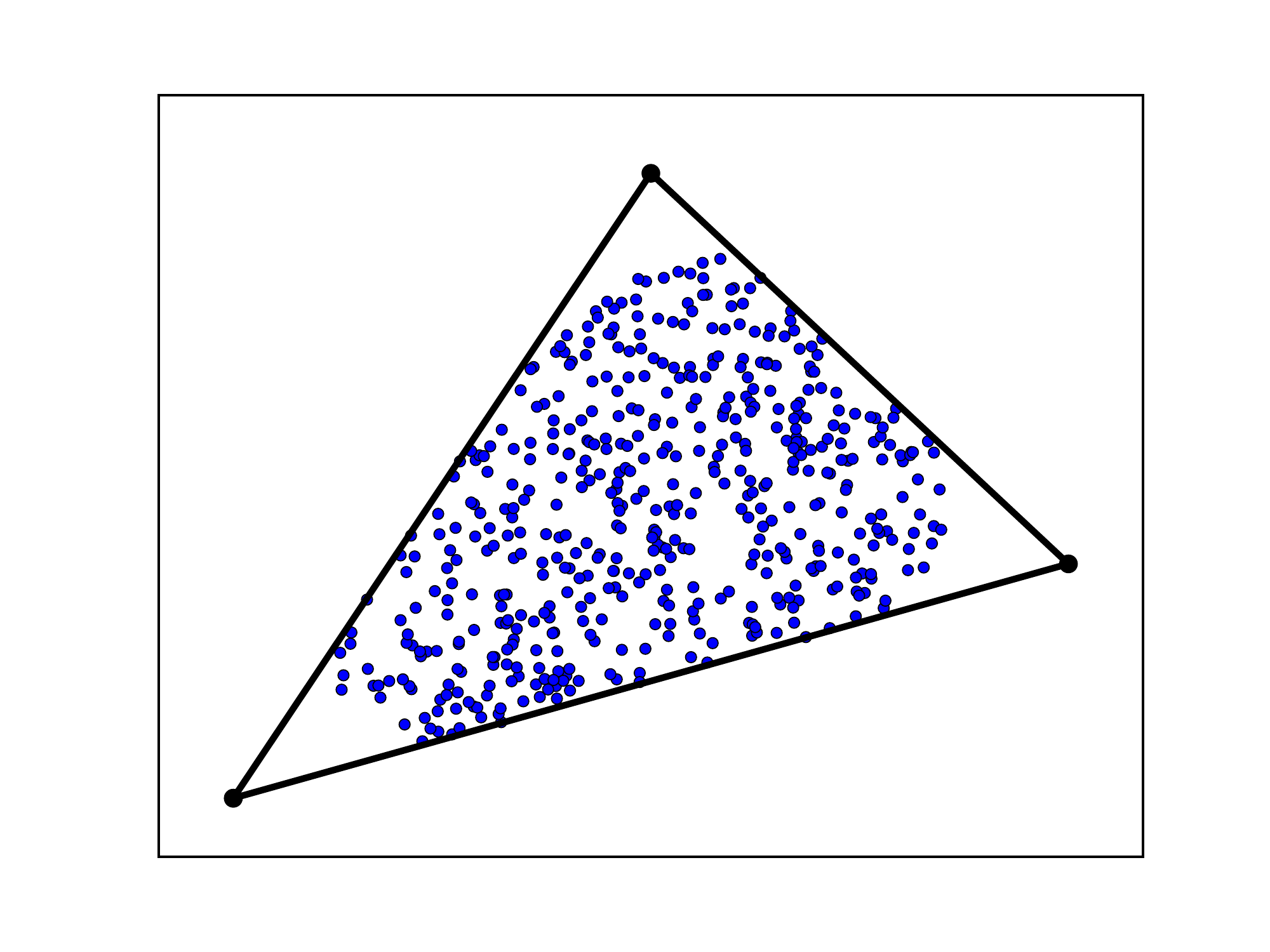}\hspace{-0.5cm}
\includegraphics[width=0.4\textwidth]{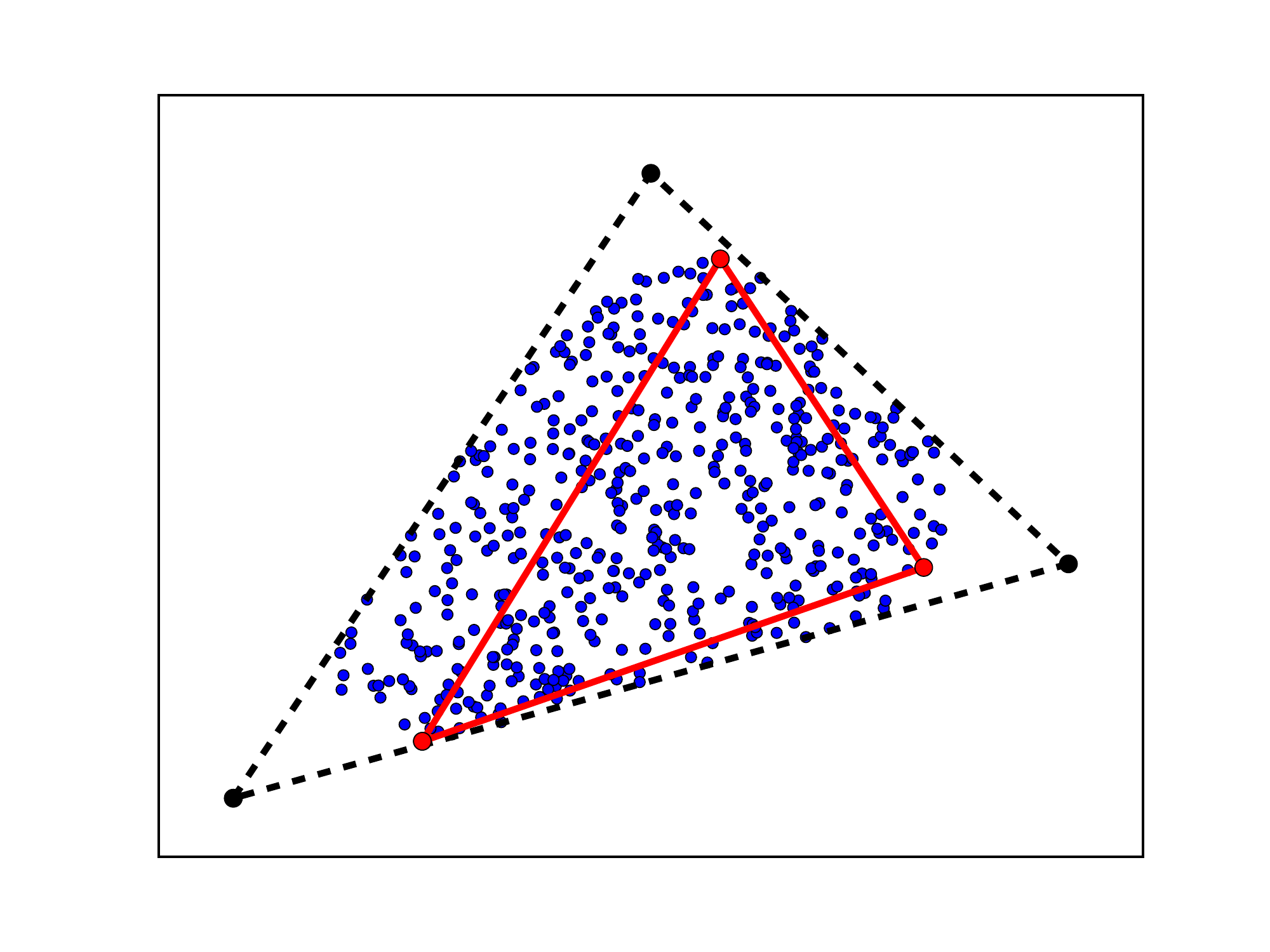}\\
\includegraphics[width=0.4\textwidth]{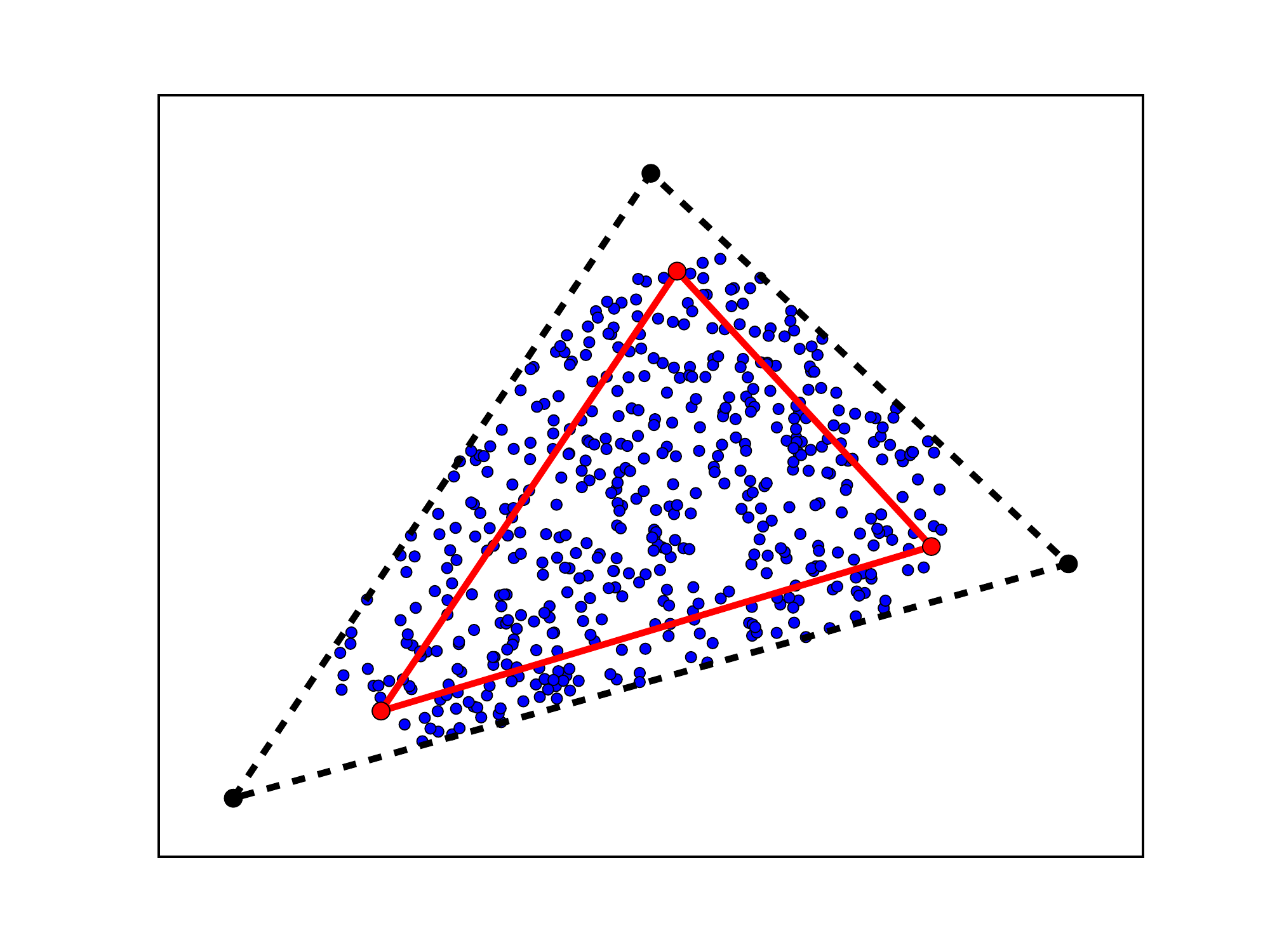}\hspace{-0.5cm}
\includegraphics[width=0.4\textwidth]{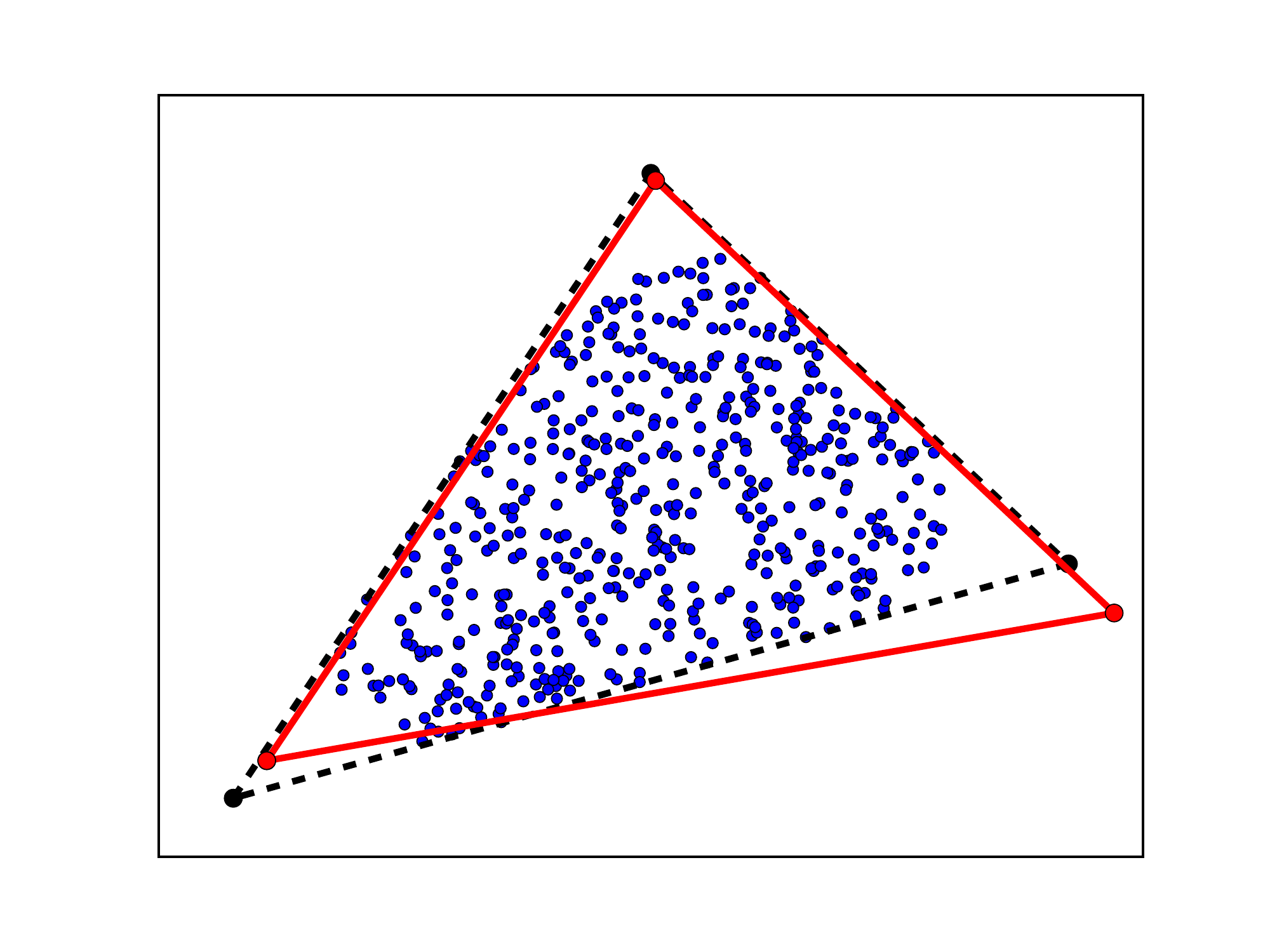}
\end{center}
    \caption{Toy example of archetype reconstruction. Top left: data points (blue) are generated as random linear combinations
of $r=3$ archetypes in $d=2$ dimensions (red, see Appendix
\ref{app:Numerical} for details). Top right: Initialization using the algorithm of \cite{arora2013practical}.  
Bottom left: Output of the alternate minimization algorithm of
\cite{cutler1994archetypal} with initialization form the previous
frame. Bottom right: Alternate minimization algorithm to compute
the estimator (\ref{eq:Lagrangian}), with $\lambda = 0.0166$.}
    \label{fig:example}
\end{figure}

Let $\cQ\subseteq \reals^d$ be a convex set and $D:\cQ\times \cQ\to\reals$, $(\bx,\by)\mapsto D(\bx;\by)$ a loss function on $\cQ$. For a point $\bu\in\cQ$, and a matrix $\bV\in\reals^{m\times d}$,
with rows $\bv_1,\dots,\bv_m\in\cQ$, we let
\begin{align}
\cuD(\bu;\bV) &\equiv \min \Big\{  D\big(\bu ; \bV^{\sT}\bpi\big)\, :\;\;\;\; \bpi  \in \Delta^m\, \Big\} \, ,\\
\Delta^m &\equiv \big\{\bx\in\reals^m_{\ge 0}:\;\; \<\bx,\bfone\> = 1\big\}\, .
\end{align}
In other words, denoting by $\conv(\bV) = \conv(\{\bv_1,\dots,\bv_m\})$ the convex hull of the rows of matrix $\bV$, 
$\cuD(\bu;\bV)$ is the minimum loss between $\bx$ and any point in  $\conv(\bV)$.
If $\bU\in\reals^{k\times d}$ is a matrix with rows $\bu_{1},\dots, \bu_{k}\in\cQ$, we generalize this definition by letting
\begin{align}
\cuD(\bU;\bV) \equiv \sum_{\ell=1}^k \cuD(\bu_{\ell};\bV)\, .
\end{align}
While this definition makes sense more generally, we have in mind two specific examples in which $D(\bx;\by)$ is actually separately convex in its arguments $\bx$ and $\by$.
(Most of our results will concern the first example.)

\begin{example}[Square loss]
In this case $\cQ = \reals^d$, and $D(\bx;\by) = \|\bx-\by\|_2^2$. This is the case originally studied by Cutler and Breiman \cite{cutler1994archetypal}.
\end{example}

\begin{example}[KL divergence]
We take $\cQ = \Delta^d$, the $d$-dimensional simplex, and $D(\bx;\by)$ to be the Kullback-Leibler
divergence between probability distributions $\bx$ and $\by$, namely $D(\bx;\by)  \equiv \sum_{i=1}^dx_i\log(x_i/y_i)$. 
\end{example}

Given data $\bx_1,\dots,\bx_n$  organized in the matrix $\bX\in\reals^{n\times d}$, we estimate the archetypes by solving the
problem\footnote{This problem can have multiple global minima if $\lambda=0$ or in degenerate settings. One minimizer is selected arbitrarily when this happens.}
\begin{align}
\hbH_{\lambda}\in \arg\min\Big\{\cuD(\bX;\bH)+\lambda\, \cuD(\bH;\bX): \;\; \bH\in\cQ^{r}\Big\}\, ,\label{eq:Lagrangian}
\end{align}
where we denote by $\cQ^r$ the set of matrices $\bH\in\reals^{r\times d}$ with rows $\bh_1,\dots,\bh_r\in\cQ$.
A few values of $\lambda$ are of  special significance. If we set
$\lambda =0$, and $\cQ =\Delta^d$, we recover the standard NMF objective (\ref{eq:StandardNMF}),
with a more general distance function $D(\,\cdot\,,\,\cdot\,)$. As pointed out above, in general this objective has no unique minimum.
If we let $\lambda\to 0+$ after the minimum is evaluated,
$\hbH_{\lambda}$ converges to the minimizer of $\cuD(\bX;\bH)$ which is the `closest'
to the convex envelope of the data $\conv(\bX)$ (in the sense of minimizing $\cuD(\bH;\bX)$). 
Finally as $\lambda\to\infty$, the archetypes $\bh_\ell$ are forced to lie in $\conv(\bX)$ and
hence we recover the method of \cite{cutler1994archetypal}.

Figure \ref{fig:example} illustrates  the advantages of the estimator (\ref{eq:Lagrangian}) on a small synthetic example, with $d=2$, $r=3$, $n=500$:
in this case the data are non separable. 
We first use the successive projections algorithm of \cite{arora2013practical} (that is designed to deal with separable data) in order to estimate the archetypes.
As expected, the reconstruction is not accurate because this algorithm assumes separability and hence estimates the archetypes with a subset of
the data points. We then use these estimates as initialization in the alternate minimization algorithm of \cite{cutler1994archetypal},
which optimizes the objective (\ref{eq:Lagrangian}) with $\lambda=\infty$. The estimates improve but not substantially: they are still constrained to lie in $\conv(\bX)$.
A significant improvement is obtained by setting $\lambda$ to a small value. We (approximately) minimize the cost function
(\ref{eq:Lagrangian}) by generalizing the alternate minimization algorithm, cf. Section \ref{sec:Algorithm}.
The optimal archetypes are no longer constrained to $\conv(\bX)$, and provide a better estimate of the true archetypes.
The last column in Figure \ref{fig:spectra-no-noise} uses the same estimator, and approximately solves problem (\ref{eq:Lagrangian}) by  gradient descent algorithm.

In our analysis we will consider a slightly different formulation in which the Lagrangian of Eq.~(\ref{eq:Lagrangian}) is
replaced by a hard constraint:
\begin{align}
\mbox{minimize}&\;\;\;\;\; \cuD(\bH;\bX)\, ,\label{eq:HardNoise}\\
\mbox{subject to }&\;\;\;\;\; \cuD(\bx_{i};\bH)\le \delta^2\;\;\;\mbox{ for all } i\in\{1,\dots,n\}\, .\nonumber
\end{align}
We will use this version in the analysis presented in the next section, and denote the corresponding estimator by $\hbH$.

\section{Robustness}

In order to analyze the robustness properties of estimator $\hbH$, we assume that there exists an approximate factorization 
\begin{align}
\bX = \bW_0\bH_0+ \bZ \, ,\label{eq:AssDecomposition}
\end{align}
where $\bW_0\in\reals^{n\times r}$ is a matrix of weights (with rows $\bw_{0,i}\in\Delta^r$), $\bH_0\in\reals^{r\times d}$ is a matrix of archetypes 
(with rows 
$\bh_{0,\ell}$),  and we set $\bX_0 = \bW_0\bH_0$. The deviation $\bZ$ is  arbitrary, with rows $\bz_i$ satisfying $\max_{i\le n}\|\bz_{i}\|_2\le \delta$.
We will assume throughout $r$ to be known.

We will quantify estimation error by the sum of distances between the true archetypes and the closest estimated archetypes
\begin{align}
\cuL(\bH_0,\hbH)\equiv \sum_{\ell=1}^r\min_{\ell'\le r} D(\bh_{0,\ell},\hbh_{\ell'})\, .
\end{align}
In words, if $\cuL(\bH_0,\hbH)$ is small, then for each true archetype $\bh_{0,\ell}$ there exists an estimated archetype $\hbh_{\ell'}$
that is close to it in $D$-loss. Unless two or more of the true archetypes are close to each other, this means that there is a one-to-one
correspondence between estimated archetypes and true archetypes, with small errors.
 
\begin{assumption}[Uniqueness]
We say that the factorization $\bX_0=\bW_0\bH_0$ satisfies uniqueness with parameter $\alpha>0$ (equivalently, is \emph{$\alpha$-unique}) if
for all $\bH\in\cQ^{r}$ with $\conv(\bX_0)\subseteq \conv(\bH)$, we have
\begin{align}
\cuD(\bH,\bX_0)^{1/2}\ge \cuD(\bH_0,\bX_0)^{1/2}+ \alpha\,\big\{\cuD(\bH,\bH_0)^{1/2}+\cuD(\bH_0,\bH)^{1/2}\big\}\, . \label{eq:UniquenessAssumption}
\end{align}
\end{assumption} 
The rationale for this assumption is quite clear. Assume that the data lie in the convex hull of the true archetypes $\bH_0$,
and hence Eq.~(\ref{eq:AssDecomposition}) holds without error term $\bZ=0$, i.e. $\bX=\bX_0$. We reconstruct the archetypes by 
demanding $\conv(\bX_0)\subseteq \conv(\bH)$: any such $\bH$ is a plausible explanation of the data. 
In order to make the problem well specified, we define $\bH_0$ to be the matrix of archetypes that are the closest to $\bX_0$,
and hence $\cuD(\bH,\bX_0)\ge \cuD(\bH_0,\bX_0)$ for all $\bH$. In order for the reconstruction to be unique (and hence for the problem to be identifiable)
we need to assume $\cuD(\bH,\bX_0) >\cuD(\bH_0,\bX_0)$ strictly for $\bH\neq \bH_0$. The uniqueness assumption
provides a quantitative version of this condition.

\begin{remark}
Given $\bX_0$, $\bH_0$, the best constant $\alpha$ such that Eq.~(\ref{eq:UniquenessAssumption}) holds for all $\bH$
is a geometric property that depend on $\bX_0$ only through $\conv(\bX_0)$.
In particular, if $\bX_0= \bW_0\bH_0$ is a separable factorization, then it satisfies uniqueness with parameter $\alpha=1$. Indeed in this 
case $\conv(\bH_0) = \conv(\bX_0)$, whence $\cuD(\bH,\bX_0) = \cuD(\bH,\bH_0)$ and $\cuD(\bH_0,\bX_0)=\cuD(\bH_0,\bH) = 0$.

It is further possible to show that $\alpha\in [0,1]$ for all $\bH_0,\bX_0$.  Indeed, we took $\bH_0$ to be the matrix of archetypes
that are closest to $\bX_0$. In other words, $\cuD(\bH,\bX_0)\ge \cuD(\bH_0,\bX_0)$ and hence, $\alpha \geq 0$. In addition,
since $\conv(\bX_0) \subseteq \conv(\bH_0)$, for $\bh_i$ an arbitrary row of $\bH$ we have
\begin{align}
\cuD(\bh_i,\bX_0) \leq \cuD(\bh_i,\bH_0).
\end{align}
Hence, $\cuD(\bH,\bX_0) \leq \cuD(\bH,\bH_0)$ and therefore 
\begin{align}
\cuD(\bH,\bX_0)^{1/2}\le \cuD(\bH_0,\bX_0)^{1/2}+\big\{\cuD(\bH,\bH_0)^{1/2}+\cuD(\bH_0,\bH)^{1/2}\big\}\, .
\end{align}
Thus, $\alpha \leq 1$.
\end{remark}

We say that the convex hull $\conv(\bX_0)$ has \emph{internal radius} (at least) $\mu$ if it contains an $r-1$-dimensional ball
of radius $\mu$, i.e. if there exists $\bz_0 \in \reals^d$, $\bU\in \reals^{d\times (r-1)}$, with $\bU^\sT\bU = \Id_d$ , such that
$\bz_0 + \bU\Ball_{r-1}(\mu)\subseteq \conv(\bX_0)$. We further denote by $\kappa(\bM)$ the condition number of matrix $\bM$.
\begin{theorem}\label{thm:Robust}
Assume $\bX = \bW_0\bH_0+ \bZ$ where the factorization $\bX_0=\bW_0\bH_0$ satisfies the uniqueness assumption with parameter
$\alpha>0$, and that $\conv(\bX_0)$ has internal radius $\mu>0$.
Consider the estimator $\hbH$ defined by Eq.~(\ref{eq:HardNoise}), with $D(\bx,\by) = \|\bx-\by\|_2^2$ (square loss) and $\delta = \max_{i\le n} \|\bZ_{i,\cdot}\|_2$.
 If 
\begin{align}
\max_{i\le n} \|\bZ_{i,\cdot}\|_2\le \frac{\alpha\mu}{30 r^{3/2}}\, ,
\end{align}
then, we have
\begin{align}
\cuL(\bH_0,\hbH)\le \frac{C_*^2\,  r^{5}}{\alpha^2} \max_{i\le n} \|\bZ_{i,\cdot}\|^2_2\, ,
\end{align}
where $C_*$ is a coefficient that depends uniquely on the geometry of $\bH_0$, $\bX_0$, namely $C_* = 120(\sigma_{\rm max}(\bH_0)/\mu)\cdot
\max(1, \kappa(\bH_0)/\sqrt{r})$.
\end{theorem}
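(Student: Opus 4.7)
\emph{Stage 1 (optimality and noise propagation).} The strategy is to combine optimality of $\hbH$ in~(\ref{eq:HardNoise}) with the uniqueness assumption, and then translate the resulting convex-hull-distance bound into a per-archetype bound. First, $\bH_0$ is itself feasible: since $\bx_i=\bw_{0,i}^\sT\bH_0+\bz_i$ with $\|\bz_i\|_2\le\delta$, we have $\cuD(\bx_i;\bH_0)\le\delta^2$, so $\cuD(\hbH;\bX)\le\cuD(\bH_0;\bX)$. The functional $\sqrt{\cuD(\cdot;\cdot)}$ obeys a triangle-type inequality in its second argument: for any $\bh\in\cQ$ and $\bpi\in\Delta^n$, one has $\bX^\sT\bpi=\bX_0^\sT\bpi+\bZ^\sT\bpi$ with $\|\bZ^\sT\bpi\|_2\le\delta$, whence $|\sqrt{\cuD(\bh;\bX)}-\sqrt{\cuD(\bh;\bX_0)}|\le\delta$. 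Summing over rows and using Cauchy--Schwarz gives $|\sqrt{\cuD(\bH;\bX)}-\sqrt{\cuD(\bH;\bX_0)}|\le\sqrt{r}\,\delta$ for every $\bH\in\cQ^r$, and chaining with the optimality inequality yields $\sqrt{\cuD(\hbH;\bX_0)}\le\sqrt{\cuD(\bH_0;\bX_0)}+2\sqrt{r}\,\delta$.

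\emph{Stage 2 (inflation to enable uniqueness).} The uniqueness assumption requires $\conv(\bX_0)\subseteq\conv(\bH)$, which may fail for $\bH=\hbH$. Because $\cuD(\bx_i;\hbH)\le\delta^2$, every $\bx_{0,i}$, and therefore every point of $\conv(\bX_0)$, is within $2\delta$ of $\conv(\hbH)$. Let $\bz_0$ denote the center of the internal ball $B_\mu(\bz_0)\subseteq\conv(\bX_0)$, and dilate $\hbH$ about $\bz_0$ by a factor $1+t$ with $t=2\delta/(\mu-2\delta)=O(\delta/\mu)$, producing rows $\tilde{\bh}_\ell:=\bz_0+(1+t)(\hbh_\ell-\bz_0)$ assembled into $\tbH$. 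A support-function calculation using $B_\mu(\bz_0)\subseteq\conv(\bX_0)$ shows this $t$ suffices to force $\conv(\bX_0)\subseteq\conv(\tbH)$. Each row is displaced by at most $t\|\hbh_\ell-\bz_0\|_2=O(\delta\,\sigma_{\max}(\bH_0)/\mu)$, so one more application of the triangle inequality bounds $\sqrt{\cuD(\tbH;\bX_0)}-\sqrt{\cuD(\hbH;\bX_0)}$ by $O(\sqrt{r}\,\delta\,\sigma_{\max}(\bH_0)/\mu)$. Applying the uniqueness assumption to $\tbH$ then yields
\[\alpha\Bigl(\sqrt{\cuD(\tbH;\bH_0)}+\sqrt{\cuD(\bH_0;\tbH)}\Bigr)\le\sqrt{\cuD(\tbH;\bX_0)}-\sqrt{\cuD(\bH_0;\bX_0)}=O\!\bigl(\sqrt{r}\,\delta\,\sigma_{\max}(\bH_0)/\mu\bigr),\]
and reverting the inflation transfers the same bound (up to an additive $O(\sqrt{r}\,\delta\,\sigma_{\max}(\bH_0)/\mu)$) to $\hbH$.

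\emph{Stage 3 (convex hull to vertex).} Let $\bP,\bQ\in\reals^{r\times r}$ be row-stochastic matrices realizing the optimal decompositions, so that $\bH_0=\bP\hbH+\bE_1$ and $\hbH=\bQ\bH_0+\bE_2$ with $\|\bE_1\|_F^2=\cuD(\bH_0;\hbH)$ and $\|\bE_2\|_F^2=\cuD(\hbH;\bH_0)$. Substitution yields $(\bI_r-\bP\bQ)\bH_0=\bP\bE_2+\bE_1$. The internal-radius assumption forces $\bH_0$ to have full row rank with $\sigma_{\min}(\bH_0)\gtrsim\mu$, so that dividing through, and using $\|\bP\|_{\mathrm{op}}\le\sqrt{r}$ for row-stochastic $\bP$, bounds $\|\bI_r-\bP\bQ\|_F$ by the Stage-2 estimate times $\sqrt{r}/\sigma_{\min}(\bH_0)$. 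The combinatorial fact that $\bP\bQ=\bI_r$ with $\bP,\bQ$ row-stochastic forces each to be a permutation matrix admits a quantitative perturbation producing a permutation $\bsigma$ with $\|\bP-\bsigma\|_F$ bounded polynomially in $r$ by $\|\bI_r-\bP\bQ\|_F$. From $\bh_{0,\ell}=\sum_{\ell'}P_{\ell,\ell'}\hbh_{\ell'}+(\bE_1)_{\ell,\cdot}$ together with $\|\hbh_{\ell'}-\hbh_{\sigma(\ell)}\|_2\lesssim\sigma_{\max}(\bH_0)$, one obtains $\|\bh_{0,\ell}-\hbh_{\sigma(\ell)}\|_2^2\lesssim\|(\bE_1)_{\ell,\cdot}\|_2^2+(1-P_{\ell,\sigma(\ell)})^2\sigma_{\max}(\bH_0)^2$, and summing over $\ell$ and tracking the constants produces the stated $r^5/\alpha^2$ factor with $C_*$ as announced.

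The principal obstacle is the last stage: tracking the exact polynomial dependence on $r$ in the perturbative stochastic-matrix-to-permutation estimate, and verifying that $\kappa(\bH_0)$ enters only at the order dictated by $C_*$ (and is not amplified by repeated applications of $\|\bA\bH_0\|_F\ge\sigma_{\min}(\bH_0)\|\bA\|_F$). Stage~2 is geometrically subtle but is essentially a support-function manipulation once the interior point $\bz_0$ has been identified, while Stage~1 is pure triangle inequality. The small-noise hypothesis $\delta\le\alpha\mu/(30r^{3/2})$ enters to guarantee $\mu-2\delta>0$ and keep the inflation factor $1+t$ controlled.
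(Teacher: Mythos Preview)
Your three-stage architecture mirrors the paper's proof (Lemma~A.5 for Stage~1, Lemma~A.6 for Stage~2, Lemma~A.2 for Stage~3), and Stage~1 is essentially correct. But Stage~2 has a genuine gap.

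\textbf{The dilation does not force $\conv(\bX_0)\subseteq\conv(\tbH)$.} The internal ball $B_\mu(\bz_0)$ is only $(r-1)$-dimensional: it sits inside $\aff(\bH_0)$, which is the $(r-1)$-flat carrying $\conv(\bX_0)$. Your support-function inequality $h_{\conv(\bX_0)}(\bu)\ge\langle\bz_0,\bu\rangle+\mu\|\bu\|$ therefore holds only for directions $\bu$ tangent to $\aff(\bH_0)$; for transverse $\bu$ the left side equals $\langle\bz_0,\bu\rangle$ and gives you nothing. Now $\hbH$ need not lie in $\aff(\bH_0)$ (the noisy data $\bX$ are not in that flat, and neither optimality nor feasibility forces $\hbH$ there). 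If $\aff(\hbH)\ne\aff(\bH_0)$ then any dilation of $\conv(\hbH)$ about $\bz_0$ is still an $(r-1)$-simplex lying in an $(r-1)$-flat different from $\aff(\bH_0)$, and so it cannot contain the full-dimensional set $\conv(\bX_0)\subseteq\aff(\bH_0)$. Concretely, with $r=2$: if $\conv(\bX_0)$ is a segment on the $x$-axis and $\conv(\hbH)$ is a nearby segment with nonzero slope, no dilation of the latter about a point on the axis will ever contain the former. The paper handles this by first projecting $\hbH$ orthogonally onto $\aff(\bH_0)$ to obtain $\bH'$, bounding $\|\hbH-\bH'\|$ via a separate argument that again uses the internal-radius hypothesis (this is where the term $\delta\sqrt{r}\,\sigma_{\max}(\hbH-\one\bz_0^{\sT})/\mu$ in Lemma~A.6 originates), and only then inflating $\bH'$ inside the flat.

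\textbf{A secondary circularity.} Even granting the inflation, your displacement bound $t\|\hbh_\ell-\bz_0\|_2=O(\delta\,\sigma_{\max}(\bH_0)/\mu)$ presupposes $\|\hbh_\ell-\bz_0\|_2\lesssim\sigma_{\max}(\bH_0)$, which is not known a priori: $\sigma_{\max}(\hbH)$ is controlled only through $\cuD(\hbH,\bH_0)^{1/2}+\sqrt{r}\,\sigma_{\max}(\bH_0)$, and $\cuD(\hbH,\bH_0)$ is exactly what Stage~2 is meant to bound. The paper closes this loop explicitly: it writes the Stage-2 conclusion as an inequality of the form $\cuD\le c_1(\delta)\,\cuD+c_2(\delta)$ and uses the small-noise hypothesis $\delta\le\alpha\mu/(30r^{3/2})$ to make $c_1(\delta)\le 1/2$, whence $\cuD\le 2c_2(\delta)$. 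Your outline needs the same bootstrapping step, and it is this step---not merely ``keeping $1+t$ controlled''---that consumes the hypothesis on $\delta$.

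For Stage~3, your $\bP\bQ\approx\Id$ route is workable (the paper itself uses it in Lemma~A.3 to control $\kappa(\hbH)$), but the paper's actual $\cuD\to\cuL$ conversion goes through a different cone-geometry argument (Lemmas~A.1--A.2) that yields the clean bound $\cuL(\bH_0,\hbH)^{1/2}\le\sqrt{2}\,\kappa(\bH_0)\cuD(\bH_0,\hbH)^{1/2}+(1+\sqrt{2})\sqrt{r}\,\cuD(\hbH,\bH_0)^{1/2}$ without an intermediate permutation extraction.
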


\section{Algorithms}
\label{sec:Algorithm}

While our main focus is on structural properties of non-negative matrix factorization, 
we provide evidence that the optimization problem we defined can be solved in practical scenarios. A more detailed
study is left to future work.

From a computational point of view, the Lagrangian formulation \eqref{eq:Lagrangian} is more appealing. For the sake of simplicity, we  denote the regularized risk by
\begin{align}
\cuR_{\lambda}(\bH) \equiv \cuD(\bX;\bH)+\lambda\, \cuD(\bH;\bX)\, ,\label{eq:RLagrangian}
\end{align}
and leave implicit the dependence on the data $\bX$. 
Notice that this function is non-convex and indeed has multiple global minima: in particular, permuting the rows of a minimizer $\bH$ yields other minimizers.
We will describe two greedy optimization algorithms: one based on gradient descent, and one on alternating minimization, which 
generalizes the algorithm of \cite{cutler1994archetypal}. In both cases it is helpful to use a good initialization: two initialization methods are introduced in the next section.

\subsection{Initialization}
\label{sec:Initialization}

We experimented with two initialization methods, described below. 

\vspace{0.1cm}

\noindent{\emph (1) \emph{Spectral initialization.}} Under the assumption that the archetypes $\{\bh_{0,\ell}\}_{\ell\le r}$ are linearly independent (and for non-degenerate weights $\bW$),
the `noiseless' matrix $\bX_0$ has rank exactly $r$. This motivates the following approach. We compute the singular value decomposition $\bX =\sum_{i=1}^{n\wedge d} \sigma_i\bu_i\bv_i^{\sT}$,
$\sigma_1\ge \sigma_2\ge \dots\ge \sigma_{n\wedge d}$,
and initialize $\hbH$ as the matrix $\hbH^{(0)}$ with rows $\hbh^{(0)}_1=\bv_1,\dots,\hbh^{(0)}_r=\bv_r$.

\vspace{0.1cm}

\noindent{\emph (2) \emph{Successive projections initialization.}}  We initialize $\hbH^{(0)}$ by choosing archetypes $\{\hbh^{(0)}_\ell\}_{1\le \ell\le r}$
that are a subset of the data $\{\bx_i\}_{1\le i\le n}$, selected as follows. The first archetype $\hbh^{(0)}_1$ is the data point which is farthest from 
the origin. For each subsequent archetype, we choose the point that is farthest from the affine subspace spanned by the previous ones.
\begin{center}
	\begin{tabular}{ll}
	\hline
	\vspace{-0.35cm}\\
	\multicolumn{2}{l}{ {\sc Archetype initialization algorithm}}\\
	\hline
	\vspace{-0.35cm}\\
	\multicolumn{2}{l}{ {\bf Input :}  Data $\{\bx_i\}_{i\le n}$,  $\bx_i\in\reals^d$; integer $r$;}\\
	\multicolumn{2}{l}{ {\bf Output :} Initial archetypes $\{\hbh_{\ell}^{(0)}\}_{1\le \ell\le r}$;}\\
        1: & Set $i(1) = \arg\max \{ D(\bx_{i};\bzero):\; i\le n\}$;\\
        2: & Set $\hbh^{(0)}_{1}= \bx_{i(1)}$;\\
	3: & For $\ell\in \{1,\dots, r\}$\\
        4: &\phantom{aa} Define $V_{\ell}\equiv \aff(\hbh_{1}^{(0)},\hbh_{2}^{(0)},\dots,\hbh_{\ell}^{(0)})$;\\
        5: &\phantom{aa} Set $i(\ell+1) = \arg\max \{\cuD(\bx_{i};V_{\ell})\, :\; i\le n\}$;\\
        6: &\phantom{aa} Set $\hbh^{(0)}_{\ell+1} = \bx_{i(\ell+1)}$;\\
	7: & End For;\\
	8: & Return $\{\hbh_{\ell}^{(0)}\}_{1\le \ell\le r}$l\\
	\vspace{-0.35cm}\\
	\hline
	\end{tabular}
\end{center}
This coincides with the successive projections algorithm  of \cite{araujo2001successive}, with the minor difference that $V_{\ell}$ is the affine
subspace spanned by the first $\ell$ vectors, instead of the linear subspace\footnote{The same modification is also used in \cite{arora2013practical}, but we do not apply the full 
algorithm of   this paper.} This method can be proved to return the exact archetypes if data are separable the archetypes are affine independent \cite{arora2013practical,gillis2014fast}.
When data are not separable it provides nevertheless a good initial assignment.

\subsection{Proximal alternating linearized minimization}
\label{sec:PALM}

The authors of  \cite{bolte2014proximal} develop a proximal alternating linearized minimization algorithm (PALM) to 
solve the problems of the form
\begin{align}\label{eq:palmform}
{\rm{minimize}}\quad\quad \Psi(\bx,\by) = f(\bx) + g(\by) + h(\bx,\by)
\end{align}
where $f:\reals^m\to (-\infty,+\infty]$ and $g:\reals^n\to (-\infty,\infty]$ are lower semicontinuous and $h\in C^1(\reals^{m}\times\reals^n)$. 
PALM is guaranteed to converge to critical points of the function $\Psi$ \cite{bolte2014proximal}.

We apply this algorithm to minimize the cost function (\ref{eq:RLagrangian}), with $D(\bx,\by)=\|\bx-\by\|_2^2$ which we write as
\begin{align}
\cuR_\lambda(\bH) =\min_{\bW} \Psi (\bH,\bW) = f(\bH) + g(\bW) + h(\bH,\bW)\, .
\end{align}
where,
\begin{align}
&f(\bH) = \lambda\,  \cuD(\bH,\bX),\\
&g(\bW) = \sum_{i=1}^n \Ind\left(\bw_i\in \Delta^r\right),\\
&h(\bH,\bW)  = \left\|\bX - \bW\bH\right\|_F^2.
\end{align}
In above equations $\bw_i$ are the rows of $\bW$ and the indicator function $\Ind(\bx\in \Delta^r)$ is equal to zero if 
$\bx \in \Delta^r$ and is equal to infinity otherwise.

By using this decomposition, the iterations of the PALM iteration reads
\begin{align}\label{eq:PALMITER1}
&\tbH^{k} = \bH^{k} - \frac{1}{\gamma_1^k}(\bW^k)^\sT\left(\bW^k\bH^k - \bX\right),\\
&\bH^{k+1} = \tbH^{k} - \frac{\lambda}{\lambda + \gamma_1^k}\left(\tbH^{k} - \bPi_{\conv(\bX)}\left(\tbH^{k}\right)\right),\\
&\bW^{k+1} = \bPi_{\Delta^r}\left(\bW^k - \frac{1}{\gamma_2^k}\left(\bW^k\bH^{k+1}-\bX\right)(\bH^{k+1})^\sT\right),
\label{eq:PALMITER3}
\end{align}
where $\gamma_1^k$, $\gamma_2^k$ are step sizes and, for $\bM\in\reals^{m_1\times m_2}$,  and $\cS\subseteq\reals^{m_2}$ a closed convex set,
$\bPi_{\cS}(\bM)$ is the matrix obtained by projecting the rows of $\bM$ 
onto the simplex $\cS$.
\begin{proposition}\label{propo:PALM}
Consider the risk (\ref{eq:RLagrangian}), with loss $D(\bx,\by) = \|\bx-\by\|_2^2$, and the corresponding cost function $\Psi(\bH,\bW)$.
If the step sizes are chosen such that $\gamma_1^k > \left\|\bW^{k^\sT}\bW^k\right\|_F$,  $\gamma_2^k > \max\left\{\left\|\bH^{k+1}\bH^{k+1^\sT}\right\|_F,\eps\right\}$ for some constant $\eps>0$,
then $(\bH^k,\bW^k)$ converges to a stationary point of the function $\Psi(\bH,\bW)$.
\end{proposition}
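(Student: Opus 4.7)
The plan is to invoke the convergence theorem for PALM due to Bolte, Sabach, and Teboulle~\cite{bolte2014proximal}, which guarantees convergence of the entire sequence $(\bH^k,\bW^k)$ to a critical point of $\Psi$ provided that (i) the decomposition $\Psi = f+g+h$ fits their framework (proper lower semicontinuous $f,g$ and a $C^1$ function $h$ with locally Lipschitz partial gradients); (ii) the step sizes dominate the partial Lipschitz constants of $\nabla h$ and remain bounded away from zero; (iii) the iterates stay bounded; and (iv) $\Psi$ satisfies the Kurdyka--Lojasiewicz (KL) property. The proof reduces to verifying these four points for our concrete $f, g, h$ and checking that the iterations (\ref{eq:PALMITER1})--(\ref{eq:PALMITER3}) coincide with the canonical PALM updates for $\Psi$.

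First I would check regularity. The function $h(\bH,\bW) = \|\bX - \bW\bH\|_F^2$ is a polynomial, hence $C^\infty$; $g$ is the indicator of the product of simplices $(\Delta^r)^n$, which is proper lsc; and $f(\bH) = \lambda\,\cuD(\bH,\bX)$ is a sum over rows of squared distances to the closed convex polytope $\conv(\bX)$, hence convex and continuous, in particular proper lsc. Next I would identify the iterates as PALM steps. A direct gradient computation gives $\nabla_\bH h = 2\bW^\sT(\bW\bH-\bX)$ and $\nabla_\bW h = 2(\bW\bH-\bX)\bH^\sT$ (absorbing the factor $2$ into the step sizes, as is standard), so (\ref{eq:PALMITER1}) is precisely the linearized step in $\bH$, and (\ref{eq:PALMITER3}) is the linearized step in $\bW$ followed by the prox of $g/\gamma_2^k$, which decouples across rows and reduces to row-wise projection onto $\Delta^r$. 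The nontrivial verification is that the second line of (\ref{eq:PALMITER1}) is $\mathrm{prox}_{f/\gamma_1^k}(\tbH^k)$: since $\mathrm{dist}(\cdot,C)^2$ on a closed convex $C=\conv(\bX)$ is convex and smooth with gradient $2(\,\cdot\,-\bPi_C(\,\cdot\,))$, the first-order optimality condition forces the minimizer to lie on the segment from $\tbH^k$ to $\bPi_C(\tbH^k)$; a one-line algebraic manipulation then produces exactly the interpolation coefficient $\lambda/(\lambda+\gamma_1^k)$.

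Finally I would verify the remaining hypotheses. The partial Lipschitz constants of $\nabla h$ in $\bH$ and $\bW$ are $\|\bW^\sT\bW\|_{\rm op}\le \|\bW^\sT\bW\|_F$ and $\|\bH\bH^\sT\|_{\rm op}\le \|\bH\bH^\sT\|_F$ respectively, which are dominated by the prescribed $\gamma_1^k,\gamma_2^k$; the additional $\eps$-lower bound on $\gamma_2^k$ keeps the step sizes uniformly bounded away from zero even in the degenerate case $\bH^{k+1}\bH^{k+1^\sT}\to 0$, as required in (ii). Boundedness follows because $g$ confines $\bW^k$ to $(\Delta^r)^n$ while $\Psi$ is coercive in $\bH$ (since $\conv(\bX)$ is bounded, $f(\bH)\to\infty$ as $\|\bH\|_F\to\infty$) and PALM is a descent method, so $(\bH^k,\bW^k)$ stays in a compact sublevel set of $\Psi$. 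The KL property---usually the subtlest hypothesis---is automatic here: $f$, $g$, and $h$ are all semi-algebraic (squared distance to a polytope, indicator of a polytope, and a polynomial), hence so is $\Psi$, and every semi-algebraic function satisfies KL by Bolte--Daniilidis--Lewis. The main technical obstacle is bookkeeping the relationship between $\mathrm{prox}_{f/\gamma_1^k}$ and the metric projection $\bPi_{\conv(\bX)}$ in order to match (\ref{eq:PALMITER1}) with the canonical PALM update; once that is done, direct application of the BST theorem yields convergence of $(\bH^k,\bW^k)$ to a stationary point of $\Psi$.
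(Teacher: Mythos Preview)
Your proposal is correct and follows essentially the same route as the paper: both verify the hypotheses of the Bolte--Sabach--Teboulle theorem (proper lsc $f,g$; smooth $h$ with Lipschitz partial gradients bounded via $\|\bW^\sT\bW\|_F$ and $\|\bH\bH^\sT\|_F$; boundedness of the iterates from the simplex constraint on $\bW$ together with coercivity of $f$ in $\bH$ and the descent property; and the KL property via semi-algebraicity of all three pieces). You actually go slightly further than the paper by explicitly checking that the updates (\ref{eq:PALMITER1})--(\ref{eq:PALMITER3}) coincide with the canonical PALM prox steps, in particular that the second line is $\mathrm{prox}_{f/\gamma_1^k}(\tbH^k)$ via the Moreau identity for the squared distance to a convex set; the paper simply asserts the iterations without this verification.
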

The proof of this statement is deferred to Appendix \ref{app:PALM}.

It is also useful to notice that the gradient of $\cuR_{\lambda}(\bH)$ can be computed explicitly (this can be useful to devise a stopping criterion).
\begin{proposition}\label{propo:Subdiff}
Consider the risk (\ref{eq:RLagrangian}), with loss $D(\bx,\by) = \|\bx-\by\|_2^2$,
and assume that the rows of $\bH$ are affine independent. Then, $\cuR_{\lambda}$ is differentiable at $\bH$ with gradient
\begin{align}
\nabla\cuR_{\lambda}(\bH) &=
2\sum_{i=1}^{n}\balpha^*_{i}\left(\bPi_{\conv(\bH)}(\bx_{i})-\bx_{i}\right)  +2\lambda\big(\bH - \bPi_{\conv(\bX)}(\bH)\big)\, ,\\
\balpha_{i}^{*} &= \arg\min_{\balpha \in \Delta^r} \left\| \bH^\sT \balpha - \bx_{i}^\sT\right\|_2\,.\label{eq:Alphai}
\end{align}
where we recall that $\bPi_{\conv(\bX)}(\bH)$ denotes the matrix with rows
$\bPi_{\conv(\bX)}(\bH_{1,\cdot}),\dots,\bPi_{\conv(\bX)}(\bH_{r,\cdot})$.
\end{proposition}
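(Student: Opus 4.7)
The plan is to decompose the risk into its two pieces,
$$\cuR_{\lambda}(\bH) \;=\; \cuD(\bX;\bH) \;+\; \lambda\, \cuD(\bH;\bX),$$
and compute the gradient of each piece separately. The regularizer $\cuD(\bH;\bX)=\sum_{\ell=1}^r\|\bh_\ell - \bPi_{\conv(\bX)}(\bh_\ell)\|_2^2$ is a sum of squared distances from a point to the fixed closed convex set $\conv(\bX)$. A standard result (Moreau's theorem) says that for any closed convex set $C\subseteq\reals^d$, the function $\bh\mapsto \|\bh-\bPi_C(\bh)\|_2^2$ is everywhere $C^1$ with gradient $2(\bh-\bPi_C(\bh))$. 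Summing over rows immediately yields the contribution $2\lambda\bigl(\bH-\bPi_{\conv(\bX)}(\bH)\bigr)$.

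The main work is the first term. For each $i$, write $\phi_i(\balpha,\bH)=\|\bH^\sT\balpha-\bx_i\|_2^2$, so that $\cuD(\bx_i;\bH)=\min_{\balpha\in\Delta^r}\phi_i(\balpha,\bH)$. I would first establish that when $\bh_1,\dots,\bh_r$ are affinely independent, the minimizer $\balpha_i^*$ of \eqref{eq:Alphai} is unique. This is because affine independence makes the map $\balpha\mapsto \bH^\sT\balpha$ injective on the affine hull of $\Delta^r$: if two minimizers $\balpha,\balpha'$ existed, convexity of $\phi_i(\,\cdot\,,\bH)$ together with the parallelogram identity would force $\bH^\sT\balpha=\bH^\sT\balpha'$, whence $\balpha=\balpha'$. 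Moreover, affine independence is an open condition, so uniqueness of $\balpha_i^*(\bH')$ persists in a neighborhood of $\bH$.

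With uniqueness in hand, I would invoke the envelope theorem (Danskin's theorem for minimization over a compact set): since $\Delta^r$ is compact, $\phi_i$ is jointly continuous and $C^1$ in $\bH$, and the minimizer is unique near $\bH$, the value function $\bH\mapsto\cuD(\bx_i;\bH)$ is differentiable at $\bH$ with
$$\nabla_{\bH}\cuD(\bx_i;\bH)\;=\;\nabla_{\bH}\phi_i(\balpha_i^*,\bH)\big|_{\balpha_i^* \text{ fixed}}.$$
A direct calculation (differentiating entrywise $H_{\ell,j}\mapsto\|\bH^\sT\balpha_i^*-\bx_i\|_2^2$) gives $\nabla_{\bH}\phi_i(\balpha_i^*,\bH)=2\,\balpha_i^*\bigl(\bH^\sT\balpha_i^*-\bx_i\bigr)^{\!\sT}$, and since $\bH^\sT\balpha_i^*=\bPi_{\conv(\bH)}(\bx_i)$, this is exactly $2\,\balpha_i^*\bigl(\bPi_{\conv(\bH)}(\bx_i)-\bx_i\bigr)^{\!\sT}$. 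Summing over $i$ and adding the regularizer contribution yields the claimed formula.

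The only mildly delicate step is the uniqueness of $\balpha_i^*$ and its persistence in a neighborhood, which are what allow the envelope theorem to produce a genuine gradient rather than just a subgradient; everything else is routine calculus and the standard smoothness of $\|\cdot - \bPi_C(\cdot)\|_2^2$. I expect no serious obstacle beyond carefully stating these ingredients.
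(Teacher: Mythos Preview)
Your proposal is correct and follows the same overall decomposition as the paper: split $\cuR_\lambda$ into $\cuD(\bX;\bH)$ and $\lambda\,\cuD(\bH;\bX)$, and differentiate each piece separately, with the key observation being that affine independence of the rows of $\bH$ forces uniqueness of each $\balpha_i^*$.

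The only difference is one of packaging. For the regularizer, you cite Moreau's theorem; the paper instead writes $\cuD(\bu,\bX)$ via its Lagrangian dual (a pointwise supremum of affine-in-$\bu$ functions) and reads off the gradient from the unique dual optimizer. For the fitting term, you invoke Danskin's theorem; the paper does the sandwiching by hand, bounding $\cuD(\bu,\bH_0+\bV)-\cuD(\bu,\bH_0)$ above using the suboptimal choice $\bpi_0$ and below using $\bpi_{\bV}$, then using $\bpi_{\bV}\to\bpi_0$ (which is exactly your uniqueness argument) to show the residual is $o(\|\bV\|_F)$. Your route is cleaner if one is willing to quote the named theorems; the paper's is self-contained.
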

The proof of this proposition is given in Appendix \ref{app:Subdiff}.
Appendix \ref{app:algo} also discusses two alternative algorithms.

\begin{figure}
\phantom{A}\hspace{-2cm}
    \includegraphics[width=1.2\textwidth]{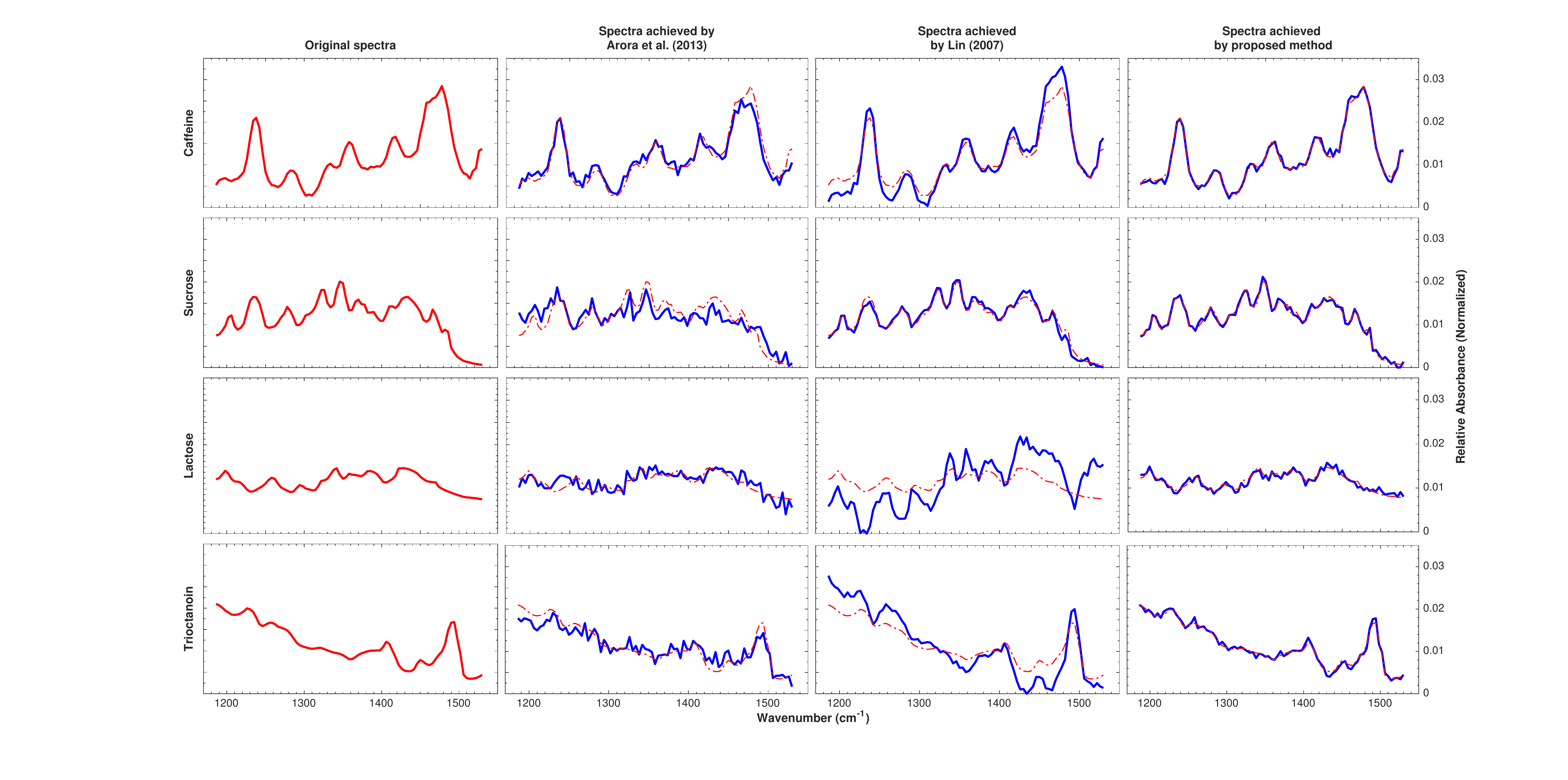}
     \vspace{-1cm}

    \caption{Reconstructing infrared spectra of four molecules, from noisy random convex combinations. Noise level $\sigma = 10^{-3}$.
Left column: original spectra. The other columns correspond to different reconstruction methods.}\label{fig:projgradsignals_lownoise}
\end{figure}
\begin{figure}
\phantom{A}\hspace{-2cm}
    \includegraphics[width=1.2\textwidth]{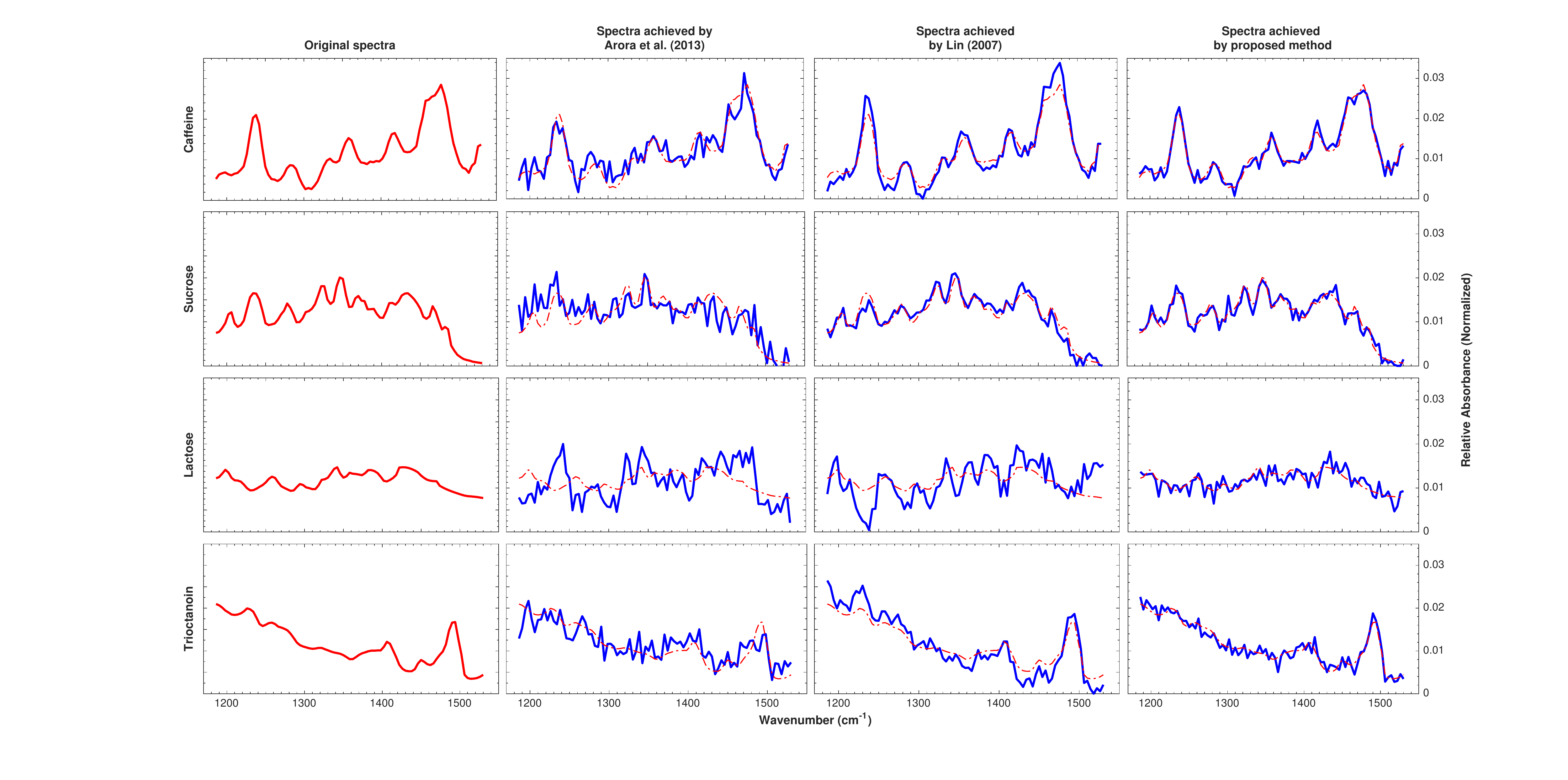}
     \vspace{-1cm}

    \caption{As in Figure \ref{fig:projgradsignals_lownoise}, with $\sigma = 2\cdot 10^{-3}$ (in blue).}\label{fig:oursignals_highnoise}
\end{figure}
\begin{figure}
 \phantom{A}\hspace{3cm}\includegraphics[width=0.65\textwidth]{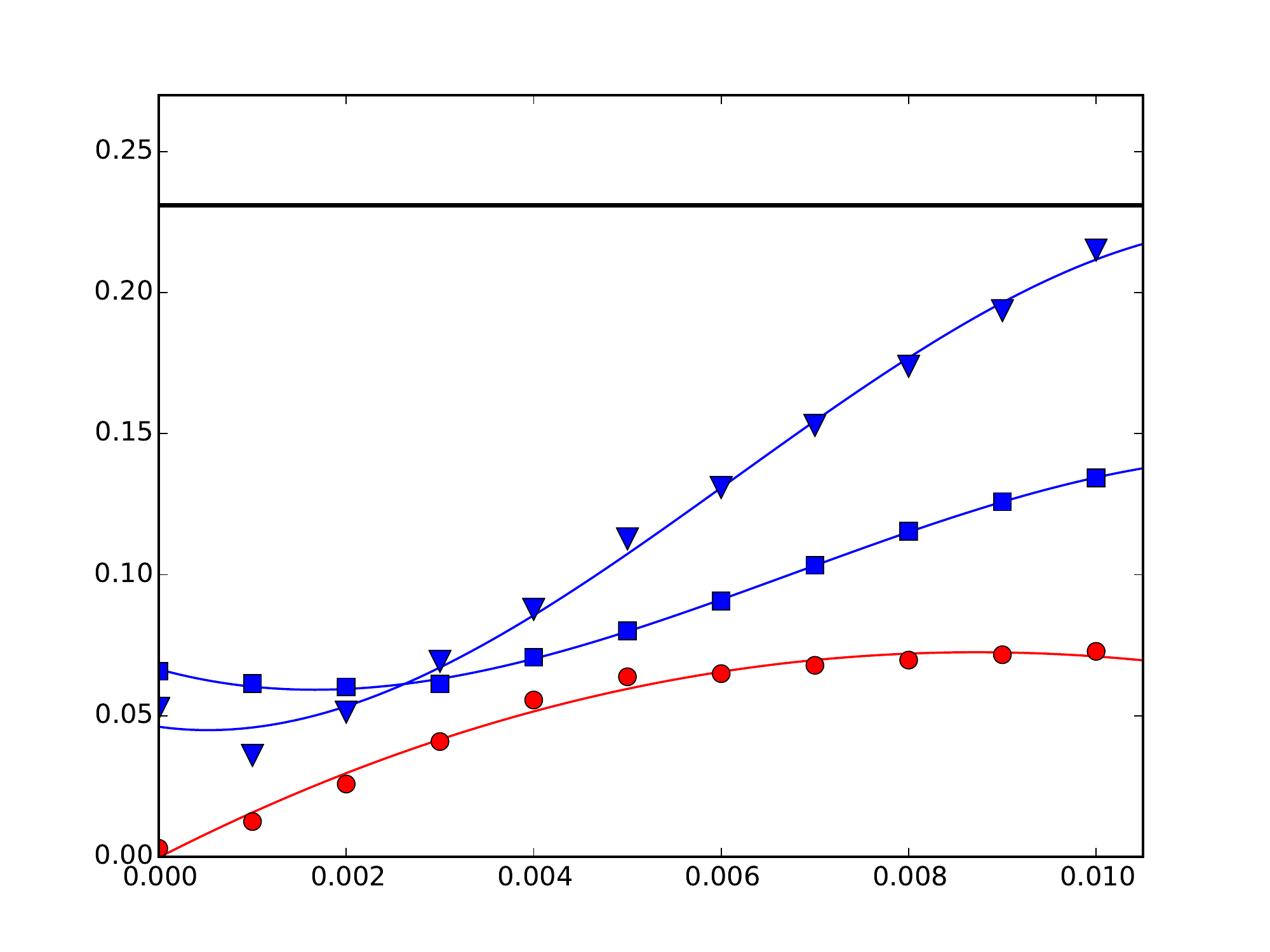}
\put(-160,0){$\sigma$}
\put(-330,100){{\small $\cuL(\bH_0,\hbH)^{1/2}$}}
    \caption{Risk $\cuL(\bH_0,\hbH)^{1/2}$ vs $\sigma$ for different reconstruction methods. Triangles (blue): anchor words  algorithm from \cite{arora2013practical}.
Squares (blue): minimizing the objective function (\ref{eq:StandardNMF}) using the projected gradient algorithm of \cite{lin2007projected}. Circles (red): 
archetypal reconstruction approach in this paper. Interpolating lines are just guides for the eye. The thick horizontal line corresponds to the trivial estimator $\hbH=0$.}\label{fig:Dvssigma}
\end{figure}
\subsection{Numerical experiments}

We implemented both the PALM algorithm described in the previous
section, and the two algorithms described in Appendix \ref{app:algo}. The outcomes are generally 
similar. 

Figures \ref{fig:projgradsignals_lownoise} and \ref{fig:oursignals_highnoise} repeat the experiment already described in the introduction.
We generate $n=250$ convex combinations of $r=4$ spectra $\bh_{0,1},\dots,\bh_{0,4}\in\reals^d$, $d=87$, this time adding white Gaussian noise with 
variance $\sigma^2$. We minimize the Lagrangian $\cuR_{\lambda}(\bH)$,
with\footnote{These values were chosen as to  approximately minimize the estimation error.}  $\lambda = 4$ (for Figure \ref{fig:projgradsignals_lownoise}) and $\lambda = 0.8$ 
(for Figure \ref{fig:oursignals_highnoise}). The  reconstructed spectra of the pure analytes
appear to be accurate and robust to noise.

In Figure \ref{fig:Dvssigma} we repeated the same experiment systematically for $10$ noise realizations for each noise level $\sigma$, and report
the resulting average loss. 
Among various reconstruction methods, the approach described in this paper seem to have good robustness to noise and achieves 
exact reconstruction as $\sigma\to 0$.

\section{Discussion}
\label{sec:Discussion}

\begin{figure}
 \includegraphics[width=0.475\textwidth]{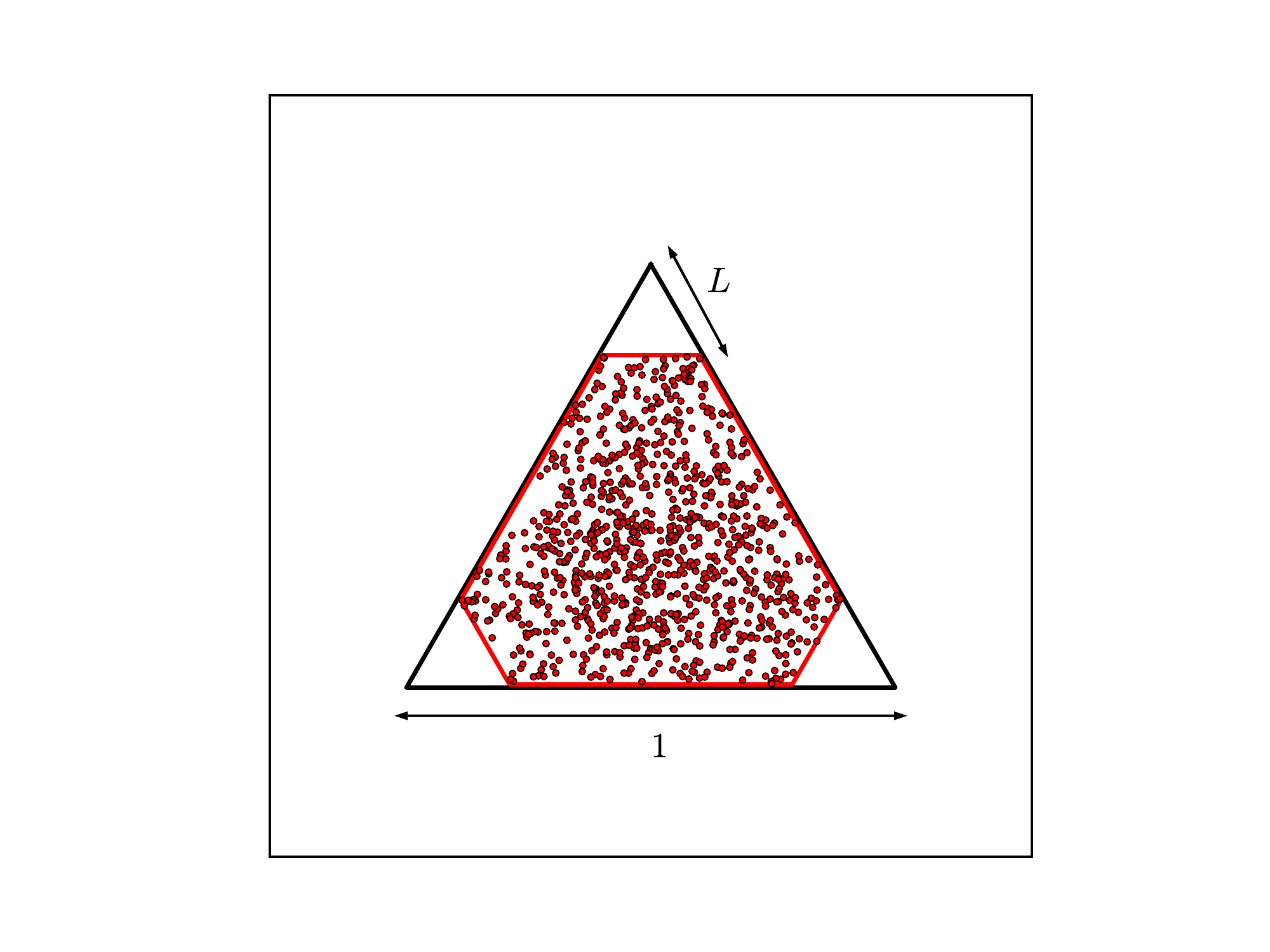}
 \includegraphics[width=0.475\textwidth]{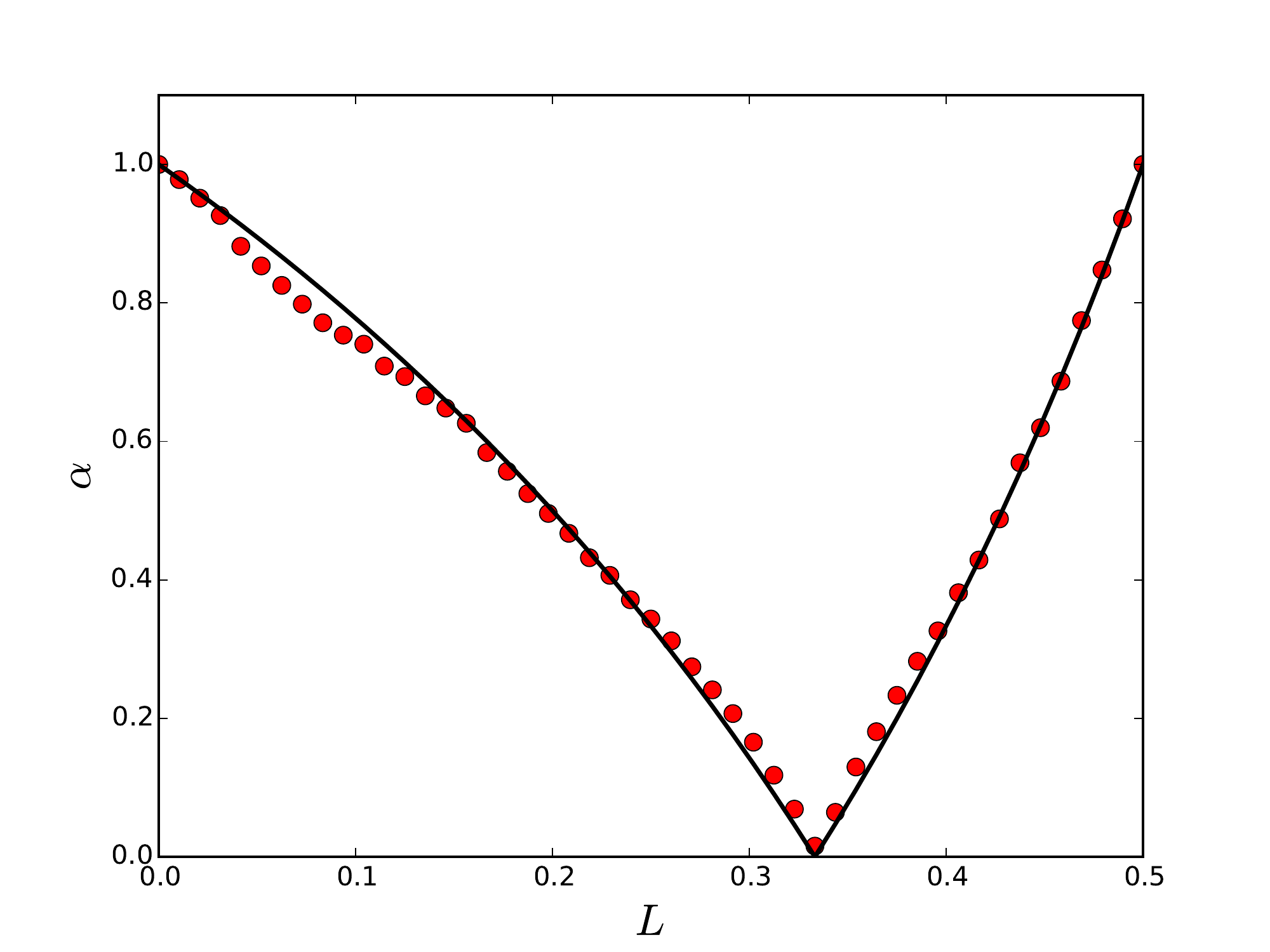}
    \caption{Numerical computation of the uniqueness parameter $\alpha$. Left: data geometry. The red hexagon corresponds to  $\conv(\bX)$, and
the black (equilateral) triangle to the archetypes  $\bH_0$, for $L<1/3$. For $L=1/3$ the archetypes are not unique, and for $L\in (1/3,1/2]$, they are given by an
equilateral triangle rotated by $\pi/3$ (pointing down). Right: numerical evaluation of the uniqueness constant (red circles). The continuous line corresponds to an analytical 
upper bound (triangle rotated by $\pi/3$ with respect to $\bH_0$).}\label{fig:Alphanum}
\end{figure}

We introduced a new optimization formulation of the non-negative matrix factorization problem.
In its Lagrangian formulation, our approach consists in minimizing the cost function $\cuR_{\lambda}(\bH)$ defined in Eq.~(\ref{eq:RLagrangian}). 
This encompasses applications in which only one of the factors is required to be non-negative.
A special case of this formulation ($\lambda\to\infty$) corresponds to the `archetypal analysis' of \cite{cutler1994archetypal}. In this case,
the archetype estimates coincide with a subset of the data points, which is appropriate only under the separability assumption of \cite{donoho2003does}.

Our main technical result (Theorem \ref{thm:Robust}) is a robustness guarantee for the reconstructed archetypes. This holds under 
the uniqueness assumption, which appears to hold for generic geometries of the dataset. In particular, while separability implies uniqueness (with optimal constant 
$\alpha=1$), uniqueness holds for non-separable data as well. To the best of our knowledge, similar robustness results have been obtained 
under separability \cite{recht2012factoring,arora2013practical,gillis2014fast,gillis2015semidefinite} (albeit these works obtain a better dependence on $r$). 
The only exception is the recent work of Ge and Zou \cite{ge2015intersecting}
who prove robustness under a `subset separability' condition, which provides a significant relaxation of  separability.
Under this condition, \cite{ge2015intersecting} develops a polynomial-time algorithm to estimate the archetypes by identifying and intersecting the
faces of $\conv(\bH_0)$. However, the algorithm  of  \cite{ge2015intersecting}  exploits collinearities to identify the faces, and this requires additional `genericity' assumptions.

Admittedly, the uniqueness constant $\alpha$ is difficult to evaluate analytically, even for simple geometries
of the data. However, by definition it does not vanish except in the case of multiple minimizers, and we expect it typically to be of
order one. Figure \ref{fig:Alphanum} illustrate this point by computing numerically $\alpha$ for a simple one-parameter family 
of geometries with $r=3$, $d=2$. The parameter $\alpha$ vanishes at a single point, corresponding to a degenerate problem with multiple solutions.

Finally,  several earlier works addressed the non-uniqueness problem in classical non-negative
matrix factorization. Among others, Miao and Qi \cite{miao2007endmember} penalize a matrix of archetypes $\bH$ by the corresponding volume. Closely
related to our work is the approach of M{\o}rup and Hansen \cite{morup2012archetypal} tha also builds on archetypal analysis. To the best of our knowledge, none of these
works establishes robustness of the proposed methods.

We conclude by mentioning three important problems that are not addressed by this paper:
$(1)$ Are there natural condition under which  the risk function  $\cuR_{\lambda}(\bH)$ of Eq.~(\ref{eq:RLagrangian}) can be optimized
in polynomial time? We only provided an algorithm that is guaranteed to converge to a critical point.
$(2)$ We assumed the rank $r$ to be known. In practice it will need to be estimated from the data.
$(3)$ Similarly, the regularization parameter $\lambda$ should be chosen from  data. 

\section*{Acknowledgements}
This work was partially supported by the NSF grant CCF-1319979 and a Stanford Graduate Fellowship.

\bibliographystyle{amsalpha}

\newcommand{\etalchar}[1]{$^{#1}$}
\providecommand{\bysame}{\leavevmode\hbox to3em{\hrulefill}\thinspace}
\providecommand{\MR}{\relax\ifhmode\unskip\space\fi MR }
\providecommand{\MRhref}[2]{%
  \href{http://www.ams.org/mathscinet-getitem?mr=#1}{#2}
}
\providecommand{\href}[2]{#2}

\addcontentsline{toc}{section}{References}

\newpage

\appendix

\section{Further details on numerical experiments}
\label{app:Numerical}

The data in Figures \ref{fig:spectra-no-noise}, \ref{fig:projgradsignals_lownoise}, and \ref{fig:oursignals_highnoise} were generated as follows. We retrieved 
infrared reflection spectra of caffeine, sucrose, lactose and trioctanoin from the NIST Chemistry WebBook dataset \cite{nist}. We restricted these spectra to the  
wavenumbers between  $1186\;  {\rm cm}^{-1}$ and $1530\; {\rm cm}^{-1}$, and denote by $\bh_{0,1},\dots,\bh_{0,4}\in \reals^d$, $d=87$ the vector representations of these spectra.
We then generates data $\bx_i\in\reals^d$, $i\le n = 250$ by letting  
\begin{align}
\bx_i = \sum_{\ell=1}^4 w_{i,\ell}\bh_{\ell}+\bz_i\, ,
\end{align}
where $\bz_i\sim \normal(0,\sigma^2\id_{d})$ are i.i.d. Gaussian noise vectors. The weights $\bw_{i} =(w_{i,\ell})_{\ell\le 4}$ were generated as follows. The weight vectors $\{\bw_i\}_{1\le i \le 9}$ are generated such that they have $2$ nonzero entries. In other words, $9$ data points
are on one dimensional facets of the polytope generated by $\bh_{0,1},\dots,\bh_{0,4}$. In order to randomly generate these weight vectors, 
for each $1\le i \le 9$, a pair of indices $(\ell_1, \ell_2)$ between $1$ and $4$ is chosen uniformly at random. Then 
$\{\widetilde\bw\}_{1\le i \le 9}$, $\widetilde\bw\in \reals^2$ are generated as 
independent Dirichlet random vectors with parameter $(5,5)$. Then we let $w_{i,\ell_1} = \tilde w_{i,1}$ and $w_{i,\ell_2} = \tilde w_{i,2}$ for
$1\le i \le 9$. The weight vectors $\{\bw_i\}_{10\le i \le 20}$ each have $3$ nonzero entries. Similar to above, for each of these weight 
vectors a $3$-tuple of indices $(\ell_1, \ell_2, \ell_3)$ between $1$ and $4$ is chosen uniformly at random. Then we 
let $w_{i,\ell_1} = \tilde w_{i,1}$, $w_{i,\ell_2} = \tilde w_{i,2}$, $w_{i,\ell_3} = \tilde w_{i,4}$ for $10\le i \le 20$, where 
$\{\widetilde\bw\}_{10\le i \le 20}$, $\widetilde\bw\in \reals^3$ are i.i.d. Dirichlet random vectors with parameter $(5,5,5)$.
The rest of the weight vectors have cardinality equal to $4$. Hence, for $21 \leq i\le 250$, $\bw_i$ are generated
as i.i.d. Dirichlet random vectors with parameter $(5,5,5,5)$.

\section{Proof of Theorem \ref{thm:Robust}}
\label{app:Proof}

In this appendix we prove Theorem \ref{thm:Robust}. We start by recalling some notations already defined in the main text, 
and introducing some new ones. We will then state a stronger form of the theorem (with better dependence on the problem geometry in some regimes).
Finally, we will present the actual proof.

Throughout this appendix, we assume the square loss $D(\bx,\by) = \|\bx-\by\|_2^2$.

\subsection{Notations and definitions}

We use bold capital letters (e.g. $\bA$, $\bB$, $\bC$,\dots) for matrices, bold lower case  for vectors (e.g. $\bx$, $\by$, \dots)
and plain lower case for scalars ($aa$, $b$, $c$ and so on).
In particular, $\be_i \in \reals^d$ denotes the $i$'th vector in the canonical basis, $E^{r,d} = \{\be_1,\be_2,\dots,\be_r\}$ and 
for $r\leq d$, $\bE_{r,d}\in \{0,1\}^{r\times d}$ is the matrix whose $i$'th column is $\be_i$, and whose columns after the $r$-th one are equal to $\bzero$.
For a matrix $\bX$, $\bX_{i,.}$ and $\bX_{.,i}$ are its $i$'th row and column, respectively. 

As in the main text, we denote by $\Delta^{m}$ the $m$-dimensional standard simplex, i.e. $\Delta^m = \{\bx\in \reals_{\geq0}^{m}, \langle\bx,\one\rangle = 1\}$, 
where $\one \in \reals^m$ is the all ones vector. 
For a matrix $\bH \in \reals^{r\times d}$, we use
$\sigma_{\max}(\bH)$, $\sigma_{\min}(\bH)$ to denote its largest and smallest nonzero singular values
and $\kappa(\bH)= \sigma_{\max}(\bH)/\sigma_{\min}(\bH)$ to denote its condition number.
We denote by $\conv(\bH), \aff(\bH)$ the convex hull and 
the affine hull of the rows of $\bH$, respectively. In other words,
\begin{align}
&\conv(\bH) = \{\bx\in \reals^{d}:\bx = \bH^{\sT}\bpi, \bpi \in \Delta^{r} \},\\
&\aff(\bH) = \{\bx\in \reals^{d}:\bx = \bH^{\sT}\balpha, \langle \one, \balpha \rangle = 1 \}.
\end{align}
We denote by $Q_{r,n}$ is the set of $r$ by $n$ row stochastic matrices. Namely,
\begin{align}
Q_{r,n} = \left\{\bPi\in \reals_{\geq 0}^{r\times n}: \langle\bPi_{i,.},\one\rangle = 1\right\}.
\end{align}
with use $Q_r\equiv Q_{r,r}$. Further, $S_r$ is defined as
\begin{align}
S_r = \left\{\bPi\in Q_r: \Pi_{i,j}\in \{0,1\}\right\}.
\end{align}

As a consequence, given $\bX\in \reals^{n\times d}$, $\bH_1,\bH_2\in \reals^{r\times d}$, the loss functions $\cuD(\,\cdot\,,\,\cdot\,)$ and $\cuL(\,\cdot\,,\,\cdot\,)$
take the form
\begin{align}
\cuD(\bH_1,\bX) &= \min_{\bPi\in Q_{r,n}} \|\bH_1 - \bPi \bX\|^2_F,\\
\cuL(\bH_1,\bH_2) &= \min_{\bPi\in S_r} \|\bH_1 - \bPi \bH_2\|_F^2.
\end{align}

We use $\Ball_m(\rho)$ to denote the closed ball with radius $\rho$ in $m$ dimensions, centered at $0$.
 In addition, for $\bH \in \reals^{m\times d}$ we define the $\rho$-neighborhood of $\conv(\bH)$ as
\begin{align}
\Ball_r(\rho;\bH) := \{\bx \in \reals^d: \cuD(\bx,\bH)\leq\rho^2\}.
\end{align}
For a convex set $\mathcal C$ we denote the set of its extremal points by $\ext(\mathcal C)$ and the projection of a point $\bx \in \reals^{d}$ onto $\mathcal C$ by 
$\bPi_\mathcal C(\bx)$. Namely,
\begin{align}
\bPi_{\mathcal C}(\bx) = \arg\min_{\by \in \mathcal C}\|\bx - \by\|_2.
\end{align}
Also, for a matrix $\bX \in \reals^{n\times d}$, and a mapping (not necessarily linear)
$\bP:\reals^d \rightarrow \reals^d$, $\bP(\bX) \in \reals^{n\times d}$
is the matrix whose $i$'th row is $\bP(\bX_{i,.})$.

\subsection{Theorem statement}

The statement below provides  more detailed result with respect to the one in Theorem \ref{thm:Robust}.

\begin{theorem}\label{thm:Robust2}
Assume $\bX = \bW_0\bH_0+ \bZ$ where the factorization $\bX_0=\bW_0\bH_0$ satisfies the uniqueness assumption with parameter
$\alpha>0$, and that $\conv(\bX_0)$ has internal radius $\mu>0$.
Consider the estimator $\hbH$ defined by Eq.~(\ref{eq:HardNoise}), with $D(\bx,\by) = \|\bx-\by\|_2^2$ (square loss) and $\delta = \max_{i\le n} \|\bZ_{i,\cdot}\|_2$.
 If 
\begin{align}
\max_{i\le n} \|\bZ_{i,\cdot}\|_2\le \frac{\alpha\mu}{30 r^{3/2}}\, ,
\end{align}
then, setting $\delta = \max_{i\le n} \|\bZ_{i,\cdot}\|_2$ in the problem \eqref{eq:HardNoise} we get
\begin{align}
\cuL(\bH_0,\hbH)\le \frac{C^2_*\,  r^{5}}{\alpha^2} \max_{i\le n} \|\bZ_{i,\cdot}\|^2_2\, ,
\end{align}
where $C_*$ is a coefficient that depends uniquely on the geometry of $\bH_0$, $\bX_0$, namely $C_* = 120(\sigma_{\rm max}(\bH_0)/\mu)\cdot \max(1, \kappa(\bH_0)/\sqrt{r})$.

 Further, if
\begin{align}
\max_{i\le n} \|\bZ_{i,\cdot}\|_2\le \frac{\alpha\mu}{330\kappa(\bH_0)r^{5/2}},
\end{align}
then, setting $\delta = \max_{i\le n} \|\bZ_{i,\cdot}\|_2$ in the problem \eqref{eq:HardNoise} we get
\begin{align}
\cuL(\bH_0,\hbH)\le \frac{C_{**}^{2}\,r^4}{\alpha^2} \max_{i\le n} \|\bZ_{i,\cdot}\|^2_2\, ,
\end{align}
where $C_{**}  = 120 \max(\kappa(\bH_0), (\sigma_{\max}(\bH_0)/r+\|\bz_0\|_2)/(\mu r^{1/2}))\cdot\max (1, \kappa(\bH_0)/\sqrt{r})$.

\end{theorem}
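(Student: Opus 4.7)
The plan is to exploit the uniqueness assumption, but since the assumption only applies to matrices $\bH$ with $\conv(\bX_0)\subseteq \conv(\bH)$, I first need to "enlarge" the estimator $\hbH$ to such a matrix. My starting point will be the observation that $\bH_0$ is itself feasible for the constrained problem \eqref{eq:HardNoise}: each row $\bx_i=\bw_{0,i}^{\sT}\bH_0+\bz_i$ lies within $\delta$ of $\conv(\bH_0)$ by construction. Optimality of $\hbH$ therefore gives $\cuD(\hbH;\bX)\le \cuD(\bH_0;\bX)$. Combined with the elementary Hausdorff-type bound $|\cuD(\by;\bX)^{1/2}-\cuD(\by;\bX_0)^{1/2}|\le \delta$ (and its Minkowski summation over $r$ rows), this yields
\begin{align}
\cuD(\hbH;\bX_0)^{1/2}\le \cuD(\bH_0;\bX_0)^{1/2}+2\delta\sqrt{r}\, .
\end{align}
Dually, the feasibility constraint on $\hbH$ gives $\cuD(\bx_{0,i};\hbH)^{1/2}\le 2\delta$ for every $i$, i.e.\ $\conv(\bX_0)$ lies in the $2\delta$-neighborhood $\Ball_r(2\delta;\hbH)$.

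Next I will produce an enlarged matrix $\tbH$ with $\conv(\bX_0)\subseteq \conv(\tbH)$ by a radial dilation of $\hbH$ about the center $\bz_0$ of the inscribed $(r{-}1)$-ball of radius $\mu$ guaranteed by the internal radius assumption. Because every point of $\conv(\bX_0)$ is within $2\delta$ of $\conv(\hbH)$ and the inscribed ball has radius $\mu$, a dilation factor of $1+O(\delta/\mu)$ suffices; I will have to be a little careful to project first onto $\aff(\bX_0)$ and to bound the extra row-displacement $\|\tbH-\hbH\|_F$ by roughly $(\sigma_{\max}(\bH_0)/\mu)\,\delta\sqrt{r}$. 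This displacement in turn controls
\begin{align}
\bigl|\cuD(\tbH,\bX_0)^{1/2}-\cuD(\hbH,\bX_0)^{1/2}\bigr|,\quad
\bigl|\cuD(\tbH,\bH_0)^{1/2}-\cuD(\hbH,\bH_0)^{1/2}\bigr|,\quad
\bigl|\cuD(\bH_0,\tbH)^{1/2}-\cuD(\bH_0,\hbH)^{1/2}\bigr|,
\end{align}
all by $O((\sigma_{\max}(\bH_0)/\mu)\,\delta\sqrt{r})$ up to mild combinatorial factors. The smallness assumption on $\delta$ ensures $\tbH$ remains well-defined (non-degenerate).

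Now the heart of the argument: applying the $\alpha$-uniqueness assumption to $\tbH$ gives
\begin{align}
\alpha\bigl(\cuD(\tbH,\bH_0)^{1/2}+\cuD(\bH_0,\tbH)^{1/2}\bigr)\le \cuD(\tbH,\bX_0)^{1/2}-\cuD(\bH_0,\bX_0)^{1/2},
\end{align}
and the right-hand side is $O(\delta\sqrt{r}\cdot \sigma_{\max}(\bH_0)/\mu)$ by the previous two steps. Transferring back via the displacement estimates, this yields $\cuD(\hbH,\bH_0)^{1/2}+\cuD(\bH_0,\hbH)^{1/2}=O(\delta\, r^{3/2}\sigma_{\max}(\bH_0)/(\alpha\mu))$.

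The step I expect to be the main obstacle is the last one: converting a bound on the symmetric hull-to-hull quantity $\cuD(\hbH,\bH_0)+\cuD(\bH_0,\hbH)$ into a vertex-matching bound on $\cuL(\bH_0,\hbH)=\sum_{\ell}\min_{\ell'}\|\bh_{0,\ell}-\hbh_{\ell'}\|_2^2$. The plan is to use the affine independence of $\bH_0$ (quantified by $\sigma_{\min}(\bH_0)$, equivalently by $\kappa(\bH_0)$) together with the $r$-vertex structure of both polytopes. Concretely, because each $\hbh_j$ lies within distance $\eta=\cuD(\hbH,\bH_0)^{1/2}$ of $\conv(\bH_0)$, each $\hbh_j$ admits a convex representation $\hbh_j\approx \bH_0^{\sT}\boldsymbol{\pi}_j$ with $\boldsymbol{\pi}_j\in\Delta^{r}$; conversely each $\bh_{0,\ell}$ is approximately a convex combination of the $\hbh_j$'s. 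Composing these two expressions and using $\sigma_{\min}(\bH_0)>0$ (i.e.\ affine independence) forces the composed doubly-stochastic map to be close to a permutation, with error amplified by $1/\sigma_{\min}(\bH_0)$, and hence by $\kappa(\bH_0)/\sigma_{\max}(\bH_0)$ after normalization. This is where the factors $\kappa(\bH_0)$ and $\sigma_{\max}(\bH_0)/\mu$ enter the constants $C_*$, $C_{**}$. The two conclusions of the theorem correspond to two different ways of bounding this composition: a direct one losing an extra $\sqrt{r}$ factor (giving $r^5$), and a tighter one that explicitly invokes $\kappa(\bH_0)$ in the smallness requirement on $\delta$ (giving $r^4$).
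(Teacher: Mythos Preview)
Your proposal follows the paper's strategy almost exactly at the high level: (i) use feasibility and optimality of $\hbH$ to get $\cuD(\hbH,\bX_0)^{1/2}\le\cuD(\bH_0,\bX_0)^{1/2}+O(\delta\sqrt{r})$ and $\conv(\bX_0)\subseteq\Ball_d(2\delta;\hbH)$; (ii) enlarge $\hbH$ to some $\tbH$ with $\conv(\bX_0)\subseteq\conv(\tbH)$; (iii) apply $\alpha$-uniqueness to $\tbH$ and transfer back to $\hbH$; (iv) convert the hull-to-hull bound into a vertex-matching bound $\cuL$.

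Two implementation choices differ from the paper. For step~(ii) the paper does not use a radial dilation about $\bz_0$; instead it works in simplex coordinates, writing $\bP_0(\hbH)=\bE_{r,d}\bM$ and taking $\tbH=\widetilde{\bQ}\bM$ for an explicit ``pushed-out simplex'' matrix $\widetilde{\bQ}$, which yields $\|\tbH-\bP_0(\hbH)\|_F\le 2r^{3/2}\delta\,\kappa(\bP_0(\hbH))$. Your radial dilation is a valid alternative and arguably more geometric. For step~(iv) the paper proves a direct inequality
\[
\cuL(\bH_0,\bH)^{1/2}\le \sqrt{2}\,\kappa(\bH_0)\,\cuD(\bH_0,\bH)^{1/2}+(1+\sqrt{2})\sqrt{r}\,\cuD(\bH,\bH_0)^{1/2}
\]
via a cone lemma about the tangent cones of the simplex (Lemma~A.1/A.2), rather than your composition-of-stochastic-matrices argument. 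The composition argument does appear in the paper, but for a different purpose (Lemma~A.3, controlling $\kappa(\hbH)$), and it can indeed be adapted to bound $\cuL$; the cone route is just more direct.

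There is one genuine gap you should be aware of. You assert that the enlargement displacement is ``roughly $(\sigma_{\max}(\bH_0)/\mu)\,\delta\sqrt{r}$'', but what the dilation actually produces is a bound in terms of $\sigma_{\max}(\hbH-\one\bz_0^{\sT})$ or $\kappa(\bP_0(\hbH))$ --- quantities involving $\hbH$, not $\bH_0$. The paper makes this explicit (Lemma~A.6), then substitutes $\sigma_{\max}(\hbH)\le \cuD(\hbH,\bH_0)^{1/2}+\sqrt{r}\,\sigma_{\max}(\bH_0)$ and $\sigma_{\min}(\bP_0(\hbH))\ge\sqrt{2}(\mu-2\delta)$ to obtain an inequality with $\cuD\equiv\alpha(\cuD(\hbH,\bH_0)^{1/2}+\cuD(\bH_0,\hbH)^{1/2})$ on \emph{both} sides, and solves for $\cuD$; the smallness hypothesis on $\delta$ is precisely what makes the self-referential coefficient $<1/2$. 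Your sketch skips this bootstrap. You can either carry it out, or alternatively bound $\|\hbh_i-\bz_0\|$ a priori via the optimality inequality $\cuD(\hbH,\bX_0)^{1/2}\le\cuD(\bH_0,\bX_0)^{1/2}+3\delta\sqrt{r}$ together with $\conv(\bX_0)\subseteq\conv(\bH_0)$; this avoids the circularity but costs an extra factor of $\sqrt{r}$ and $\cuD(\bH_0,\bX_0)^{1/2}$.

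Finally, a small correction on where the $r^5$/$r^4$ bifurcation lives: in the paper it is \emph{not} in the vertex-matching step but in how $\sigma_{\max}(\hbH)$ and $\kappa(\bP_0(\hbH))$ are controlled. The first regime uses the crude bound $\sigma_{\max}(\hbH)\le \cuD/\alpha+\sqrt{r}\,\sigma_{\max}(\bH_0)$ (Lemma~A.3, first part), incurring the extra $\sqrt{r}$; the second regime first uses the first-regime conclusion to certify $\cuD_1+\cuD_2\le\sigma_{\min}(\bH_0)/(6\sqrt{r})$, and then invokes the sharper $\sigma_{\max}(\hbH)\le 2\sigma_{\max}(\bH_0)$, $\kappa(\bP_0(\hbH))\le(7/2)\kappa(\bH_0)$ (Lemma~A.3, second part). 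The vertex-matching inequality itself is applied identically in both cases.
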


\subsection{Proof}

\subsubsection{Lemmas}
 \begin{lemma}
\label{lemma:cone}
Let $\mathcal R$ be a convex set and $\mathcal C$ be a convex cone. Define
\begin{align}
\gamma_\mathcal C = \max_{\|\bu\|_2=1}\min_{\bv\in \mathcal C, \|\bv\|_2=1}\langle \bu,\bv\rangle.
\end{align}
We have
\begin{align}
\min_{\bx\in \mathcal R}\|\bx\|_2 + (1+\gamma_\mathcal C)\max_{x\in \ext(\mathcal R)}\|\bx-\bPi_\mathcal C(\bx)\|_2 \geq \gamma_\mathcal C \min_{\bx \in \ext(\mathcal R)}\|\bx\|_2.
\end{align}
\end{lemma}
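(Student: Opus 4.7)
The plan is to reduce the claim to an elementary application of Cauchy-Schwarz along one carefully chosen direction $\bu^*$. First, note that the inequality is trivial when $\gamma_{\mathcal{C}} \le 0$: by Cauchy-Schwarz, $\gamma_{\mathcal{C}} \ge -1$, so $1+\gamma_{\mathcal{C}} \ge 0$ and the left-hand side is non-negative while the right-hand side is non-positive. So I may assume $\gamma_{\mathcal{C}} > 0$ and pick a unit vector $\bu^*$ attaining the outer max in the definition of $\gamma_{\mathcal{C}}$ (a compactness argument on the unit sphere handles this; if $\mathcal{C}$ is not closed, pass to a near-maximizing sequence and take a limit at the end). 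By conic homogeneity of $\mathcal{C}$, this $\bu^*$ satisfies
\[
\langle \bu^*, \bv\rangle \ge \gamma_{\mathcal{C}} \|\bv\|_2 \qquad \text{for every } \bv \in \mathcal{C}.
\]

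Next, I would extract the inequality at every extreme point. Fix $\bx_e \in \ext(\mathcal{R})$, set $\bv_e = \bPi_{\mathcal{C}}(\bx_e) \in \mathcal{C}$, and write $\eta_e = \|\bx_e - \bv_e\|_2$ so $\eta_e \le \eta := \max_{\bx \in \ext(\mathcal{R})} \|\bx - \bPi_{\mathcal{C}}(\bx)\|_2$. Decomposing $\langle \bu^*, \bx_e\rangle = \langle \bu^*, \bv_e\rangle + \langle \bu^*, \bx_e - \bv_e\rangle$, applying the displayed inequality to the first term, the triangle inequality $\|\bv_e\|_2 \ge \|\bx_e\|_2 - \eta_e$ (trivially true when the right-hand side is negative), and Cauchy-Schwarz to the second term, I would obtain
\[
\langle \bu^*, \bx_e\rangle \;\ge\; \gamma_{\mathcal{C}} \|\bx_e\|_2 - (1+\gamma_{\mathcal{C}}) \eta_e \;\ge\; \gamma_{\mathcal{C}} \min_{\bx \in \ext(\mathcal{R})} \|\bx\|_2 - (1+\gamma_{\mathcal{C}}) \eta.
\]

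Finally, I would extend from extreme points to all of $\mathcal{R}$ by convexity: assuming $\mathcal{R}$ is compact so that Krein-Milman gives $\mathcal{R} = \overline{\conv}(\ext(\mathcal{R}))$, the lower bound on $\langle \bu^*, \cdot\rangle$ propagates to every $\bx \in \mathcal{R}$. Applying it at the norm-minimizer $\bx^* \in \arg\min_{\bx \in \mathcal{R}} \|\bx\|_2$ and using Cauchy-Schwarz $\|\bx^*\|_2 \ge \langle \bu^*, \bx^*\rangle$ (since $\|\bu^*\|_2=1$), I get
\[
\min_{\bx \in \mathcal{R}} \|\bx\|_2 \;\ge\; \gamma_{\mathcal{C}} \min_{\bx \in \ext(\mathcal{R})} \|\bx\|_2 - (1+\gamma_{\mathcal{C}}) \eta,
\]
which rearranges into the desired inequality.

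The main potential snag is the Krein-Milman step: the lemma states only convexity of $\mathcal{R}$, but $\ext(\mathcal{R})$ generates $\mathcal{R}$ only when $\mathcal{R}$ is compact (and appears on the right-hand side of the lemma anyway), so I would use compactness implicitly — which matches the downstream usage where $\mathcal{R}$ is a polytope. The remaining content is routine algebra once the direction $\bu^*$ is chosen and the "extreme point plus error" decomposition is made.
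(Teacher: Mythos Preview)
Your proof is correct and follows essentially the same route as the paper: fix a direction $\bu^*$ realizing $\gamma_{\mathcal C}$, decompose each extreme point as $\bPi_{\mathcal C}(\bx)+(\bx-\bPi_{\mathcal C}(\bx))$, bound $\langle\bu^*,\cdot\rangle$ via the cone inequality and Cauchy--Schwarz, and pass from extreme points to $\mathcal R$ by convexity. The paper packages the same steps through a min--max swap $\min_{\bx}\max_{\bu}\langle\bu,\bx\rangle\ge\max_{\bu}\min_{\bx\in\ext(\mathcal R)}\langle\bu,\bx\rangle$ rather than invoking Krein--Milman explicitly, but the content is identical (and carries the same implicit compactness assumption you flagged).
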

An illustration of this lemma in the case of $\mathcal R \subset \mathcal C$
is given  in Figure \ref{fig:lemma_cone}. Note that, $\gamma_{\mathcal C}$
measures the pointedness of the cone $\mathcal C$. Geometrically (for ${\mathcal R}\subseteq {\mathcal C}$) the lemma states
that the cosine of the 
angle between $\arg\min_{\bx\in \mathcal R}\|\bx\|_2$ and $\arg\min_{\bx\in \mathcal \ext(R)}\|\bx\|_2$
is smaller than $\gamma_\mathcal C$.

\begin{figure}[!h]
    \centering
    \includegraphics[width=0.7\textwidth]{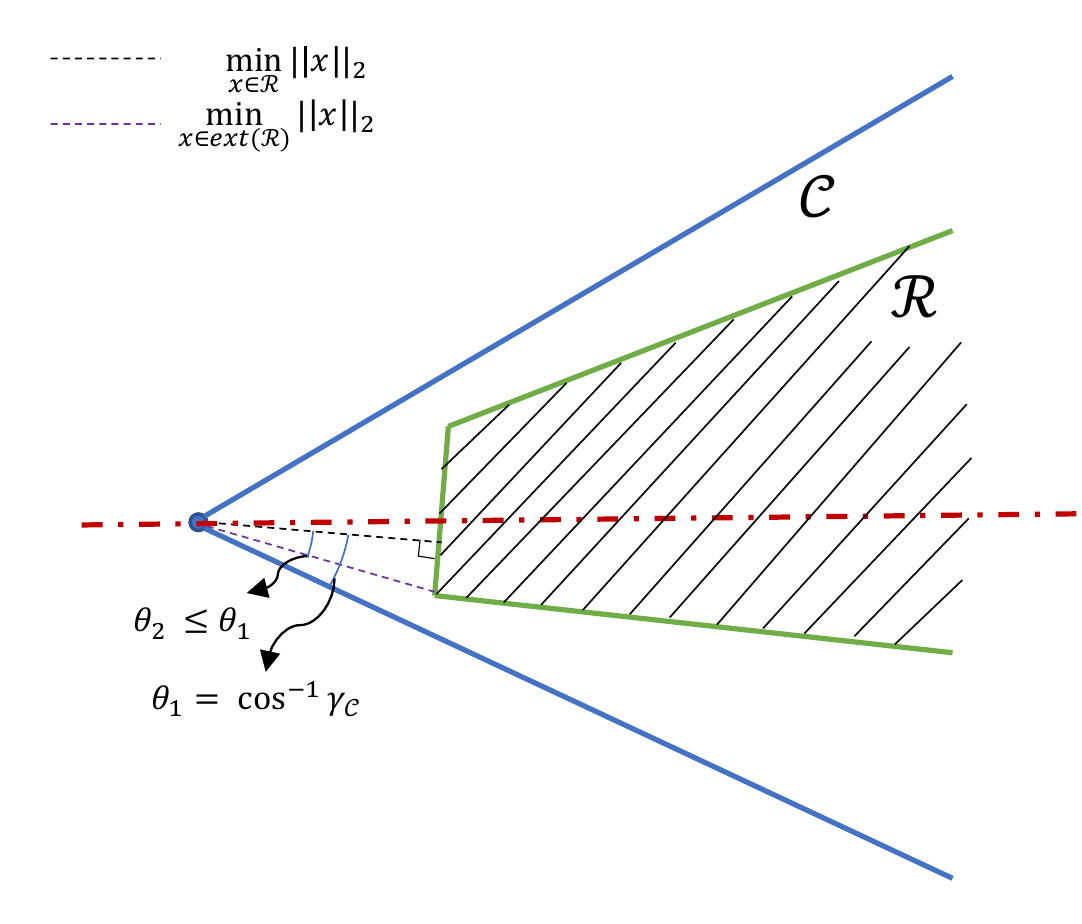}
    \hspace{-15pt}
     \vspace{-15pt}
    \caption{Picture of Lemma \ref{lemma:cone}, in the case, $\mathcal R \subset \mathcal C$.}\label{fig:lemma_cone}
\end{figure}

\begin{proof}
We write
\begin{align}
\min_{\bx \in \mathcal R} \|\bx\|_2 = \min_{\bx \in \mathcal R}\max_{\|\bu\|_2 = 1}\left\langle\bu,\bx\right\rangle \geq \max_{\|\bu\|_2=1}\min_{\bx\in \mathcal R}\langle \bu,\bx\rangle = \max_{\|\bu\|_2 = 1}\min_{\bx \in \ext(\mathcal R)}\langle \bu, \bx \rangle.
\end{align}
Replacing 
\begin{align}
\bx = \bPi_{\mathcal C}(\bx) + \left(\bx - \bPi_{\mathcal C}(\bx)\right),
\end{align}
we get
\begin{align}
\min_{\bx \in \mathcal R}\|\bx\|_2 &\geq \max_{\|\bu\|_2 = 1}\min_{\bx\in \ext (\mathcal R)}\left\langle\bu,\bPi_\mathcal C(\bx)+(\bx-\bPi_{\mathcal C}(\bx))\right\rangle \\
&\geq\max_{\|\bu\|_2 = 1}\min_{\bx \in \ext(\mathcal R)}\left\langle\bu,\bPi_{\mathcal C}(\bx)\right\rangle - \max_{\bx \in \ext(\mathcal R)}\|\bx - \bPi_{\mathcal C}(\bx)\|_2.
\end{align}
Hence, using the definition of $\gamma_\mathcal C$, we have 
\begin{align}
\min_{\bx \in \mathcal R} \|\bx\|_2 \geq \gamma_\mathcal C \min_{\bx\in \ext(\mathcal R)}\|\bPi_\mathcal C(\bx)\|_2-\max_{\bx \in \ext(\mathcal R)}\|\bx - \bPi_\mathcal C(\bx)\|_2.                                                                                        
\end{align}
Note that 
\begin{align}
\|\bPi_\mathcal C(\bx)\|_2 \geq \|\bx\|_2 - \|\bx-\bPi_\mathcal C(\bx)\|_2.
\end{align}
Therefore,
\begin{align}
\min_{\bx \in \mathcal R} \|\bx\|_2 \geq \gamma_{\mathcal C}\min_{\bx \in \ext(\mathcal R)}\|\bx\|_2 - (1+\gamma_\mathcal C)\max_{\bx \in \ext(\mathcal R)}\|\bx-\bPi_\mathcal C(\bx)\|_2,
\end{align}
and this completes the proof.
\end{proof}

The next lemma is a consequence of Lemma \ref{lemma:cone}.
\begin{lemma}
\label{lemma:dandc}
Let $\bH, \bH_0\in \reals^{r\times d}$, $r\leq d$,
be matrices with linearly independent rows. We have
\begin{align}
\cuL(\bH_0,\bH)^{1/2}\leq \sqrt{2}\kappa(\bH_0)\cuD(\bH_0,\bH)^{1/2} + (1+\sqrt{2})\sqrt{r}\cuD(\bH,\bH_0)^{1/2}\, . 
\end{align}
\end{lemma}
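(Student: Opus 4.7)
The plan is to apply Lemma \ref{lemma:cone} once for each true archetype $\bh_{0,\ell}$, with a cone adapted to the local geometry of $\conv(\bH_0)$ at that vertex, and then combine the $r$ resulting estimates via Minkowski's inequality.

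For fixed $\ell \in \{1,\dots,r\}$, I set $\mathcal R_\ell = \bh_{0,\ell} - \conv(\bH)$, a convex set whose extreme points are $\{\bh_{0,\ell} - \bh_{\ell'}:\ell'\in [r]\}$. This choice is dictated by the identities $\min_{\bx \in \mathcal R_\ell}\|\bx\|_2 = d(\bh_{0,\ell}, \conv(\bH))$ and $\min_{\bx \in \ext(\mathcal R_\ell)}\|\bx\|_2 = \min_{\ell'}\|\bh_{0,\ell} - \bh_{\ell'}\|_2$, which are precisely the summands of $\cuD(\bH_0,\bH)^{1/2}$ and $\cuL(\bH_0,\bH)^{1/2}$ entering the claim. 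For the cone I take $\mathcal C_\ell = \mathrm{cone}\{\bh_{0,\ell} - \bh_{0,\ell'}:\ell' \neq \ell\}$, the inward tangent cone at the vertex $\bh_{0,\ell}$ of $\conv(\bH_0)$.

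Two ingredients drive the analysis. First, every extreme point of $\mathcal R_\ell$ is close to $\mathcal C_\ell$: letting $\balpha_{\ell'} \in \Delta^r$ achieve $\|\bh_{\ell'} - \bH_0^\sT\balpha_{\ell'}\|_2 = d(\bh_{\ell'}, \conv(\bH_0))$, the witness $\bh_{0,\ell} - \bH_0^\sT\balpha_{\ell'} = \sum_{\ell'' \neq \ell}\alpha_{\ell',\ell''}(\bh_{0,\ell} - \bh_{0,\ell''})$ lies in $\mathcal C_\ell$, giving $\|\bh_{0,\ell} - \bh_{\ell'} - \bPi_{\mathcal C_\ell}(\bh_{0,\ell} - \bh_{\ell'})\|_2 \leq d(\bh_{\ell'}, \conv(\bH_0))$ uniformly in $\ell'$. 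Second, I lower bound $\gamma_{\mathcal C_\ell}$ via the identification $\mathcal C_\ell = \bH_0^\sT\,\mathcal C_\ell^{\mathrm{std}}$, where $\mathcal C_\ell^{\mathrm{std}}$ is the tangent cone at $\be_\ell$ of the standard simplex. For the latter one computes $\gamma_{\mathcal C_\ell^{\mathrm{std}}} \geq 1/\sqrt{2}$ using the test vector $\bu = -\be_\ell$, and the bound transfers through $\bH_0^\sT$ by choosing $\bu = \bH_0^+(-\be_\ell)/\|\bH_0^+\be_\ell\|_2$ (where $\bH_0^+$ is the Moore--Penrose pseudoinverse of $\bH_0$), yielding $\gamma_{\mathcal C_\ell} \geq 1/(\sqrt{2}\,\kappa(\bH_0))$; the factor $\kappa(\bH_0)$ accounts for the distortion of angles by $\bH_0^\sT$.

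Plugging these ingredients into Lemma \ref{lemma:cone}, rearranging, and bounding $\max_{\ell'}d(\bh_{\ell'},\conv(\bH_0)) \leq \cuD(\bH,\bH_0)^{1/2}$ gives, for each $\ell$,
\begin{align*}
\min_{\ell'}\|\bh_{0,\ell} - \bh_{\ell'}\|_2 \leq \sqrt{2}\,\kappa(\bH_0)\, d(\bh_{0,\ell}, \conv(\bH)) + \bigl(1+\sqrt{2}\,\kappa(\bH_0)\bigr)\cuD(\bH,\bH_0)^{1/2}.
\end{align*}
Squaring, summing over $\ell$, and invoking Minkowski's inequality (with the factor $\sqrt r$ arising from summing a quantity constant in $\ell$) then produces a bound of the claimed form. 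The main obstacle I anticipate is the second ingredient---controlling $\gamma_{\mathcal C_\ell}$ uniformly over the geometry of $\bH_0$---since a direct pointedness analysis of the tangent cone is intractable for arbitrary $\bH_0$; the pseudoinverse change of coordinates, which reduces the problem to the standard-simplex cone while absorbing the distortion into $\kappa(\bH_0)$, is the key technical move.
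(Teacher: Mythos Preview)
Your strategy---apply Lemma~\ref{lemma:cone} once per vertex $\bh_{0,\ell}$ with the tangent cone of $\conv(\bH_0)$ there, reduce the cone-pointedness parameter to the standard-simplex case, and combine via Minkowski---is the paper's. The only difference is the order of operations: the paper first changes coordinates globally via an invertible $\bM$ with $\bH_0=\bE_{r,d}\bM$ and $\bH=\bY\bM$, applies Lemma~\ref{lemma:cone} in the $\bE_{r,d}$-frame (where $\gamma\ge 1/\sqrt 2$ directly), and only afterwards converts the resulting Frobenius-norm inequality back via $\sigma_{\min}(\bM),\sigma_{\max}(\bM)$. You instead stay in the original frame and fold the coordinate change into the single bound $\gamma_{\mathcal C_\ell}\ge 1/(\sqrt 2\,\kappa(\bH_0))$ obtained through the pseudoinverse.

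This ordering difference matters for the constants, and your last step has a gap. With your $\gamma_{\mathcal C_\ell}$ bound, the per-$\ell$ inequality you display is correct, but its second coefficient is $1+1/\gamma_{\mathcal C_\ell}\le 1+\sqrt 2\,\kappa(\bH_0)$. After summing you obtain
\[
\cuL(\bH_0,\bH)^{1/2}\le \sqrt 2\,\kappa(\bH_0)\,\cuD(\bH_0,\bH)^{1/2}+\bigl(1+\sqrt 2\,\kappa(\bH_0)\bigr)\sqrt r\,\cuD(\bH,\bH_0)^{1/2},
\]
which is \emph{not} the stated lemma: the second term carries an extra $\kappa(\bH_0)$, so ``produces a bound of the claimed form'' sweeps under the rug a factor that can be arbitrarily large. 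The paper's ordering is set up precisely to avoid this: because Lemma~\ref{lemma:cone} is invoked with the $\kappa$-free value $\gamma\ge 1/\sqrt 2$, the ratio $(1+\gamma)/\gamma\le 1+\sqrt 2$ is fixed \emph{before} any conditioning loss enters, and the $\kappa(\bH_0)$ penalty from the $\bM^{-1}$ transfer is meant to land only on the $\cuL$-versus-$\cuD(\bH_0,\bH)$ comparison. To recover the stated constant with your direct-coordinate approach you would need to control $\max_{\bx\in\ext(\mathcal R_\ell)}\|\bx-\bPi_{\mathcal C_\ell}(\bx)\|_2$ without routing the final estimate through $1/\gamma_{\mathcal C_\ell}$, and it is not clear how to do that short of reproducing the paper's global change of variables.
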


\begin{proof}
Consider the cone $\mathcal C_1\subset \reals^d$, generated by vectors $\be_2-\be_1, \dots, \be_r-\be_1\in \reals^d$, i.e,
\begin{align}
\mathcal C_1 = \left\{\bv \in \reals^d; \bv = \sum_{i=2}^r{v_i(\be_i - \be_1)}, v_i \geq 0\right\}.
\end{align} 
For $\bv \in \mathcal C_1, \|\bv\|_2 = 1$ we have
\begin{align}
\bv = \left(-\langle\one,\bx\rangle,\bx,0,0, \dots, 0\right),
\end{align}
where $\bx \in \reals_{\geq 0}^{r-1}$ and
\begin{align}
\|\bx\|_2^2 + \left\langle\one,\bx\right\rangle^2 = 1.
\end{align}
Since, $\langle\one,\bx\rangle = \|\bx\|_1 \geq \|\bx\|_2$, we get $\langle \one,\bx\rangle \geq 1/\sqrt{2} $. Thus, for $\bu = -\be_1$, we have $\langle\bu,\bv\rangle \geq 1/\sqrt{2}$. 
Therefore, for $\gamma_{{\mathcal C}_1}$ defined as in Lemma \ref{lemma:cone}, we have $\gamma_{\mathcal C_1} \geq 1/\sqrt{2}$. In addition, by symmetry,
for $i \in \{1,2,\dots,r\}$, for the cone $\mathcal C_i\subset \reals^d$, generated by vectors $\be_1-\be_i, \be_2-\be_i, \dots, \be_r-\be_i\in \reals^d$ we have $\gamma_{\mathcal C_i} = \gamma \geq 1/\sqrt{2}$.
Hence, using Lemma \ref{lemma:cone} for $\bH \in \reals^{r\times d}$, $\mathcal R = \conv(\bH)-\be_j$ (the set obtained by translating $\conv(\bH)$ by $-\be_j$), $\mathcal C = \mathcal C_j$ we get for $j = 1,2,\dots, r$
\begin{align}
\min_{\bq\in \Delta^r}\|\be_j - {\bH}^\sT\bq\|_2 &\geq \gamma\min_{\bq\in E^{r,r}}\|\be_j - {\bH}^\sT\bq\|_2 - (1+\gamma)\max_{i \in [r]}\min_{\bq\in \reals_{\geq 0}^r}\|\bH_{i,.}^\sT - \be_j - \bE_{r,d}^\sT\bq + \be_j \langle \one,\bq\rangle\|_2\\
&\geq \gamma\min_{\bq\in E^{r,r}}\|\be_j - {\bH}^\sT\bq\|_2 - (1+\gamma)\max_{i\in [r]}\min_{\bq\in \Delta^r}\|\bH_{i,.}^\sT-\bE_{r,d}^\sT\bq\|_2.
\end{align}
Hence, 
\begin{align}
\sum_{j=1}^r \min_{\bq\in \Delta^r}\|\be_j - {\bH}^\sT\bq\|_2^2 &\geq \gamma^2 \sum_{j=1}^r \min_{\bq\in E^{r,r}}\|\be_j - {\bH}^\sT\bq\|_2^2 + (1+\gamma)^2r\max_{i \in [r]}\min_{\bq\in \Delta^r}\|\bH_{i,.}^\sT-\bE_{r,d}^\sT\bq\|_2^2 \nonumber \\ 
&- 2\gamma(1+\gamma) \left(\max_{i\in [r]}\min_{\bq\in \Delta^r}\|\bH_{i,.}^\sT-\bE_{r,d}^\sT\bq\|_2\right)\sum_{j=1}^r \min_{\bq\in E^{r,r}}\|\be_j - {\bH}^\sT\bq\|_2\\
&\geq \Bigg[\gamma\bigg(\sum_{j=1}^r \min_{\bq\in E^{r,r}}\|\be_j - {\bH}^\sT\bq\|_2^2 \bigg)^{1/2} \nonumber\\
&- (1+\gamma)\sqrt{r}\bigg(\max_{i \in [r]}\min_{\bq\in \Delta^r}\|\bH_{i,.}^\sT-\bE_{r,d}^\sT\bq\|_2\bigg)\Bigg]^2.
\end{align}
Therefore,
\begin{align}
\min_{\bQ\in Q_r}\|\bE_{r,d} - \bQ\bH\|_F \geq \gamma \min_{\bQ\in S_r}\|\bE_{r,d} - \bQ\bH\|_F - (1+\gamma)\sqrt{r}\max_{i \in [r]}\min_{\bq\in \Delta^r}\|\bH_{i,.}^\sT-\bE_{r,d}^\sT\bq\|_2.
\end{align}
Now consider $\bH_0 \in \reals^{r\times d}$ where $\bH_0 = \bE_{r,d}\bM$, $\bH = \bY\bM$, where $\bM\in \reals^{d\times d}$ is invertible. We have
\begin{align}
\cuD(\bH_0,\bH)^{1/2} &= \min_{\bQ\in Q_r}\|\bH_0 - \bQ\bH\|_F = \min_{\bQ\in Q_r}\|(\bE_{r,d} - \bQ\bY)\bM\|_F\\
& \geq \sigma_{\min}(\bM)\min_{\bQ\in Q_r}\|\bE_{r,d} - \bQ\bY\|_F \\
& \geq \gamma\sigma_{\min}(\bM)\min_{\bQ\in S_r}\|\bE_{r,d} - \bQ\bY\|_F - \sigma_{\min}(\bM)\sqrt{r}(1+\gamma)\max_{i \in [r]}\min_{\bq\in \Delta^r}\|{\bY_{i,.}^\sT}-\bE_{r,d}^\sT\bq\|_2\\
&= \gamma\sigma_{\min}(\bM)\min_{\bQ\in S_r}\|(\bH_0-\bQ\bH)\bM^{-1}\|_F \nonumber\\
&- \sigma_{\min}(\bM)\sqrt{r}(1+\gamma)\max_{i \in [r]}\min_{\bq\in \Delta^r}\|{(\bM^{-1}})^\sT(\bH_{i,.}^\sT-\bH_0^\sT\bq)\|_2.
\end{align}
Thus, using the fact that $\sigma_{\max}(\bM)/\sigma_{\min}(\bM) = \kappa(\bM) = \kappa(\bH_0)$,
\begin{align}
\cuD(\bH_0,\bH)^{1/2} &\geq \frac{\gamma}{\kappa(\bH_0)}\cuL(\bH_0,\bH)^{1/2} - \frac{(1+\gamma)\sqrt{r}}{\kappa(\bH_0)}\max_{i \in [r]}\min_{\bq\in \Delta^r}\|\bH_{i,.}^\sT-\bH_0^\sT\bq\|_2\\
&\geq \frac{\gamma}{\kappa(\bH_0)}\cuL(\bH_0,\bH)^{1/2} - \frac{(1+\gamma)\sqrt{r}}{\kappa(\bH_0)}\cuD(\bH,\bH_0)^{1/2}\, .
\end{align}
Therefore,
\begin{align}
\cuL(\bH_0,\bH)^{1/2} \leq \frac{\kappa(\bH_0)}{\gamma}\cuD(\bH_0,\bH)^{1/2} + \frac{(1+\gamma)\sqrt{r}}{\gamma}\cuD(\bH,\bH_0)^{1/2}\, .
\end{align}
Finally, note that the function $f(x) = (1+x)/x$ is monotone decreasing over $\reals_{> 0}$.
Hence, for $\gamma \geq 1/\sqrt{2}$, $(1+\gamma)/\gamma \leq 1+\sqrt{2}$.
Therefore, we get
\begin{align}
\cuL(\bH_0,\bH)^{1/2}\leq \sqrt{2}\kappa(\bH_0)\cuD(\bH_0,\bH)^{1/2} + (1+\sqrt{2})\sqrt{r}\cuD(\bH,\bH_0)^{1/2}
\end{align}
and this completes the proof.
\end{proof}

We continue with the following lemmas on the condition number of the matrix $\bH$.
\begin{lemma}
\label{lemma:condition}
Let $\bH_0,\bH\in \reals^{r\times d}$, $r\leq d$,with $\bH$ having full row rank. 
We have
\begin{align}
\sigma_{\max}(\bH) \leq \cuD(\bH,\bH_0)^{1/2}+\sqrt{r}\sigma_{\max}(\bH_0),\label{eq:LemmaCondition1}.
\end{align}
In addition, if
\begin{align}
\cuD(\bH_0,\bH)^{1/2} \leq \frac{\sigma_{\min}(\bH_0)}{2},
\end{align}
then
\begin{align}
&\kappa(\bH) \leq \frac{2r\sigma_{\max}(\bH_0)+2\cuD(\bH,\bH_0)^{1/2}\sqrt{r}}{\sigma_{\min}(\bH_0)}. \label{eq:LemmaCondition2}
\end{align}
Further, if 
\begin{align}
\cuD(\bH,\bH_0)^{1/2}+\cuD(\bH_0,\bH)^{1/2} \leq \frac{\sigma_{\min}(\bH_0)}{6\sqrt{r}},
\end{align}
then 
\begin{align}
&\sigma_{\max}(\bH)\leq 2\sigma_{\max}(\bH_0),\\
&\kappa(\bH) \leq (7/2)\kappa(\bH_0).
\end{align}
\end{lemma}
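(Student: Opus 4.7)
My plan proceeds in three parts corresponding to the three claims, all built on the same setup. Let $\bR_1, \bR_2 \in Q_{r,r}$ be the row-stochastic matrices achieving the two distances, so that $\|\bH - \bR_1\bH_0\|_F = \cuD(\bH,\bH_0)^{1/2}$ and $\|\bH_0 - \bR_2\bH\|_F = \cuD(\bH_0,\bH)^{1/2}$. A basic fact I will use throughout is that any $\bR \in Q_{r,r}$ satisfies $\sigma_{\max}(\bR) \leq \sqrt{r}$, since by Jensen's inequality applied row-wise $\|\bR\bx\|_2^2 \leq \sum_{\ell,k} R_{\ell k}\, x_k^2 \leq r\|\bx\|_2^2$ (the column sums of $\bR$ are at most $r$).

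For \eqref{eq:LemmaCondition1}, the triangle inequality on the spectral norm applied to $\bH = \bR_1\bH_0 + (\bH - \bR_1\bH_0)$ gives immediately
\[
\sigma_{\max}(\bH) \leq \sigma_{\max}(\bR_1)\sigma_{\max}(\bH_0) + \|\bH - \bR_1\bH_0\|_F \leq \sqrt{r}\,\sigma_{\max}(\bH_0) + \cuD(\bH,\bH_0)^{1/2}.
\]
For \eqref{eq:LemmaCondition2}, I will lower-bound $\sigma_{\min}(\bH)$ by pulling back through $\bR_2$. Weyl's perturbation inequality applied to $\bR_2\bH = \bH_0 - (\bH_0 - \bR_2\bH)$ yields $\sigma_{\min}(\bR_2\bH) \geq \sigma_{\min}(\bH_0) - \cuD(\bH_0,\bH)^{1/2} \geq \sigma_{\min}(\bH_0)/2$ under the hypothesis, and the product inequality $\sigma_{\min}(\bR_2\bH) \leq \sigma_{\max}(\bR_2)\sigma_{\min}(\bH) \leq \sqrt{r}\,\sigma_{\min}(\bH)$ (valid because $\bR_2$ is square) then gives $\sigma_{\min}(\bH) \geq \sigma_{\min}(\bH_0)/(2\sqrt{r})$. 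Dividing the Part 1 estimate by this lower bound delivers the stated $\kappa(\bH)$ inequality.

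For the third part, since its hypothesis is stronger than that of Part 2, $\bH$ has linearly independent rows and Lemma \ref{lemma:dandc} applies, giving $\cuL(\bH_0,\bH)^{1/2} \leq \sqrt{2}\kappa(\bH_0)\cuD(\bH_0,\bH)^{1/2} + (1+\sqrt{2})\sqrt{r}\cuD(\bH,\bH_0)^{1/2}$. Substituting the individual upper bounds $\cuD(\bH_0,\bH)^{1/2}, \cuD(\bH,\bH_0)^{1/2} \leq \sigma_{\min}(\bH_0)/(6\sqrt{r})$ coming from the hypothesis converts this into $\cuL(\bH_0,\bH)^{1/2} \leq \sqrt{2}\sigma_{\max}(\bH_0)/(6\sqrt{r}) + (1+\sqrt{2})\sigma_{\min}(\bH_0)/6$. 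Because the minimum defining $\cuL$ is attained at a permutation $\bP^* \in S_r$, and permutations preserve singular values, Weyl's inequality produces $|\sigma_i(\bH) - \sigma_i(\bH_0)| \leq \|\bH_0 - \bP^*\bH\|_{op} \leq \cuL(\bH_0,\bH)^{1/2}$ for every $i$. The bound $\sigma_{\max}(\bH) \leq 2\sigma_{\max}(\bH_0)$ is then immediate using $\sigma_{\min}(\bH_0) \leq \sigma_{\max}(\bH_0)$, and the bound on $\kappa(\bH)$ follows by combining $\sigma_{\max}(\bH) \leq \sigma_{\max}(\bH_0) + \cuL(\bH_0,\bH)^{1/2}$ with $\sigma_{\min}(\bH) \geq \sigma_{\min}(\bH_0) - \cuL(\bH_0,\bH)^{1/2}$ and simplifying.

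The main technical obstacle is the $\sqrt{r}$ loss in Part 2: it is essentially unavoidable, since $\sigma_{\max}(\bR_2)$ can reach $\sqrt{r}$ when $\bR_2$ is far from a permutation. Part 3 dodges this loss by matching $\bH$ to $\bH_0$ through a true permutation $\bP^*$ rather than an arbitrary row-stochastic $\bR_2$; the delicate point is that the numerical constants $1/(6\sqrt{r})$, $2$, and $7/2$ are calibrated so that the $\kappa(\bH_0)$-weighted summand $\sqrt{2}\kappa(\bH_0)\cuD(\bH_0,\bH)^{1/2}$ coming out of Lemma \ref{lemma:dandc} is just small enough — after using $\kappa(\bH_0) = \sigma_{\max}(\bH_0)/\sigma_{\min}(\bH_0)$ to cancel the $1/\sqrt{r}$ in the hypothesis — to keep $\sigma_{\min}(\bH)$ from dropping too far below $\sigma_{\min}(\bH_0)$.
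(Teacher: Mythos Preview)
Your arguments for Parts~1 and~2 are correct and match the paper's proof essentially line for line: write $\bH=\bR\bH_0+\bA_1$ and $\bH_0=\bP\bH+\bA_2$ with $\bP,\bR$ row-stochastic, use $\sigma_{\max}(\bP),\sigma_{\max}(\bR)\le\sqrt{r}$, and combine the resulting bounds on $\sigma_{\max}(\bH)$ and $\sigma_{\min}(\bH)$.

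Part~3, however, has a genuine gap. First a minor issue: the set $S_r$ consists of $\{0,1\}$-valued \emph{row}-stochastic matrices, not permutation matrices, so the minimizer $\bP^*$ of $\cuL(\bH_0,\bH)$ may select the same row of $\bH$ twice; your assertion that ``the minimum is attained at a permutation'' needs justification (and indeed follows only once one already knows the rows of $\bH$ are close, in a one-to-one fashion, to those of $\bH_0$). More seriously, even granting that $\bP^*$ is a permutation, the numerics fail when $\kappa(\bH_0)$ is large. Your bound reads
\[
\cuL(\bH_0,\bH)^{1/2}\;\le\;\frac{\sqrt{2}\,\sigma_{\max}(\bH_0)}{6\sqrt{r}}+\frac{(1+\sqrt{2})\,\sigma_{\min}(\bH_0)}{6}\,,
\]
and the first term exceeds $\sigma_{\min}(\bH_0)$ whenever $\kappa(\bH_0)>6\sqrt{r}/\sqrt{2}\approx 4.24\sqrt{r}$. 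In that regime your lower bound $\sigma_{\min}(\bH)\ge\sigma_{\min}(\bH_0)-\cuL(\bH_0,\bH)^{1/2}$ is vacuous, and the conclusion $\kappa(\bH)\le(7/2)\kappa(\bH_0)$ does not follow. The lemma, however, places no restriction on $\kappa(\bH_0)$ relative to $r$.

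The paper avoids this by never passing through $\cuL$. Instead it composes the two relations to get $(\Id-\bP\bR)\bH_0=\bP\bA_1+\bA_2$, so $\bP\bR=\Id-(\bP\bA_1+\bA_2)\bH_0^\dagger$. An entrywise analysis of $\bP\bR$ then shows (after a row permutation) that $\bP=\Id+\Delta$ with $\|\Delta\|_F\le 2\sqrt{r}(\cuD_1+\cuD_2)/(\sigma_{\min}(\bH_0)-(\cuD_1+\cuD_2))$, hence $\sigma_{\min}(\bP),\sigma_{\max}(\bP)$ are both within $O(\sqrt{r}(\cuD_1+\cuD_2)/\sigma_{\min}(\bH_0))$ of $1$. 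This replaces the crude bound $\sigma_{\max}(\bP)\le\sqrt{r}$ used in Part~2 and yields $\sigma_{\min}(\bH)$ and $\sigma_{\max}(\bH)$ bounds that are uniform in $\kappa(\bH_0)$. The key point is that the paper controls the \emph{stochastic map} $\bP$ directly rather than the \emph{loss} $\cuL$; the factor $\kappa(\bH_0)$ in Lemma~\ref{lemma:dandc} is precisely what prevents your detour from closing.
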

\begin{proof}
For the sake of simplicity, we will write $\cuD_1 = \cuD(\bH,\bH_0)^{1/2}$, $\cuD_2 = \cuD(\bH_0,\bH)^{1/2}$
Note that using the assumptions of Lemma \ref{lemma:condition} we have
\begin{align}
\label{eq:w,w0}
\begin{split}
&\bH_0 = \bP\bH + \bA_2; \quad \|\bA_2\|_F= \cuD_2,\\
&\bH = \bR\bH_0 + \bA_1; \quad \|\bA_1\|_F= \cuD_1,
\end{split}
\end{align}
where $\bP, \bR \in \reals_{\geq 0}^{r\times r}$ are row-stochastic matrices and  $\bA_1, \bA_2 \in \reals^{r\times d}$. Also, $\sigma_{\max}(\bA_1)\leq \|\bA_1\|_F= \cuD_1$,
$\sigma_{\max}(\bA_2)\leq \|\bA_2\|_F= \cuD_2$. Therefore,
\begin{align}
\sigma_{\max}(\bP)\sigma_{\min}(\bH) \geq \sigma_{\min}(\bP\bH) \geq \sigma_{\min}(\bH_0)-\sigma_{\max}(\bA_2) \geq \sigma_{\min}(\bH_0)-\cuD_2.
\end{align}
In addition, note that for a row stochastic matrix $\bP \in Q_r$, we have
\begin{align}
\sigma_{\max}(\bP)\leq \|\bP\|_F = \left(\sum_{i=1}^r\|\bP_{i,.}\|_2^2\right)^{1/2} \leq \left(\sum_{i=1}^r\|\bP_{i,.}\|_1^2\right)^{1/2} \leq \sqrt{r}.
\end{align}
Hence, for $\cuD_2\leq \sigma_{\min}(\bH_0)$ we get
\begin{align}
\label{eq:sigma_minH}
\sigma_{\min}(\bH)\geq \frac{\sigma_{\min}(\bH_0) - \cuD_2}{\sqrt{r}}\, .
\end{align}
In addition,
\begin{align}
\label{eq:sigma_maxH}
\sigma_{\max}(\bH) \leq \sigma_{\max}(\bR\bH_0) + \sigma_{\max}(\bA_1) \leq \sigma_{\max}(\bR)\sigma_{\max}(\bH_0) + \cuD_1 \leq \sqrt{r}\sigma_{\max}(\bH_0)+\cuD_1.
\end{align}
Hence, using \eqref{eq:sigma_minH}, \eqref{eq:sigma_maxH}, for $\cuD_2\leq \sigma_{\min}(\bH_0)$ we have
\begin{align}
\kappa(\bH) \leq \frac{r\sigma_{\max}(\bH_0)+\cuD_1\sqrt{r}}{\sigma_{\min}(\bH_0)-\cuD_2}.
\end{align}
Thus, for $\cuD_2\leq \sigma_{\min}(\bH_0)/2$, we get Eqs.~(\ref{eq:LemmaCondition1}), (\ref{eq:LemmaCondition2}). 

Now assume that $\cuD_1+\cuD_2 \leq \sigma_{\min}(\bH_0)/(6\sqrt{r})$. In this case, using \eqref{eq:w,w0} we have 
\begin{align}
\bH_0 = \bP(\bR\bH_0 + \bA_1) + \bA_2.
\end{align}
Therefore, 
\begin{align}
(\Id - \bP\bR)\bH_0 = \bP\bA_1 + \bA_2,
\end{align}
hence,
\begin{align}
\Id - \bP\bR = (\bP\bA_1 + \bA_2)\bH_0^{\dagger}
\end{align}
and
\begin{align}
\bP\bR = \Id - \bP\bA_1\bH_0^{\dagger} - \bA_2\bH_0^{\dagger}.
\end{align}
where $\bH_0^{\dagger}$ is the right inverse of matrix $\bH_0$. Note that 
\begin{align}
\sigma_{\max}(\bH_0^{\dagger}) = \sigma_{\min}(\bH_0)^{-1}.
\end{align}
By permuting the rows and columns of $\bH_0$, without loss of generality,
we can assume that $R_{ii} = \ \|\bR_{.,i}\|_\infty$. We can write
\begin{align}
R_{ii} \geq \left\langle\bP_{i,.},\bR_{.,i}\right\rangle &= 1-(\bP\bA_1\bH_0^{\dagger})_{ii}-(\bA_2\bH_0^{\dagger})_{ii}\\
&\geq 1-\|(\bP\bA_1\bH_0^{\dagger})_{i,.}\|_2 - \|(\bA_2\bH_0^{\dagger})_{i,.}\|_2\\
&\geq 1-\max_{\bu\in \Delta^r}\|\bA_1^\sT \bu\|_2\sigma_{\max}(\bH_0^{\dagger}) - \|(\bA_2)_{i,.}\|_2\sigma_{\max}(\bH_0^{\dagger})\\
&\geq 1-\max_{\bu\in \Delta^r}\|\bu\|_2\sigma_{\max}(\bA_1)\sigma_{\max}(\bH_0^{\dagger}) - \|\bA_2\|_F\sigma_{\max}(\bH_0^{\dagger})\\
&\geq 1- \frac{\cuD_1+\cuD_2}{\sigma_{\min}(\bH_0)}. 
\label{eq:riibiggerthan}
\end{align}
Hence, for all $i,j\in [r], i\neq j$, since $\bR$ is row-stochastic, 
\begin{align}
R_{ji} \leq \frac{\cuD_1+\cuD_2}{\sigma_{\min}(\bH_0)}.
\end{align}
Thus,
\begin{align}
\left\langle\bP_{i,.},\bR_{.,i}\right\rangle = R_{ii}P_{ii} + \sum_{j\neq i}P_{ij}R_{ji} &\leq R_{ii}P_{ii} + \left(\max_{j\neq i}R_{ji}\right)\sum_{j\neq i}P_{ij}\\
&\leq P_{ii}+\frac{\cuD_1+\cuD_2}{\sigma_{\min}(\bH_0)}(1-P_{ii}).
\end{align}
Therefore, using \eqref{eq:riibiggerthan},
\begin{align}
P_{ii} \geq \frac{\sigma_{\min}(\bH_0)-2(\cuD_1+\cuD_2)}{\sigma_{\min}(\bH_0)-(\cuD_1+\cuD_2)}.
\end{align}
Thus, we can write
\begin{align}
\bP = \Id + \Delta; \quad \|\Delta_{i,.}\|_1 \leq \frac{2(\cuD_1+\cuD_2)}{\sigma_{\min}(\bH_0)-(\cuD_1 + \cuD_2)}.
\end{align}
Therefore,
\begin{align}
\sigma_{\max}(\Delta) \leq \|\Delta\|_F = \left(\sum_{i=1}^r\|\Delta_{i,.}\|_2^2\right)^{1/2} \leq \left(\sum_{i=1}^r\|\Delta_{i,.}\|_1^2\right)^{1/2} \leq \frac{2(\cuD_1+\cuD_2)\sqrt{r}}{\sigma_{\min}(\bH_0)-(\cuD_1 + \cuD_2)}.
\end{align}
Hence,
\begin{align}
&\sigma_{\max}(\bP) \leq 1+\frac{2\sqrt{r}(\cuD_1+\cuD_2)}{\sigma_{\min}(\bH_0)-(\cuD_1+\cuD_2)},
&\sigma_{\min}(\bP) \geq 1-\frac{2\sqrt{r}(\cuD_1+\cuD_2)}{\sigma_{\min}(\bH_0)-(\cuD_1+\cuD_2)}.
\end{align}
From \eqref{eq:w,w0} we have $\sigma_{\min}(\bP\bH) \geq \sigma_{\min}(\bH_0)-\cuD_2$. Using 
$\sigma_{\min}(\bP\bH)\leq \sigma_{\max}(\bP)\sigma_{\min}(\bH)$, we get
\begin{align}
\sigma_{\min}(\bH) \geq \frac{(\sigma_{\min}(\bH_0)-\cuD_2)(\sigma_{\min}(\bH_0)-(\cuD_1+\cuD_2))}{\sigma_{\min}(\bH_0)-(\cuD_1+\cuD_2)+2\sqrt{r}(\cuD_1+\cuD_2)}.
\end{align}
Further, from \eqref{eq:w,w0} we have $\sigma_{\max}(\bP\bH) \leq \sigma_{\max}(\bH_0)+\cuD_2$. Using $\sigma_{\max}(\bP\bH) \geq \sigma_{\min}(\bP)\sigma_{\max}(\bH)$, we get
\begin{align}
\sigma_{\max}(\bH) \leq \frac{(\sigma_{\max}(\bH_0)+\cuD_2)(\sigma_{\min}(\bH_0)-(\cuD_1+\cuD_2))}{\sigma_{\min}(\bH_0)-(\cuD_1+\cuD_2)-2\sqrt{r}(\cuD_1+\cuD_2)}.
\end{align}
Hence, for $\cuD_1+\cuD_2\leq \sigma_{\min}(\bH_0)/(6\sqrt{r})$, we have $\sigma_{\max}(\bH)\leq 35\sigma_{\max}(\bH_0)/18 < 2\sigma_{\max}(\bH_0)$. In addition,
\begin{align}
\kappa(\bH) &\leq \left(\frac{\sigma_{\max}(\bH_0)+\cuD_2}{\sigma_{\min}(\bH_0)-\cuD_2}\right)\left(1+\frac{4\sqrt{r}(\cuD_1+\cuD_2)}{\sigma_{\min}(\bH_0)-(\cuD_1+\cuD_2)-2\sqrt{r}(\cuD_1+\cuD_2)}\right)\\
&\leq \frac{6\kappa(\bH_0)+1}{5}\left(1+\frac{4}{3}\right) \leq \frac{42\kappa(\bH_0)+7}{15} < \frac{7\kappa(\bH_0)}{2},
\end{align}
and this completes the proof.
\end{proof}
\begin{lemma}
\label{lemma:condition2}
Let $\bX_0 =\bW_0\bH_0\in \reals^{n\times d}$ be such that  $\conv(\bX_0)$ has internal radius at least $\mu >
0$, and $\bX=\bX_0+\bZ$ with $ \max_{i\le n}\|\bZ_{i,.}\|_2 \leq \delta$. If $\bH \in \reals^{r\times d}, \bH_{i,.} \in \aff(\bH_0)$ is 
feasible for problem \eqref{eq:HardNoise} and has  linearly independent rows, then  we have
\begin{align}
\sigma_{\min}(\bH) \geq \sqrt{2}(\mu-2\delta)\, .
\end{align}
\end{lemma}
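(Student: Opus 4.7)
The plan is to argue in three geometric steps, reducing the singular-value estimate to an inscribed-ball statement about $\conv(\bH)$.

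First, I would transfer the feasibility bound from $\bX$ to $\bX_0$. By feasibility of $\bH$ for problem \eqref{eq:HardNoise}, for each $i$ there is a point $\by_i \in \conv(\bH)$ with $\|\bx_i - \by_i\|_2 \le \delta$. Combining with $\|\bx_i - \bX_{0,i,\cdot}\|_2 = \|\bZ_{i,\cdot}\|_2 \le \delta$, the triangle inequality gives $\mathrm{dist}(\bX_{0,i,\cdot},\conv(\bH)) \le 2\delta$ for every $i$. Since the distance function to a convex set is convex, every $\bp \in \conv(\bX_0)$ then satisfies $\mathrm{dist}(\bp,\conv(\bH)) \le 2\delta$.

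Second, I would show that $\bz_0 + \bU\,\Ball_{r-1}(\mu - 2\delta) \subseteq \conv(\bH)$. Both sets live in $\aff(\bH_0)$: the right-hand side by the definition of internal radius, and $\conv(\bH)$ because $\bH_{i,\cdot} \in \aff(\bH_0)$. Suppose for contradiction that some $\bq$ in the eroded ball satisfies $\bq \notin \conv(\bH)$. Let $\by = \bPi_{\conv(\bH)}(\bq)$ and $\bn = (\bq - \by)/\|\bq - \by\|_2$; since $\bq, \by \in \aff(\bH_0)$, $\bn$ lies in the direction space $V := \mathrm{col}(\bU)$. Set $\bq' = \bq + 2\delta\,\bn$. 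Then $\|\bq' - \bz_0\|_2 \le (\mu - 2\delta) + 2\delta = \mu$, so $\bq' \in \conv(\bX_0)$. However, the separation inequality $\langle \bn, \by' - \by\rangle \le 0$ for every $\by' \in \conv(\bH)$ yields $\|\bq' - \by'\|_2^2 \ge (\|\bq-\by\|_2 + 2\delta)^2 > (2\delta)^2$, contradicting the first step.

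Third, I would convert the inscribed-ball inclusion into the lower bound on $\sigma_{\min}(\bH)$. For any unit vector $\bu \in V$, the points $\bz_0 \pm (\mu - 2\delta)\bu$ lie in $\conv(\bH)$, so there exist $\balpha^+, \balpha^- \in \Delta^r$ with $\bH^\sT\balpha^\pm = \bz_0 \pm (\mu - 2\delta)\bu$. Let $\bp := \balpha^+ - \balpha^-$, so $\bH^\sT\bp = 2(\mu - 2\delta)\bu$ and $\langle \bp, \one\rangle = 0$. Using $\balpha^\pm \ge 0$ with $\|\balpha^\pm\|_1 = 1$ and $\langle \balpha^+, \balpha^-\rangle \ge 0$,
\[
\|\bp\|_2^2 = \|\balpha^+\|_2^2 + \|\balpha^-\|_2^2 - 2\langle\balpha^+,\balpha^-\rangle \le \|\balpha^+\|_1 + \|\balpha^-\|_1 = 2,
\]
so $\|\bp\|_2 \le \sqrt{2}$ and $\|\bH^\sT\bp\|_2/\|\bp\|_2 \ge \sqrt{2}(\mu - 2\delta)$. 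As $\bu$ ranges over the unit sphere of $V$, the corresponding $\bp$'s sweep out all directions in $\one^\perp$, giving the desired bound on the minimum singular value.

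The main obstacle is the last step: the construction directly controls only the $r-1$ singular values associated with the subspace $\one^\perp$, whereas $\sigma_{\min}(\bH)$ is a minimum over all of $\reals^r$. Bridging this requires using the linear independence hypothesis on the rows of $\bH$, which forces $\aff(\bH_0)$ to miss the origin and so supplies the remaining singular direction (aligned with $\one$). Managing the interaction between this affine offset and the inscribed-ball directions is where the $\sqrt{2}$ factor ultimately has to be tracked carefully.
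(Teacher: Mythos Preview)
Your first two steps are correct and track the paper's argument closely; your separating-hyperplane proof in step 2 is in fact cleaner than the paper's line-projection approach but arrives at the same conclusion $\bz_0 + \bU\Ball_{r-1}(\mu-2\delta)\subseteq\conv(\bH)$.

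The obstacle you flag in step 3 is genuine, and your proposed resolution cannot close it. Your construction gives, for each unit $\bu\in V=\mathrm{col}(\bU)$, a vector $\bq=\balpha^+-\balpha^-\in\one^\perp$ with $\bH^{\sT}\bq=2(\mu-2\delta)\bu$ and $\|\bq\|_2\le\sqrt{2}$; since $\bH^{\sT}$ restricted to $\one^\perp$ is a linear bijection onto $V$, this yields $\|\bH^{\sT}\bq\|_2\ge\sqrt{2}(\mu-2\delta)\|\bq\|_2$ for every $\bq\in\one^\perp$. Extending this to all of $\reals^r$ would require a quantitative lower bound in the $\one$ direction, and the observation $0\notin\aff(\bH_0)$ supplies none. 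Concretely, take $r=d=2$, $\bH=\bH_0$ with rows $(-1,\eps)$ and $(1,\eps)$, $\bX_0=\bH_0$, $\delta=0$: then $\mu=1$, the rows are linearly independent, yet $\sigma_{\min}(\bH)=\sqrt{2}\,\eps$, violating the stated bound for any $\eps<1$. The paper's own proof has exactly the same gap, concealed in the step $\sigma_{\max}\big((\bH^{\sT})^{\dagger}\bU\big)=1/\sigma_{\min}(\bH)$; only the inequality $\le$ holds in general, because the bottom singular direction of $\bH^{\sT}$ need not lie in $V$. What both arguments actually establish is the restricted estimate on $\one^\perp$, and that is what is used downstream (note that in Lemma~\ref{lemma:boundc} one only ever divides by $\sigma_{\min}(\bH)$ after multiplying $(\bH^{\sT})^{\dagger}$ into $\bU$).
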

\begin{proof}
Let
\begin{align}
\bX_{i,.}^\prime = \bPi_{\conv(\bH)}(\bX_{i,\cdot}) \equiv \arg\min_{\bx\in \conv(\bH)}\|\bX_{i,\cdot}-\bx\|_2\,.
\end{align}
Note that since $\bH$ is feasible for problem \eqref{eq:HardNoise} and $\max_{i\le n}\|\bZ_{i,.}\|_2 \leq \delta$
\begin{align}
\|(\bX_0)_{i,.}-\bX^\prime_{i,.}\|_2 \leq \|(\bX_0)_{i,.} - \bX_{i,.}\|_2 + \|\bX_{i,.}-\bX^\prime_{i,.}\|_2 \leq 2\delta.
\end{align}
Therefore, for any $\bx_0 \in \conv(\bX_0)$, writing $\bx_0 = \bX_0^\sT\ba_0$, $\ba_0\in \Delta^n$, we have 
\begin{align}
\cuD(\bx_0, \bX^\prime)^{1/2} &= \min_{\ba\in \Delta^n}\left\|\bX_0^{\sT}\ba_0- \bX^{\prime \sT}\ba \right\|_2 \leq \left\|\bX_0^{\sT}\ba_0 - \bX^{\prime \sT}\ba_0 \right\|_2\\
&\leq \left(\sum_{i=1}^n (a_0)_i \right)\|(\bX_0)_{i,.}-\bX^\prime_{i,.}\|_2 \leq 2\delta.
\label{eq:distx_0convxprime}
\end{align}
Since $\conv(\bX_0)$ has internal radius at least $\mu$, there exists $\bz_0\in\reals^d$, and an orthogonal matrix 
$\bU\in\reals^{d\times r^\prime}$, $r^\prime = r-1$, such that $\bz_0+\bU\Ball_{r^\prime}(\mu)\subseteq\conv(\bX_0)$. 
Hence, for every $\bz\in \reals^{r^\prime}$, $\|\bz\|_2 = 1$
there exists $\ba \in \Delta^n$ such that
\begin{align}
\mu\bU\bz + \bz_0 = \bX_0^\sT\ba.
\end{align}
Therefore, for any unit vector $\bu$ in column space of $\bU$, for the line segment
\begin{align}
\label{eq:lumu}
l_{\bu,\mu} = \left\{\bz: \bz = \bz_0 + \alpha \bu, |\alpha| \leq \mu\right\}\subseteq \conv(\bX_0)\, .
\end{align}
Thus,
\begin{align}
l_{\bu,\mu} \subseteq \bP_{\bu}(\conv(\bX_0))
\end{align}
where $\bP_{\bu}$ is the orthogonal projection onto the line containing $l_{\bu,\mu}$.
Note that using \eqref{eq:distx_0convxprime}, for any $\bx_0\in \conv(\bX_0)$ we have
\begin{align}
\cuD(\bP_{\bu}(\bx_0), \bP_{\bu}(\conv(\bX^\prime)))^{1/2}\leq \cuD(\bx_0,\bX^\prime)^{1/2} \leq 2\delta.
\end{align}
In other words, for any $\bx_0 \in \bP_{\bu}(\conv(\bX_0))$, $D(\bx_0, \bP_{\bu}(\conv(\bX^\prime))) \leq 2\delta$.
Therefore, using \eqref{eq:lumu} for any $\bu$ in column space of $\bU$, we have
\begin{align}
l_{\bu,\mu-2\delta} \subseteq \bP_{u}(\conv(\bX^\prime)).
\end{align}
This implies that 
\begin{align}
\bz_0 + \bU\Ball_{r^\prime}(\mu-2\delta)\subseteq \conv(\bX^\prime) \subseteq \conv(\bH).
\end{align}
Hence, for every $\bz\in \reals^{r^\prime}$, $\|\bz\|_2 = 1$
there exists $\ba \in \Delta^r$ such that
\begin{align}
(\mu-2\delta)\bU\bz + \bz_0 = \bH^\sT\ba.
\end{align}
Note that $\bH^\sT$ has linearly independent columns. Multiplying the previous 
equation by $(\bH^\sT)^\dagger$ the left inverse of $\bH^\sT$, we get
\begin{align}
(\mu-2\delta)(\bH^\sT)^\dagger\bU\bz + (\bH^\sT)^\dagger\bz_0 = \ba.
\end{align}
Let
\begin{align}
&\ba_1 = (\mu-2\delta)(\bH^\sT)^\dagger\bU\bv + (\bH^\sT)^\dagger\bz_0\, ,\\
&\ba_2 = -(\mu-2\delta)(\bH^\sT)^\dagger\bU\bv + (\bH^\sT)^\dagger\bz_0\, ,
\end{align}
where $\bv$ is the right singular vector corresponding to the largest singular value of $(\bH^\sT)^\dagger\bU$. 
Therefore, we have
\begin{align}
&\ba_1 = (\mu - 2\delta)\sigma_{\max}((\bH^\sT)^\dagger\bU)\bv + (\bH^\sT)^\dagger\bz_0,\\
&\ba_2 = -(\mu - 2\delta)\sigma_{\max}((\bH^\sT)^\dagger\bU)\bv + (\bH^\sT)^\dagger\bz_0.
\end{align}
Thus, for $\ba_1, \ba_2 \in \Delta^r$
\begin{align}
\|\ba_1 - \ba_2\|_2 = 2(\mu - 2\delta)\sigma_{\max}((\bH^\sT)^\dagger\bU).
\end{align}
Note that
\begin{align}
\|\ba_1 - \ba_2\|_2 \leq {\sqrt{2}}.
\end{align}
Thus,
\begin{align}
2(\mu - 2\delta)\sigma_{\max}((\bH^\sT)^\dagger\bU) = \frac{2(\mu-2\delta)}{\sigma_{\min}(\bH)}\leq \sqrt{2}.
\end{align}
Hence,
\begin{align}
\sigma_{\min}(\bH) \geq \sqrt{2}(\mu-2\delta).
\end{align}
\end{proof}
The following lemma states an important property of $\hbH$ the optimal solution of problem \eqref{eq:HardNoise}.
\begin{lemma}
\label{lemma:optsol}
If $ \max_{i}\|\bZ_{i,.}\|_2 \leq \delta$ and $\hbH$ is the optimal solution of problem \eqref{eq:HardNoise}, then we have
\begin{align}
\cuD(\hbH,\bX_0)^{1/2}\leq \cuD(\bH_0,\bX_0)^{1/2}+3\delta\sqrt{r}.
\end{align}
\end{lemma}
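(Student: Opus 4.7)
The plan is to use the optimality of $\hbH$ for problem \eqref{eq:HardNoise} against the target archetypes $\bH_0$, and then transfer the bound from the noisy data $\bX$ to the clean data $\bX_0$ via a Minkowski-type perturbation argument.

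First I would verify that $\bH_0$ is itself feasible for \eqref{eq:HardNoise}. Since $(\bX_0)_{i,\cdot}=\bH_0^\sT\bw_{0,i}$ with $\bw_{0,i}\in\Delta^r$, the point $(\bX_0)_{i,\cdot}$ lies in $\conv(\bH_0)$, so $\cuD(\bx_i,\bH_0)\leq\|\bx_i-(\bX_0)_{i,\cdot}\|_2^2=\|\bZ_{i,\cdot}\|_2^2\leq\delta^2$. Hence $\bH_0$ satisfies the constraint, and the minimality of $\hbH$ yields
\begin{align}
\cuD(\hbH,\bX)^{1/2}\le\cuD(\bH_0,\bX)^{1/2}.
\end{align}

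Next I would establish the basic perturbation estimate: for any row $\bh\in\reals^d$ and any $\bpi\in\Delta^n$, writing $\bX^\sT\bpi=\bX_0^\sT\bpi+\bZ^\sT\bpi$ and noting that $\|\bZ^\sT\bpi\|_2\le\sum_i\pi_i\|\bZ_{i,\cdot}\|_2\le\delta$, the triangle inequality gives $\bigl|\cuD(\bh,\bX)^{1/2}-\cuD(\bh,\bX_0)^{1/2}\bigr|\le\delta$. Applying this to each row $\bh_\ell$ of $\bH\in\reals^{r\times d}$ and summing via Minkowski's inequality on $\reals^r$, I obtain
\begin{align}
\bigl|\cuD(\bH,\bX)^{1/2}-\cuD(\bH,\bX_0)^{1/2}\bigr|\le\delta\sqrt{r}.
\end{align}

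Finally I would chain the three bounds. Applied to $\bH=\hbH$ this gives $\cuD(\hbH,\bX_0)^{1/2}\le\cuD(\hbH,\bX)^{1/2}+\delta\sqrt{r}$; by the optimality inequality above this is at most $\cuD(\bH_0,\bX)^{1/2}+\delta\sqrt{r}$; and a second application of the perturbation bound with $\bH=\bH_0$ yields $\cuD(\bH_0,\bX)^{1/2}\le\cuD(\bH_0,\bX_0)^{1/2}+\delta\sqrt{r}$. Combining produces $\cuD(\hbH,\bX_0)^{1/2}\le\cuD(\bH_0,\bX_0)^{1/2}+2\delta\sqrt{r}$, which is stronger than the stated conclusion and thus implies it. The only subtle point is the Minkowski step: one must pass from the row-wise deviation $\delta$ to the aggregated deviation $\delta\sqrt{r}$ by viewing the $r$ row-wise square roots as a vector in $\reals^r$ and applying the reverse triangle inequality there. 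Everything else is a direct consequence of feasibility and optimality.
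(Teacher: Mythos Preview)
Your proof is correct and follows the same high-level plan as the paper: verify that $\bH_0$ is feasible, invoke optimality to get $\cuD(\hbH,\bX)\le\cuD(\bH_0,\bX)$, and then transfer both sides from $\bX$ to $\bX_0$ by a perturbation argument using $\|\bZ^\sT\bpi\|_2\le\delta$ for $\bpi\in\Delta^n$.

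The execution of the perturbation step is where you differ. The paper expands $\|\hbH_{i,\cdot}-\bX_0^\sT\balpha_i-\bZ^\sT\balpha_i\|_2^2$, drops the nonnegative $\|\bZ^\sT\balpha_i\|_2^2$ term, and then handles the cross term via Cauchy--Schwarz and a quadratic-in-$U$ inequality; this loses a factor and produces $\cuD(\hbH,\bX)^{1/2}\ge\cuD(\hbH,\bX_0)^{1/2}-2\delta\sqrt{r}$, hence the final constant $3$. Your route---first prove the scalar bound $|\cuD(\bh,\bX)^{1/2}-\cuD(\bh,\bX_0)^{1/2}|\le\delta$ for each row, then aggregate via the reverse triangle inequality $|\|\ba\|_2-\|\bb\|_2|\le\|\ba-\bb\|_2$ on $\reals^r$---is both cleaner and sharper, giving the same one-sided bound in each direction and hence $2\delta\sqrt{r}$ in the end. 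So you recover the lemma with a better constant; nothing is missing.
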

\begin{proof}
First note that since $\delta \geq \max_{i}\|\bZ_{i,.}\|_2$, we have
\begin{align}
\max_{i\le n}\cuD(\bX_{i,.},\conv(\bH_0))^{1/2} \leq \max_{i\le n}\|\bZ_{i,.}\|_2 \leq \delta.
\end{align}
Hence, $\bH_0$ is a feasible solution for the problem \eqref{eq:HardNoise}. Therefore, we have
\begin{align}
\cuD(\hbH,\bX) \leq \cuD(\bH_0,\bX).\label{eq:Optimality}
\end{align}
Letting $\tilde\balpha_i = \arg\min_{\balpha \in \Delta^n}\|{\hbH_{i,.}}^\sT-\bX^\sT\balpha\|_2$, we have
\begin{align}
\cuD(\hbH,\bX) &= \sum_{i=1}^r\min_{\balpha_i\in \Delta^r}\|{\hbH_{i,.}}^\sT- \bX_0^\sT\balpha_i-\bZ^\sT\balpha_i\|_2^2\\
&= \sum_{i=1}^r\min_{\balpha_i\in \Delta^r}\left(\|{\hbH_{i,.}}^\sT- \bX_0^\sT\balpha_i\|_2^2 - 2\left\langle\bZ^\sT\balpha_i,{\hbH_{i,.}}^\sT-\bX_0^\sT\balpha_i\right\rangle + \|\bZ^\sT\balpha_i\|_2^2\right)\\
&= \sum_{i=1}^r\left(\|{\hbH_{i,.}}^\sT- \bX_0^\sT\tilde\balpha_i\|_2^2 - 2\left\langle\bZ^\sT\tilde\balpha_i,{\hbH_{i,.}}^\sT-\bX_0^\sT\tilde\balpha_i\right\rangle + \|\bZ^\sT\tilde\balpha_i\|_2^2\right).
\end{align}
Using the fact that (by triangle inequality) $\|\bZ^\sT\tilde\balpha_i\|_2 \leq \delta$, we have
\begin{align}
\cuD(\hbH,\bX) &\geq \sum_{i=1}^r \left(\|{\hbH_{i,.}}^\sT- \bX_0^\sT\tilde\balpha_i\|_2^2 - 2\delta\|{\hbH_{i,.}}^\sT - \bX_0^\sT\tilde\balpha_i\|_2\right)\\
&\geq U^2- 2\delta\sqrt{r} U
\end{align}
where $U^2 = \sum_{i=1}^r\|{\hbH_{i,.}}^\sT- \bX_0^\sT\tilde\balpha_i\|_2^2$. Note that $\cuD(\hbH,\bX)\geq 0$ and for 
$U\geq 2\delta \sqrt{r}$, the function $U^2-2\delta\sqrt{r}U$ is increasing. Hence, since 
\begin{align}
U\geq \left(\sum_{i=1}^r\min_{\balpha_i}\|{\hbH_{i,.}}^\sT- \bX_0^\sT\balpha_i\|_2^2\right)^{1/2} = \cuD(\hbH,\bX_0)^{1/2},
\end{align}
we have
\begin{align}
\cuD(\hbH,\bX)\geq (U^2-2\delta\sqrt{r}U)\mathbb I_{U\geq 2\delta\sqrt{r}} \geq \cuD(\hbH,\bX_0)-2\delta\sqrt{r}\cuD(\hbH,\bX_0)^{1/2}.
\end{align}
Therefore,
\begin{align}
\label{eq:1lemmaoptsol}
\cuD(\hbH,\bX)^{1/2} \geq \left(\cuD(\hbH,\bX_0)-2\delta\sqrt{r}\cuD(\hbH,\bX_0)^{1/2}\right)_+^{1/2}\geq \cuD(\hbH,\bX_0)^{1/2}- 2\delta\sqrt{r}.
\end{align}
In addition, 
\begin{align}
\cuD(\bH_0,\bX) &= \sum_{i=1}^r \min_{\balpha_i\in \Delta^{n}}\|(\bH_0)_{i,.} - \bX_0^\sT\balpha_i-\bZ^\sT\balpha_i\|_2^2\\
&\leq \sum_{i=1}^r \min_{\balpha_i\in \Delta^{n}}\left\{\|(\bH_0)_{i,.} - \bX_0^\sT\balpha_i\|_2 + \|\bZ^\sT\balpha_i\|_2\right\}^2\\
&\leq \sum_{i=1}^r \left\{\min_{\balpha_i \in \Delta^n}\|(\bH_0)_{i,.} - \bX_0^\sT\balpha_i\|_2 + \max_{\balpha_i\in \Delta^n}\|\bZ^\sT\balpha_i\|_2\right\}^2\\
&\leq \left\{\left(\sum_{i=1}^r\min_{\balpha_i\in \Delta^n}\|(\bH_0)_{i,.}- \bX_0^\sT\balpha_i\|_2^2\right)^{1/2} + \delta\sqrt{r} \right\}^2\\
&\leq \left(\cuD(\bH_0,\bX_0)^{1/2} + \delta\sqrt{r}\right)^2.
\end{align}
Hence, 
\begin{align}
\label{eq:2lemmaoptsol}
\cuD(\bH_0,\bX)^{1/2} \leq \cuD(\bH_0,\bX_0)^{1/2} + \delta\sqrt{r}.
\end{align}
Combining equations \eqref{eq:1lemmaoptsol}, \eqref{eq:2lemmaoptsol}, and \eqref{eq:Optimality}, we get
\begin{align}
\cuD(\hbH,\bX_0)^{1/2}\leq \cuD(\bH_0,\bX_0)^{1/2}+3\delta\sqrt{r}.
\end{align}
This completes the proof of lemma.
\end{proof}

\begin{lemma}
\label{lemma:boundc}
Let $\bX_0$ be such that the uniqueness assumption holds with parameter $\alpha>0$, and $\conv(\bX_0)$ has internal radius 
at least $\mu>0$.  In particular, we have $\bz_0+\bU\Ball_{r-1}(\mu)\subseteq \conv(\bX_0)$ for  $\bz_0\in\reals^d$, and
an orthogonal matrix $\bU\in\reals^{d\times (r-1)}$.
Finally assume  $\max_{i\le n}\|\bZ_{i,.}\|_2\leq \delta$.
Then for $\hbH$
the optimal solution of problem \eqref{eq:HardNoise}, we have
\small
\begin{align}
\alpha(\cuD(\hbH, \bH_0)^{1/2} + \cuD(\bH_0,\hbH)^{1/2}) \leq 2(1+2\alpha)\left[r^{3/2}\delta\kappa(\bP_0(\hbH)) +\frac{\delta\sqrt{r}}{\mu}\sigma_{\max}(\hbH - \one\bz_0^\sT)\right] +3\delta\sqrt{r}
\end{align}
\normalsize
where $\bP_0:\reals^d\rightarrow \reals^d$ is the orthogonal projector onto $\aff(\bH_0)$ (in particular, $\bP_0$ is an affine map).
\end{lemma}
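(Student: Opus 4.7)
The uniqueness assumption lower bounds $\cuD(\bH,\bH_0)^{1/2}+\cuD(\bH_0,\bH)^{1/2}$ only for $\bH$ with $\conv(\bX_0)\subseteq\conv(\bH)$, a containment that $\hbH$ need not satisfy. My plan is therefore to modify $\hbH$ into an admissible matrix $\bH^*$, apply uniqueness to $\bH^*$, and transfer the bound back to $\hbH$ via a Pythagorean decomposition plus a probe-point linear-algebra argument that controls the out-of-plane component $\|\hbH-\bP_0(\hbH)\|_F$.

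\textbf{Construction of $\bH^*$.} Let $\tilde{\bH}=\bP_0(\hbH)$ denote the row-wise projection of $\hbH$ onto $\aff(\bH_0)$. Re-running the proof of Lemma~\ref{lemma:condition2} with the projected images $\bP_0(\bPi_{\conv(\hbH)}(\bx_i))\in\conv(\tilde{\bH})\subseteq\aff(\bH_0)$ (which are within $2\delta$ of $\bx_{0,i}$, since $\bP_0$ is $1$-Lipschitz and fixes $\bx_{0,i}$) yields $\bz_0+\bU\Ball_{r-1}(\mu-2\delta)\subseteq\conv(\tilde{\bH})$. With $t=\mu/(\mu-2\delta)\ge 1$, I define
\begin{align*}
\bH^*=\one\bz_0^\sT+t(\tilde{\bH}-\one\bz_0^\sT).
\end{align*}
For any $\bx\in\conv(\bX_0)\subseteq\bz_0+\bU\Ball_{r-1}(\mu)$, the shrunk point $(\bx-\bz_0)/t$ lies in $\bU\Ball_{r-1}(\mu-2\delta)\subseteq\conv(\tilde{\bH})-\bz_0$, so $\bx\in\conv(\bH^*)$. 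Hence $\conv(\bX_0)\subseteq\conv(\bH^*)$ and the uniqueness assumption yields $\alpha[\cuD(\bH^*,\bH_0)^{1/2}+\cuD(\bH_0,\bH^*)^{1/2}]\le\cuD(\bH^*,\bX_0)^{1/2}-\cuD(\bH_0,\bX_0)^{1/2}$.

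\textbf{Bounding both sides.} For the right-hand side, the triangle inequality for $\cuD(\cdot,\bX_0)^{1/2}$, combined with $\cuD(\tilde{\bH},\bX_0)^{1/2}\le\cuD(\hbH,\bX_0)^{1/2}$ (contraction of $\bP_0$ onto $\aff(\bH_0)\supseteq\conv(\bX_0)$) and Lemma~\ref{lemma:optsol}, gives $\cuD(\bH^*,\bX_0)^{1/2}-\cuD(\bH_0,\bX_0)^{1/2}\le 3\delta\sqrt{r}+(t-1)\sqrt{r}\,\sigma_{\max}(\hbH-\one\bz_0^\sT)$; since $t-1\le 4\delta/\mu$ this accounts for the $3\delta\sqrt{r}$ and $(\delta\sqrt{r}/\mu)\,\sigma_{\max}(\hbH-\one\bz_0^\sT)$ contributions in the claim. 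For the left-hand side, row-wise Pythagoras and a $\sqrt{r}$-Lipschitz estimate for $\cuD(\bH_0,\cdot)^{1/2}$ give $\cuD(\hbH,\bH_0)^{1/2}\le\cuD(\tilde{\bH},\bH_0)^{1/2}+\|\hbH-\tilde{\bH}\|_F$ and $\cuD(\bH_0,\hbH)^{1/2}\le\cuD(\bH_0,\tilde{\bH})^{1/2}+\sqrt{r}\,\|\hbH-\tilde{\bH}\|_F$. The shrunk-simplex substitution $\bpi\mapsto\bq_0+(\bpi-\bq_0)/t$ (with $\bH_0^\sT\bq_0=\bz_0$) gives $\cuD(\tilde{\bH},\bH_0)^{1/2}\le(1/t)\cuD(\bH^*,\bH_0)^{1/2}$ and bounds $\cuD(\bH_0,\tilde{\bH})^{1/2}$ by $\cuD(\bH_0,\bH^*)^{1/2}+O((t-1)\sqrt{r}\,\sigma_{\max}(\hbH-\one\bz_0^\sT))$. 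Combining these leaves an additive slack of order $(1+\sqrt{r})\|\hbH-\tilde{\bH}\|_F$.

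\textbf{Main obstacle: $\|\hbH-\tilde{\bH}\|_F\lesssim r\delta\kappa(\tilde{\bH})$.} After multiplication by the $(1+\sqrt{r})$ Lipschitz factor this produces the $r^{3/2}\delta\kappa(\tilde{\bH})$ term. I would choose $r$ affinely independent probe points $\by_k\in\bz_0+\bU\Ball_{r-1}(\mu-2\delta)\subseteq\conv(\bX_0)\cap\conv(\tilde{\bH})$, e.g.\ the vertices of a regular simplex inscribed in this ball. Feasibility of $\hbH$ for \eqref{eq:HardNoise} produces weights $\boldsymbol{\gamma}_k\in\Delta^r$ with $\hbH^\sT\boldsymbol{\gamma}_k=\by_k+\boldsymbol{\eta}_k$, $\|\boldsymbol{\eta}_k\|_2\le 2\delta$. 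Projecting onto $\aff(\bH_0)$ gives $\tilde{\bH}^\sT\boldsymbol{\gamma}_k=\by_k+\bA\boldsymbol{\eta}_k$ (where $\bA$ is the linear part of $\bP_0$), whence $(\hbH-\tilde{\bH})^\sT\boldsymbol{\gamma}_k=(\Id-\bA)\boldsymbol{\eta}_k$ has norm at most $2\delta$. Stacking into $\boldsymbol{\Gamma}=[\boldsymbol{\gamma}_1,\dots,\boldsymbol{\gamma}_r]\in\reals^{r\times r}$ yields $\|\boldsymbol{\Gamma}^\sT(\hbH-\tilde{\bH})\|_F\le 2\sqrt{r}\delta$ and hence $\|\hbH-\tilde{\bH}\|_F\le 2\sqrt{r}\delta/\sigma_{\min}(\boldsymbol{\Gamma})$. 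Since each $\boldsymbol{\gamma}_k$ is the unique solution of the augmented system $\tilde{\bH}^\sT\boldsymbol{\gamma}_k=\by_k+O(\delta)$, $\one^\sT\boldsymbol{\gamma}_k=1$, the bound $\sigma_{\min}(\tilde{\bH})\ge\sqrt{2}(\mu-2\delta)$ from Lemma~\ref{lemma:condition2} together with the regular-simplex spread of the $\by_k$'s produces $\sigma_{\min}(\boldsymbol{\Gamma})\gtrsim 1/(\sqrt{r}\kappa(\tilde{\bH}))$. Plugging everything into the uniqueness inequality and rearranging yields the claim.
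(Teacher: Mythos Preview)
There is a genuine gap in the construction of $\bH^*$. You write ``For any $\bx\in\conv(\bX_0)\subseteq\bz_0+\bU\Ball_{r-1}(\mu)$,'' but the internal-radius hypothesis gives the \emph{opposite} inclusion $\bz_0+\bU\Ball_{r-1}(\mu)\subseteq\conv(\bX_0)$. Nothing in the problem forces $\conv(\bX_0)$ to sit inside that ball; in general $\conv(\bX_0)$ can protrude arbitrarily far beyond it (think of a very elongated simplex with a small inscribed ball). Consequently your dilation $\bH^*=\one\bz_0^\sT+t(\tilde{\bH}-\one\bz_0^\sT)$ with $t=\mu/(\mu-2\delta)$ only guarantees $\bz_0+\bU\Ball_{r-1}(\mu)\subseteq\conv(\bH^*)$, which is \emph{not} enough to conclude $\conv(\bX_0)\subseteq\conv(\bH^*)$, and so the uniqueness assumption cannot be invoked for $\bH^*$. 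This is the place where your two final terms get their roles swapped: in the correct argument the $\kappa(\bP_0(\hbH))$ factor arises from the dilation step, while the $\sigma_{\max}(\hbH-\one\bz_0^\sT)/\mu$ factor arises from the out-of-plane component $\|\hbH-\bP_0(\hbH)\|_F$.

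The paper fixes this by dilating differently. Working first with $\bH'=\bP_0(\hbH)$, feasibility of $\hbH$ gives $\conv(\bX_0)\subseteq\Ball_d(2\delta;\bH')\cap\aff(\bH_0)$; one then maps $\bH'$ to the canonical simplex $\bE_{r,d}$ via $\bH'=\bE_{r,d}\bM$ and uses the explicit stochastic-like matrix $\widetilde{\bQ}$ with diagonal $1+\xi$ and off-diagonal $-\xi/(r-1)$, $\xi=2r\delta/\sigma_{\min}(\bH')$, so that $\conv(\widetilde{\bQ})$ swallows the entire $2\delta_0$-neighborhood of $\conv(\bE_{r,d})$ inside $\aff(\bE_{r,d})$. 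Pulling back by $\bM$ yields $\tbH$ with $\conv(\bX_0)\subseteq\conv(\tbH)$ and $\|\tbH-\bH'\|_F\le 2r^{3/2}\delta\,\kappa(\bH')$; this is where the $r^{3/2}\delta\kappa(\bP_0(\hbH))$ term comes from. The remaining discrepancy $\|\hbH-\bH'\|_F$ is handled by a separate singular-value argument exploiting $\bz_0+\bU\Ball_{r-1}(\mu)\subseteq\Ball_d(2\delta;\hbH)$, producing $\max_i\|\bP_0(\hbH_{i,\cdot})-\hbH_{i,\cdot}\|_2\le 2\delta\,\sigma_{\max}(\hbH-\one\bz_0^\sT)/\mu$; this is the source of the second bracketed term. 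Your probe-point idea for the out-of-plane part is in the right spirit, but it is attached to the wrong term and cannot compensate for the broken containment above.
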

\begin{proof}
Let $\tbH$ be such that $\conv(\bX_0) \subseteq \conv(\tbH)$. The uniqueness assumption implies 
\begin{align}
\cuD(\tbH,\bX_0)^{1/2} \geq \cuD(\bH_0,\bX_0)^{1/2} + \alpha\big(\cuD(\tbH,\bH_0)^{1/2} +  \cuD(\bH_0,\tbH)^{1/2}\big).
\end{align}
Note that Lemma \ref{lemma:optsol} implies
\begin{align}
\cuD(\hbH,\bX_0)^{1/2}\leq \cuD(\bH_0,\bX_0)^{1/2}+3\delta\sqrt{r}.
\end{align} 
Therefore,
\begin{align}
\cuD(\tbH,\bX_0) ^{1/2}\geq \cuD(\hbH, \bX_0)^{1/2} - 3\delta\sqrt{r} + \alpha\big(\cuD(\tbH,\bH_0)^{1/2} + \cuD(\bH_0,\tbH)^{1/2}\big)\, .
\end{align}
Hence,
\begin{align}
\label{eq:alphaCless}
\alpha\big(\cuD(\tbH,\bH_0)^{1/2} +  \cuD(\bH_0,\tbH)^{1/2}\big) \leq \cuD(\tbH,\bX_0)^{1/2} - \cuD(\hbH,\bX_0)^{1/2} + 3\delta\sqrt{r}.
\end{align}
In addition, for a convex set $S$, by triangle inequality we have
\begin{align}
&\left[\sum_{i=1}^n\cuD({\hbH_{i,.}},S)\right]^{1/2} - \left[\sum_{i=1}^n\cuD(\tbH_{i,.},S)\right]^{1/2}
\leq \left[\sum_{i=1}^n\left(\cuD({\hbH_{i,.}},S)^{1/2}-\cuD(\tbH_{i,.},S)^{1/2}\right)^2\right]^{1/2}
\end{align}
Therefore, using
\begin{align}
| \cuD(\tbH_{i,.},S)^{1/2} - \cuD({\hbH_{i,.}},S)^{1/2}| \leq \|\tbH_{i,.} - \hbH_{i,.}\|_2
\end{align}
we have
\begin{align}
\left[\sum_{i=1}^n\cuD({\hbH_{i,.}},S)\right]^{1/2} - \left[\sum_{i=1}^n\cuD(\tbH_{i,.},S)\right]^{1/2} \leq \left[\sum_{i=1}^{n}\|\hbH_{i,.} - \tbH_{i,.}\|_2^2\right]^{1/2} = \|\hbH-\tbH\|_F.
\end{align}
Hence,
\begin{align}
\label{eq:CHtildeX0-CHhatX0}
|\cuD(\tbH,\bX_0) ^{1/2} - \cuD(\hbH,\bX_0) ^{1/2}| \leq \|\tbH - \hbH\|_F
\end{align}
and
\begin{align}
\label{eq:CHtildeH0-CHhatH0}
|\cuD(\tbH,\bH_0)^{1/2} - \cuD(\hbH,\bH_0)^{1/2}| \leq \|\tbH - \hbH\|_F.
\end{align}
In addition, similarly to the proof of Lemma \ref{lemma:optsol}, we can write
\begin{align}
\cuD(\bH_0,\tbH) &= \sum_{i=1}^r \min_{\balpha_i\in \Delta^{r}}\|(\bH_0)_{i,.} - {\tbH}^\sT\balpha_i\|_2^2\\
&=\sum_{i=1}^r \min_{\balpha_i\in \Delta^{r}}\|(\bH_0)_{i,.} - {\hbH}^\sT\balpha_i-(\hbH-\tbH)^\sT\balpha_i\|_2^2\\
&\leq \sum_{i=1}^r \min_{\balpha_i\in \Delta^{r}}\left\{\|(\bH_0)_{i,.} - {\hbH}^\sT\balpha_i\|_2 + \|(\hbH-\tbH)^\sT\balpha_i\|_2\right\}^2\\
&\leq \sum_{i=1}^r \left\{\min_{\balpha\in \Delta^r}\|(\bH_0)_{i,.} - {\hbH}^\sT\balpha\|_2 + \max_{\balpha\in \Delta^r}\|(\hbH-\tbH)^\sT\balpha\|_2\right\}^2\\
&\leq \left\{\left(\sum_{i=1}^r\min_{\balpha\in \Delta^r}\|(\bH_0)_{i,.} - {\hbH}^\sT\balpha_i\|_2^2\right)^{1/2} + \sqrt{r}\max_{i\in [r]}\|\hbH_{i,.}-\tbH_{i,.}\|_2 \right\}^2\\
&\leq \left(\cuD(\bH_0,\hbH)^{1/2} + \sqrt{r}\max_{i \in [r]}\|\hbH_{i,.}-\tbH_{i,.}\|_2\right)^2.
\end{align}
Thus,
\begin{align}
\label{eq:CHhatH0-CH0Htilde}
|\cuD(\bH_0,\tbH)^{1/2} - \cuD(\bH_0,\hbH)^{1/2}| \leq \sqrt{r} \max_{i\in [r]} \|\tbH_{i,.} - \hbH_{i,.}\|_2
\end{align}
Therefore, combining \eqref{eq:alphaCless}, 
\eqref{eq:CHtildeX0-CHhatX0}, \eqref{eq:CHtildeH0-CHhatH0}, \eqref{eq:CHhatH0-CH0Htilde}, we get
\begin{align}
\label{eq:Cwhatw0}
\alpha\big(\cuD(\hbH, \bH_0)^{1/2} + \cuD(\bH_0,\hbH)^{1/2}\big)  \leq (1+\alpha)\|\tbH-\hbH\|_F + \alpha\sqrt{r}\max_{i\in [r]}\|\tbH_{i,.} - \hbH_{i,.}\|_2 + 3\delta\sqrt{r}.
\end{align}
Now, we would like to bound the terms $\|\tbH - \hbH\|_F$, $\max_{i\in [r]}\|\tbH_{i,.} - \hbH_{i,.}\|_2$. 
Note that using the fact that $\hbH$ is feasible for Problem \eqref{eq:HardNoise}, we have
\begin{align}
\cuD(\bX_{i,.}, \hbH) \leq \delta^2\, .
\end{align}
Thus,
\begin{align}
\cuD((\bX_0)_{i,.},\hbH)^{1/2} \leq \|\bX_{i,.} - (\bX_0)_{i,.}\|_2 + \cuD(\bX_{i,.},\hbH)^{1/2} \leq 2\delta.
\end{align}
In addition, we know that $(\bX_0)_{i,.} \in \aff(\bH_0)$, where $\aff(\bH_0)$ is a $r-1$ dimensional affine subspace. 
Therefore, $\conv(\bX_0) \subseteq \aff(\bH_0)$ and, by convexity of $\Ball_{d}(2\delta,\hbH)$, we get
\begin{align}
\label{eq:Xi0inballcapaff}
\conv(\bX_0) \subseteq \Ball_{d}(2\delta,\hbH)\cap \aff(\bH_0).
\end{align}
First consider the case in which $\hbH_{i,.} \in \aff(\bH_0)$. for all $i\in\{1,2,\dots,r\}$.
By a perturbation argument, we can assume that the rows of $\hbH$ are linearly independent, and hence $\aff(\hbH) = \aff(\bH_0)$.
Consider $\tilde \bQ\in \reals^{r\times d}$ defined by
\begin{align}
&\widetilde Q_{ii} = 1+\xi, \quad\quad \text{if}\;\; i = j\in \{1,2,\dots,r\},\\
&\widetilde Q_{ij} = -\frac{\xi}{r-1},  \;\quad \text{if}\;\; i\neq j\in \{1,2,\dots,r\},\\
&\widetilde Q_{ij} = 0, \quad\quad\quad\quad \text{if}\;\; j\in\{r+1,r+2,\dots,d\}
\end{align}
where $\xi = 2r\delta_0$. Note that for every $\by\in \Ball_d(2\delta_0;\bE_{r,d}) \cap \aff(\bE_{r,d})$, 
we have $\cuD(\by,\bE_{r,d})^{1/2} \leq 2\delta_0$. In addition, since $\by \in \aff(\bE_{r,d})$, $\langle \by,\one\rangle = 1$. Hence, for 
$\by\in \Ball_d(2\delta_0;\bE_{r,d}) \cap \aff(\bE_{r,d})$, we can write
\begin{align}
\by =\bpi + \bx
\end{align}
where $\bpi \in \conv(\bE_{r,d})$, $\bx\in \reals^d$, $\langle\one,\bx\rangle = 0$, $\|\bx\|_2 \leq 2\delta_0$.
It is easy to check that for this $\by$ we have
\begin{align}
\by = \sum_{i=1}^{r}\beta_i\widetilde\bQ_{i,.}
\end{align}
where $\bbeta\in \reals^r$ is such that for $i = 1,2,\dots,r$,
\begin{align}
\beta_i = \frac{r-1}{r-1+\xi r}(\pi_i + x_i) + \frac{\xi}{r-1+\xi r}.
\end{align}
Further, note that since $\bpi \in \conv(\bE_{r,d})$, $\pi_i\geq 0$ and $x_i \geq -\|\bx\|_2 \geq -2\delta_0$, we have
$\pi_i + x_i \geq -2\delta_0$. Hence, 
for $i\in \{1,2,\dots,r\}$,
\begin{align}
\beta_i \geq \frac{-2\delta_0(r-1) + \xi}{r-1+\xi r} = \frac{2\delta_0}{r-1+\xi r} \geq 0.
\end{align}
In addition,
\begin{align}
\sum_{i = 1}^r \beta_i = \frac{r\xi}{r-1+\xi r} + \frac{r-1}{r-1+\xi r}\left(\sum_{i=1}^r(\pi_i + x_i) \right)= 1.
\end{align}
Therefore, every $\by\in \Ball_d(2\delta_0;\bE_{r,d}) \cap \aff(\bE_{r,d})$ can be written as a convex
combination of the rows of $\widetilde\bQ$.
Hence, 
\begin{align}
\Ball_{d}(2\delta_0;\bE_{r,d})\cap{\aff}(\bE_{r,d})\subseteq\conv(\widetilde\bQ).
\end{align}
Let $\hbH = \bE_{r,d}\bM$, $\bM\in \reals^{d\times d}$. 
Since $\aff(\hbH) = \aff(\bH_0)$, by taking
$\tbH = \widetilde\bQ\bM$, we have
\begin{align}
\conv(\tbH) &\supseteq \left[\cup_{\bx\in \conv(\bE_{r,d})}\bM^\sT \Ball_{d}(2\delta_0;\bx)\right] \cap{\aff} (\hbH)\\
&\supseteq \left[\cup_{\bx \in \conv(\hbH)}\Ball_{d}(2\delta_0\sigma_{\min}(\bM);\bx)\right]\cap{\aff} (\hbH)\\
&\supseteq \Ball_{d}(2\delta;\hbH)\cap{\aff}(\bH_0),
\end{align}
provided that $\delta_0 = \delta/\sigma_{\min}(\bM) = \delta/\sigma_{\min}(\hbH)$. Hence, using \eqref{eq:Xi0inballcapaff} for this $\delta_0$, 
$\conv(\bX_0)\subseteq \conv(\tbH)$. Note that for $\widetilde\bQ$, we have
$\|\widetilde\bQ_{i,.} - \be_i\|_2 \leq 2r\delta_0$. Thus,
\begin{align}
\|\widetilde\bQ- \bE_{r,d}\|_F \leq 2r^{3/2}\delta_0.
\end{align}
Therefore, there exists $\tbH\in \reals^{r\times d}$ such that $\conv(\bX_0) \subseteq \conv(\tbH)$ and
\begin{align*}
&\|\tbH - \hbH\|_F = \|(\widetilde\bQ - \bE_{r,d})\bM\|_F\leq2r^{3/2}\delta_0\sigma_{\max}(\bM) = 2r^{3/2}\delta_0\sigma_{\max}(\hbH) = 2r^{3/2}\delta\kappa(\hbH),\\
&\max_{i\in [r]}\|\tbH_{i,.} - \hbH_{i,.}\|_2 = \max_{i\in [r]} \|(\widetilde\bQ_{i,.} - \be_i)\bM\|_2 \leq 2r\delta_0\sigma_{\max}(\bM) = 2r\delta_0\sigma_{\max}(\hbH) = 2r\delta\kappa(\hbH).
\end{align*}
Now consider the general case in which $\aff(\hbH) \neq \aff(\bH_0)$. Let $\bH^\prime\in \reals^{r\times d}$
be such that $\bH^\prime_{i,.}$ is the 
projection of $\hbH_{i,.}$ onto $\aff(\bH_0)$. Assuming that the 
rows of $\bH^\prime$ are linearly independent, $\aff(\bH^\prime) = \aff(\bH_0)$.
Note that since $\conv(\bX_0) \in \aff(\bH_0)$, for every point $\bx \in \conv(\bX_0)$, 
$\cuD(\bx,\bH^\prime)^{1/2}\leq \cuD(\bx,\hbH)^{1/2} \leq 2\delta$. Thus,
\begin{align}
(\bX_0)_{i,.} \in \Ball_d(2\delta, \bH^\prime)\cap {\aff}(\bH^\prime).
\end{align}
Therefore, using the above argument for the case where $\aff(\hbH) = \aff(\bH_0)$, we can find $\tbH$ such that $\conv(\bX_0)\subseteq \conv(\tbH)$ and
\begin{align}
&\|\tbH - \bH^\prime\|_F \leq 2r^{3/2}\delta\kappa(\bH^\prime),\\
&\max_{i\in [r]}\|\tbH_{i,.} - \bH_{i,.}^\prime\|_2 \leq 2r\delta\kappa(\bH^\prime).  
\end{align}
Hence, for every $i=1,2,\dots,r$,
\begin{align}
\begin{split}
\label{eq:w*-wtilde1}
\|\tbH_{i,.} - \hbH_{i,.}\|_2 & \leq  \|\tbH_{i,.} -\bH^\prime_{i,.}\|_2 + \|\bH^\prime_{i,.} - \hbH_{i,.}\|_2  \\
& \leq 2r\delta\kappa(\bH^\prime) + \|\bP_0(\hbH_{i,.}) - \hbH_{i,.}\|_2\\
\end{split}
\end{align}
where $\bP_0$ orthogonal projection onto $\aff(\bH_0)$. We next use the assumption on the internal radius of $\conv(\bX_0)$ to upper bound the term 
$\|\bP_0(\hbH_{i,.}) - \hbH_{i,.}\|_2$. Note that since 
$\conv(\bX_0) \subseteq \Ball_d(2\delta,\hbH)$, letting $\bar\bH = \hbH - \one\bz_0^\sT$,
for some orthogonal matrix $\bU\in\reals^{d\times r^\prime}$, $r^\prime=r-1$, we have
\begin{align}
\max_{\|\bz\|_2\leq \mu}\min_{\langle \ba,\one\rangle= 1, \ba\geq 0}\|\bU\bz - \bar\bH^\sT\ba \|_2^2 &= 
\max_{\|\bz\|_2\leq \mu}\min_{\langle \ba,\one\rangle= 1, \ba\geq 0}\|\bU\bz - (\hbH - \one\bz_0^\sT)^\sT\ba \|_2^2\\ &\leq \max_{\|\bz\|_2\leq \mu}\min_{\langle \ba,\one\rangle= 1, \ba\geq 0}\|\bU\bz + \bz_0 - {\hbH}^\sT\ba\|_2^2
\leq 4\delta^2.
\end{align}
Now, using Cauchy-Schwarz inequality we can write
\begin{align}
\label{eq:maxoverzminovera}
\max_{\|\bz\|_2\leq \mu}\min_{\|\ba\|_2\leq 1}\|\bU\bz - {\bar\bH}^\sT\ba\|_2^2 \leq \max_{\|\bz\|_2\leq \mu}\min_{\langle \ba,\one\rangle= 1, \ba\geq 0}\|\bU\bz - \bar\bH^\sT\ba \|_2^2 \leq 4\delta^2.
\end{align}
Note that,
\begin{align}
\min_{\|\ba\|_2\leq 1}\|\bU\bz - \bar\bH^{\sT}\ba\|_2^2 &=\max_{\rho\geq 0}\min_{\ba}\left\{ \|\bz\|_2^2 - 2\left\langle\bz,\bU^\sT{\bar\bH}^\sT\ba\right\rangle + \left\langle\ba,({\bar\bH}{\bar\bH}^\sT+\rho\Id)\ba\right\rangle - \rho\right\}\\
&= \max_{\rho\geq 0} \left\{\|\bz\|_2^2 - \left\langle{\bar\bH}\bU\bz,({\bar\bH}{\bar\bH}^\sT+\rho\Id)^{-1}{\bar\bH}\bU\bz\right\rangle - \rho\right\}
\end{align}
Hence, using \eqref{eq:maxoverzminovera}
\begin{align}
\mu^2\max_{\rho\geq 0} \Big\{\lambda_{\max}(\Id - \bU^\sT{\bar\bH}^\sT(\bar\bH{\bar\bH}^\sT+\rho\Id)^{-1}\bar\bH\bU)-\rho \leq 4\delta^2\Big\}.
\end{align}
In particular, for $\rho = 0$ we get
\begin{align}
\mu^2 \lambda_{\max}(\Id - \bU^\sT{\bar\bH}^\sT(\bar\bH{\bar\bH}^\sT)^{-1}\bar\bH\bU) \leq 4\delta^2.
\end{align}
Taking $\bar\bH = \tilde\bU\bSigma\tilde\bV^\sT$, the singular value decomposition of $\bar\bH$, we have
$\sigma_{\max}(\bar\bH) = \sigma_{\max}(\hbH - \one\bz_0^\sT) = \max_{i}\Sigma_{ii}$. Letting
$\bU^\sT\tilde\bV = \bQ$, we get
\begin{align}
\max_{\rho\geq 0}\lambda_{\max}\left(\Id - \bQ\bQ^\sT\right)\leq \frac{4\delta^2}{\mu^2}.
\end{align}
Letting $q = \sigma_{\min}(\bQ)$, this results in
\begin{align}
\label{eq:assumpresult}
1-q^2 \leq \frac{4\delta^2}{\mu^2}.
\end{align}
In addition, note that, by the internal radius assumption, for any $\bz\in \reals^{r^\prime}$, $\bz_0 + \bU\bz \in \aff(\bH_0)$.
Further, since $\bz_0 \in \aff(\bH_0)$,
\begin{align}
\max_{i \in [r]}\|\bP_0(\hbH_{i,.}) - \hbH_{i,.}\|_2 &= \max_{i \in [r]}\|\bP_\bU(\bar\bH_{i,.}) - \bar\bH_{i,.}\|_2\\
&\leq \max_{\|\ba\|_2\leq 1}\|\bP_\bU(\bar\bH^\sT\ba) - \bar\bH^\sT\ba\|_2\\
&\leq\max_{\|\ba\|_2\leq 1}\|\bP_\bU(\bar\bH^\sT\ba) - \bar\bH^\sT\ba\|_2\\
&\leq\max_{\|\ba\|_2\leq 1}\min_{\bz}\|\bU\bz-{\bar\bH}^\sT\ba\|_2^2
\end{align}
where $\bP_\bU$ is the projector onto the column space of $\bU$. Note that,
\begin{align}
\max_{\|\ba\|_2\leq 1}\min_{\bz}\|\bU\bz-{\bar\bH}^\sT\ba\|_2^2 &= \max_{\|\ba\|_2\leq 1}\left\{-\left\langle\ba,\bar\bH\bU\bU^\sT{\bar\bH}^\sT\ba\right\rangle + \left\langle\ba,\bar\bH{\bar\bH}^\sT\ba\right\rangle\right\}\\
&= \lambda_{\max}(\bar\bH{\bar\bH}^\sT - \bar\bH\bU\bU^\sT{\bar\bH}^\sT)\\
&= \lambda_{\max}(\bSigma(\Id-\bQ^\sT\bQ)\bSigma)\\
&\leq \sigma_{\max}(\bar\bH)^2\lambda_{\max}(\Id-\bQ^\sT\bQ) \\
&\leq \sigma_{\max}(\bar\bH)^2(1-q^2)\leq \frac{4\sigma_{\max}(\bar\bH)^2\delta^2}{\mu^2}
\end{align}
where the last inequality follows from \eqref{eq:assumpresult}. This results in
\begin{align}
\max_{i \in [r]}\|\bP_0(\hbH_{i,.}) - \hbH_{i,.}\|_2 \leq \frac{2\sigma_{\max}(\bar\bH)\delta}{\mu} &= 
\frac{2\sigma_{\max}(\hbH - \one\bz_0^\sT)\delta}{\mu}
\end{align}
Therefore, $\|\bP_0(\hbH) - \hbH\|_F \leq 2\sigma_{\max}(\hbH - \one\bz_0^\sT)\delta\sqrt{r}/\mu$. Hence, using \eqref{eq:w*-wtilde1} we get
\begin{align}
&\max_{i\in[r]}\|\hbH_{i,.} - \tbH_{i,.}\|_2 \leq 2r\delta\kappa(\bP_0(\hbH)) + \frac{2\sigma_{\max}(\hbH - \one\bz_0^\sT)\delta}{\mu},\\
&\|\hbH - \tbH\|_F \leq 2r^{3/2}\delta\kappa(\bP_0(\hbH)) + \frac{2\sigma_{\max}(\hbH - \one\bz_0^\sT)\delta\sqrt{r}}{\mu}.
\end{align}
Replacing this in \eqref{eq:Cwhatw0} completes the proof.
\end{proof}

\subsubsection{Proof of Theorem \ref{thm:Robust2}}

For simplicity, let $\cuD = \alpha(\cuD(\hbH,\bH_0)^{1/2} + \cuD(\bH_0,\hbH)^{1/2})$. First note that under the assumption 
of Theorem \ref{thm:Robust2} we have
\begin{align}
\bz_0 + \bU\Ball_{r^\prime}(\mu)\subseteq \conv(\bX_0) \subseteq \conv(\bH_0).
\end{align}
Therefore, using  Lemma \ref{lemma:condition2} with $\bH=\bH_0$ and $\delta=0$, we have
\begin{align}
\label{eq:sigmaminH0overmu}
\mu\sqrt{2} \leq \sigma_{\min}(\bH_0) \leq \sigma_{\max}(\bH_0).
\end{align}
In addition, since $\bz_0 \in \conv(\bH_0)$ we have $\bz_0 = \bH_0^\sT\balpha_0$ for some $\balpha_0 \in \Delta^r$.
Therefore,
\begin{align}\label{eq:sigmamaxH0normz0}
\|\bz_0\|_2 \leq \sigma_{\max}(\bH_0)\|\balpha_0\|_2 \leq \sigma_{\max}(\bH_0).
\end{align}
Note that
\begin{align}
\sigma_{\max}(\hbH - \one\bz_0^\sT) \leq \sigma_{\max}(\hbH) + \sigma_{\max}(\one\bz_0^\sT) = 
\sigma_{\max}(\hbH) + \sqrt{r}\|\bz_0\|_2.
\end{align}
Therefore, using Lemma \ref{lemma:boundc} we have
\begin{align}
\label{eq:Clessthan2(1+alpha1+alpha2)}
\cuD\leq 2(1+2\alpha) \left(r^{3/2}\delta\kappa(\bP_0(\hbH)) + \frac{\sigma_{\max}(\hbH)\delta r^{1/2}}{\mu} + \frac{r\delta\|\bz_0\|_2}{\mu}\right) + 3\delta r^{1/2}.
\end{align}
In addition, Lemma \ref{lemma:dandc} implies that 
\begin{align}
\label{eq:DH0HhatCH0Hhat}
\cuL(\bH_0,\hbH)^{1/2} \leq \frac{1}{\alpha}\max\left\{{(1+\sqrt{2})\sqrt{r}},{\sqrt{2}\kappa(\bH_0)}\right\}\cuD\, .
\end{align}
Further, let $\bP_0$ denote the orthogonal projector on $\aff(\bH_0)$. Hence, $\bP_0$ is a non-expansive 
mapping: for $\bx, \by \in \reals^d$, $D(\bP_0(\bx),\bP_0(\by))\le D(\bx,\by)$. Therefore, 
since $\conv(\bH_0) \subset \aff(\bH_0)$, for any $\bh \in \reals^d$
\begin{align}
\cuD (\bP_0(\bh), \bH_0) \le D(\bP_0(\bh), \bP_0(\bPi_{\conv(\bH_0)}(\bh))) \le D(\bh, \bPi_{\conv(\bH_0)}(\bh)) = \cuD(\bh, \bH_0).
\end{align}
Therefore, 
\begin{align}\label{eq:cudprojection}
\cuD(\bP_0(\hbH),\bH_0) \le \cuD(\hbH,\bH_0).
\end{align}
First consider the case in which
\begin{align}
\label{eq:conddeltamain}
\delta \leq \frac{\alpha\mu}{30\, r^{3/2}}.
\end{align}
Note that in this case $\delta \leq \mu/2$. Hence, using Lemma \ref{lemma:condition} to 
upper bound $\sigma_{\max}(\hbH)$, $\sigma_{\max}(\bP_0(\hbH))$ and Lemma \ref{lemma:condition2}
to lower bound $\sigma_{\min}(\bP_0(\hbH))$, by \eqref{eq:cudprojection},
we get
\begin{align}
&\sigma_{\max}(\hbH) \leq \cuD(\hbH,\bH_0)^{1/2} + r^{1/2}\sigma_{\max}(\bH_0) \leq \frac{\cuD}{\alpha} + r^{1/2}\sigma_{\max}(\bH_0),\\
&\kappa(\bP_0(\hbH))= \frac{\sigma_{\max}(\bP_0(\hbH))}{\sigma_{\min}(\bP_0(\hbH))}\leq \frac{\cuD(\bP_0(\hbH),\bH_0)^{1/2} + r^{1/2}\sigma_{\max}(\bH_0)}{\sqrt{2}(\mu - 2\delta)} \nonumber\\
&\;\;\;\;\;\;\;\;\;\;\;\;\;\;\;\;\leq\frac{\cuD(\hbH,\bH_0)^{1/2} + r^{1/2}\sigma_{\max}(\bH_0)}{\sqrt{2}(\mu - 2\delta)}\leq \frac{\cuD}{\alpha(\mu - 2\delta)\sqrt{2}} + \frac{r^{1/2}\sigma_{\max}(\bH_0)}{(\mu-2\delta)\sqrt{2}}.
\end{align}

Replacing these in \eqref{eq:Clessthan2(1+alpha1+alpha2)} we have
\begin{align}
\cuD \leq 2(1+2\alpha)\left[\frac{r^{3/2}\cuD\delta}{\alpha(\mu-2\delta)\sqrt{2}} + \frac{r^2\sigma_{\max}(\bH_0)\delta}{(\mu-2\delta)\sqrt{2}}+\frac{\cuD r^{1/2}\delta}{\alpha\mu} + \frac{r\sigma_{\max}(\bH_0)\delta}{\mu}+ \frac{r\|\bz_0\|_2\delta}{\mu}\right] + 3\delta\sqrt{r}.
\end{align}
Therefore, 
\begin{align}
\cuD&\left[1-\frac{\sqrt{2}(1+2\alpha)r^{3/2}\delta}{\alpha(\mu-2\delta)}-\frac{2(1+2\alpha) r^{1/2}\delta}{\alpha\mu}\right]\nonumber\\
&\;\;\;\;\;\;\;\leq 2(1+2\alpha)\left[\frac{r^2\sigma_{\max}(\bH_0)\delta}{(\mu-2\delta)\sqrt{2}} + \frac{r\sigma_{\max}(\bH_0)\delta}{\mu} + \frac{r\|\bz_0\|_2\delta}{\mu}\right]+3\delta\sqrt{r}
\end{align}
Notice that condition \eqref{eq:conddeltamain} implies that $\mu-2\delta \geq \mu/2$ and
\begin{align}
\frac{\sqrt{2}(1+2\alpha)r^{3/2}\delta}{\alpha(\mu-2\delta)}+\frac{2(1+2\alpha) r^{1/2}\delta}{\alpha\mu} \leq \frac{1}{2}. 
\end{align}
Using the previous two equations, under condition \eqref{eq:conddeltamain} we have
\begin{align}
\cuD&\leq \frac{4(1+2\alpha)r\delta}{\mu}\left[\frac{5r\sigma_{\max}(\bH_0)}{2} + \|\bz_0\|_2\right] +3\delta\sqrt{r}\nonumber\\
&\leq\frac{4(1+2\alpha)r^2}{\mu}\left[\frac{5\sigma_{\max}(\bH_0)}{2}+\frac{\|\bz_0\|_2}{r} + \frac{3\mu}{4(1+2\alpha)r^{3/2}}\right]\delta.
\label{eq:cdeltamain}
\end{align}
Combining this with \eqref{eq:DH0HhatCH0Hhat}, and using the fact that $1+2\alpha \leq 3$, we have under condition \eqref{eq:conddeltamain}
\begin{align}
\cuL(\bH_0,\hbH)^{1/2}&\le  \frac{12r^2}{\mu\alpha}\left(\frac{5\sigma_{\max}(\bH_0)}{2}+\frac{\|\bz_0\|_2}{r} + \frac{3\mu}{4(1+2\alpha)r^{3/2}}\right)\max\left\{(1+\sqrt{2})\sqrt{r}, \sqrt{2}\kappa(\bH_0)\right\}\delta\\
&\le \frac{29\sigma_{\max}(\bH_0)r^{5/2}}{\alpha\mu}\max\left\{1,\frac{\kappa(\bH_0)}{\sqrt{r}}\right\}\left(\frac{5}{2}+\frac{\|\bz_0\|_2}{r\sigma_{\max}(\bH_0)} + \frac{3\mu}{4(1+2\alpha)r^{3/2}\sigma_{\max}(\bH_0)}\right)\delta.
\end{align}
Note that using \eqref{eq:sigmaminH0overmu}, \eqref{eq:sigmamaxH0normz0} and since $\alpha \ge 0$
\begin{align}
\frac{\|\bz_0\|_2}{r\sigma_{\max}(\bH_0)} \leq 1, \quad\quad\quad \frac{3\mu}{4(1+2\alpha)r^{3/2}\sigma_{\max}(\bH_0)} \le \frac{3}{4\sqrt{2}}.
\end{align}
Therefore,
\begin{align}
\cuL(\bH_0,\hbH)^{1/2} \leq \frac{120\sigma_{\max}(\bH_0)r^{5/2}}{\alpha\mu}\max\left\{1,\frac{\kappa(\bH_0)}{\sqrt{r}}\right\}\delta.
\end{align}
Thus,
\begin{align}
\cuL(\bH_0,\hbH)\le \frac{C_*^2\,  r^{5}}{\alpha^2} \max_{i\le n} \|\bZ_{i,\cdot}\|^2_2\, ,
\end{align}
where $C_*$ is defined in Theorem \ref{thm:Robust2}. 

Next, consider the case in which
\begin{align}
\label{eq:conddelta2}
\delta = \max_{i\le n} \|\bZ_{i,\cdot}\|_2\le \frac{\alpha\mu}{330\kappa(\bH_0)r^{5/2}},
\end{align}
Note that using \eqref{eq:sigmaminH0overmu},\eqref{eq:sigmamaxH0normz0}
and since $1+2\alpha \leq 3$, this condition on $\delta$ implies that
\begin{align}
\delta \leq \frac{\alpha\mu\sigma_{\min}(\bH_0)}{12r(1+2\alpha)(5r^{3/2}\sigma_{\max}(\bH_0)+2\|\bz_0\|_2r^{1/2}+3\mu)}.
\end{align}
In particular, condition \eqref{eq:conddeltamain} holds. Hence, using equation \eqref{eq:cdeltamain} we get
\begin{align}\label{eq:cudineqcase2}
\cuD \leq \frac{4(1+2\alpha)r^2}{\mu}\left[\frac{5\sigma_{\max}(\bH_0)}{2}+\frac{\|\bz_0\|_2}{r} + \frac{3\mu}{4(1+2\alpha)r^{3/2}}\right]\delta \leq \frac{\alpha\sigma_{\min}(\bH_0)}{6\sqrt{r}}.
\end{align}
Further, note that since $\bP_0$ is a projection onto an affine subspace, for $\bx\in \reals^d$,
$\bP_0(\bx) = \widetilde{\bP_0} \bx + \bx_0$ for some $\widetilde{\bP_0} \in \reals^{d \times d}, \bx_0 \in \reals^d$. Hence, 
for any $\bpi \in \Delta^r$, $\bh = \hbH^\sT \bpi \in \conv(\hbH)$, we have
\begin{align}
\bP_0(\bh) = \widetilde{\bP_0}\bh + \bx_0 = \widetilde{\bP_0}\hbH^\sT\bpi + \bx_0 = \sum_{i=1}^r \pi_i\left(\widetilde{\bP_0}\hbH^\sT {\boldsymbol{e}}_i + \bx_0\right) = \sum_{i= 1}^r \pi_i \bP_0(\widehat{\bh_i}) \in \conv(\bP_0(\hbH))
\end{align}
where ${\boldsymbol{e}}_i$ is the $i$'th standard unit vector. Hence, 
\begin{align}
\bP_0(\conv(\hbH)) \subseteq \conv (\bP_0(\hbH)).
\end{align}
Thus, for $\bh_0 \in \reals^d$ an arbitrary row of $\bH_0$, we have
\begin{align}
\cuD(\bh_0,\bP_0(\hbH)) = D(\bh_0,\conv(\bP_0(\hbH))) \leq D(\bh_0, \bP_0(\conv(\hbH))) \leq D(\bh_0, \bP_0(\bPi_{\conv(\hbH)}(\bh_0))).
\end{align}
In addition, using non-expansivity of $\bP_0$, we have
\begin{align}
D(\bh_0, \bP_0(\bPi_{\conv(\hbH)}(\bh_0))) \leq D(\bh_0, \bPi_{\conv(\hbH)}(\bh_0)) = D(\bh_0, \conv(\hbH)) = \cuD(\bh_0,\hbH).
\end{align}
This implies that
\begin{align}\label{eq:cudprojection2}
\cuD(\bH_0,\bP_0(\hbH))\le \cuD(\bH_0,\hbH).
\end{align}
Therefore, using \eqref{eq:cudprojection}, \eqref{eq:cudprojection2} and \eqref{eq:cudineqcase2} we get 
\begin{align}
\cuD(\bH_0,\bP_0(\hbH))^{1/2} + \cuD(\bP_0(\hbH),\bH_0)^{1/2} \leq \cuD(\bH_0,\hbH)^{1/2} + \cuD(\hbH,\bH_0)^{1/2} \leq \frac{\cuD}{\alpha}\leq \frac{\sigma_{\min}(\bH_0)}{6\sqrt{r}}.
\end{align}
Hence, in this case Lemma \ref{lemma:condition} implies that
\begin{align}
&\sigma_{\max}(\hbH) \leq 2\sigma_{\max}(\bH_0),\\
&\kappa(\bP_0(\hbH)) \leq \frac{7\kappa(\bH_0)}{2}.
\end{align}
Replacing this in \eqref{eq:Clessthan2(1+alpha1+alpha2)}, we have
\begin{align}
\label{eq:hdeltadef}
\cuD &\leq (1+2\alpha)r^{1/2}\left(7r\delta\kappa(\bH_0) + \frac{4\sigma_{\max}(\bH_0)\delta + 2\sqrt{r}\|\bz_0\|_2\delta}{\mu}\right) + 3\delta r^{1/2}\\
&\leq 3\delta\sqrt{r}\left(8r\kappa(\bH_0)+ \frac{4\sigma_{\max}(\bH_0) + 2\sqrt{r}\|\bz_0\|_2}{\mu}\right)
\end{align}
Hence, using \eqref{eq:DH0HhatCH0Hhat} under assumption \eqref{eq:conddelta2}, we have
\begin{align}
\cuL(\bH_0,\hbH)^{1/2} &\leq 3\sqrt{r}\max\left\{{(1+\sqrt{2})\sqrt{r}},{\sqrt{2}\kappa(\bH_0)}\right\}\left(8r\kappa(\bH_0)+ \frac{4\sigma_{\max}(\bH_0) + 2\sqrt{r}\|\bz_0\|_2}{\mu}\right)\frac{\delta}{\alpha}\\
&\leq 120\max\left\{1,\frac{\kappa(\bH_0)}{\sqrt{r}}\right\}\max\left\{r\kappa(\bH_0), \frac{\sigma_{\max}(\bH_0)+\sqrt{r}\|\bz_0\|_2}{\mu}\right\}
\frac{r\delta}{\alpha}.
\end{align}
Hence, for $C_*^{\prime\prime}$ as defined in the statement of the theorem, we get
\begin{align}
\cuL(\bH_0,\hbH)^{1/2}\le \frac{C_*^{\prime\prime}\,r}{\alpha} \max_{i\le n} \|\bZ_{i,\cdot}\|_2\, 
\end{align}
This completes the proof.

%
%
\section{Proof of Proposition \ref{propo:Subdiff}}
\label{app:Subdiff}

The proof follows immediately from the following two propositions.
\begin{proposition}
\label{prop:subgradient1}
Let $\bX\in \reals^{n\times d}$ and $D(\bx,\by) = \|\bx-\by\|_2^2$. Then the gradient of the function $\bu\mapsto \cuD(\bu,\bX)$ 
is given by
\begin{align}
\nabla_{\bu} \cuD(\bu,\bX) =  2(\bu - \bPi_{\conv(\bX)}(\bu)) \, . \label{eq:GradientD}
\end{align}
\end{proposition}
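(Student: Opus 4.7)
The plan is to recognize $\cuD(\bu,\bX)$ as the squared Euclidean distance from $\bu$ to the closed convex set $C=\conv(\bX)\subseteq\reals^d$, and then invoke the classical fact that the squared distance to a closed convex set is continuously differentiable, with gradient $2(\bu-\bPi_C(\bu))$. Concretely, since $\Delta^n$ is compact and the map $\bpi\mapsto\|\bu-\bX^{\sT}\bpi\|_2^2$ is continuous and convex, the minimum is attained and equals $\|\bu-\bPi_C(\bu)\|_2^2$, where $\bPi_C(\bu)$ is the unique closest point in $C$ (uniqueness follows from strict convexity of the squared norm together with convexity of $C$). So we may write $\cuD(\bu,\bX)=\|\bu-\bPi_C(\bu)\|_2^2$, and the statement reduces to a purely geometric differentiability claim about the squared distance.

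To establish the gradient formula, I would use a simple sandwich argument based on sub-optimality. Fix $\bu$, let $\bq=\bPi_C(\bu)$, and for $\bh\in\reals^d$ let $\bq'=\bPi_C(\bu+\bh)$. For the upper bound, $\bq\in C$ is feasible for the projection of $\bu+\bh$, so
\begin{align}
\cuD(\bu+\bh,\bX)\le\|\bu+\bh-\bq\|_2^2=\cuD(\bu,\bX)+2\langle\bh,\bu-\bq\rangle+\|\bh\|_2^2.
\end{align}
For the lower bound, since $\bq'$ is feasible for the projection of $\bu$,
\begin{align}
\cuD(\bu+\bh,\bX)-\cuD(\bu,\bX)\ge\|\bu+\bh-\bq'\|_2^2-\|\bu-\bq'\|_2^2=2\langle\bh,\bu-\bq'\rangle+\|\bh\|_2^2.
\end{align}

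The only remaining ingredient is that $\bq'\to\bq$ as $\bh\to 0$, i.e.\ continuity of the metric projection onto a closed convex set in $\reals^d$. This is standard (the projection is in fact $1$-Lipschitz by the non-expansivity inequality $\|\bPi_C(\bx)-\bPi_C(\by)\|_2\le\|\bx-\by\|_2$, which in turn follows from the variational characterization $\langle\by-\bPi_C(\bx),\bz-\bPi_C(\bx)\rangle\le 0$ for all $\bz\in C$). Combining this continuity with the two displayed bounds yields
\begin{align}
\cuD(\bu+\bh,\bX)-\cuD(\bu,\bX)=2\langle\bh,\bu-\bq\rangle+o(\|\bh\|_2),
\end{align}
which by definition gives $\nabla_{\bu}\cuD(\bu,\bX)=2(\bu-\bPi_{\conv(\bX)}(\bu))$, as claimed.

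There is no real obstacle here beyond invoking the standard convex-analytic facts (existence and uniqueness of the projection, non-expansivity, hence continuity); the sandwich between the two one-sided estimates is the only computation needed. The result could equivalently be cited directly from a reference such as Mordukhovich--Nam (already in the bibliography), but writing out the two-line sandwich keeps the proof self-contained.
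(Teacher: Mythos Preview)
Your proof is correct. It takes a genuinely different route from the paper, however. The paper proceeds via Lagrangian duality: it writes $\cuD(\bu,\bX)$ as the optimal value of a convex program, forms the dual, and by strong duality expresses $\cuD(\bu,\bX)$ as a pointwise supremum of functions affine in $\bu$; the gradient is then read off as the (unique, by strong convexity) dual optimizer $\brho^*=2(\bu-\by^*)$ via a Danskin-type argument, citing \cite{mordukhovich2013easy}. Your approach is more elementary and self-contained: you sandwich the first-order increment directly, using feasibility of $\bq=\bPi_C(\bu)$ and $\bq'=\bPi_C(\bu+\bh)$ for each other's projection problem, and close the gap with non-expansivity of $\bPi_C$. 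The duality route in the paper plugs into a general envelope-theorem framework that would extend to other losses or constraint structures; your argument needs nothing beyond the metric projection and is arguably the cleaner proof for this specific statement. One small slip: in your variational characterization of the projection you wrote $\langle\by-\bPi_C(\bx),\bz-\bPi_C(\bx)\rangle\le 0$ where you meant $\langle\bx-\bPi_C(\bx),\bz-\bPi_C(\bx)\rangle\le 0$; this does not affect the argument since non-expansivity is standard.
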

\begin{proof}
Note that $\cuD(\bu,\bX)$ is the solution of the following convex optimization problem.
\begin{align}
\label{eq:subgradopt}
\begin{split}
&\mbox{minimize}\quad \left\| \bu - \by\right\|_2^2,\\
&\mbox{subject to}\quad \by = \bX^\sT \bpi,\\
&\quad\quad\quad\quad\quad\;\bpi \geq 0,\\
&\quad\quad\quad\quad\quad\;\langle \bpi, \one \rangle = 1.
\end{split}
\end{align}
The Lagrangian for this problem is
\begin{align}
\mathcal L (\by,\bpi,\brho,\tilde\rho,\blambda) = \|\bu - \by\|_2^2 + \left\langle \brho,(\by-\bX^\sT\bpi)\right\rangle - \langle \blambda,\bpi\rangle + \tilde\rho(1-\langle \bpi,\one\rangle).
\end{align}
The KKT condition implies that at the minimizer $(\by^*,\bpi^*,\brho^*,\tilde\rho^*,\blambda^*)$, we have
\begin{align}
\frac{\partial \mathcal L}{\partial \by} = 0\, ,
\end{align}
and therefore
\begin{align}
\label{eq:kkt1}
\brho^* = 2(\bu - \by^*)
\end{align}
and the dual of the above optimization problem is
\begin{align}
\label{eq:subgraddual1}
\begin{split}
&\mbox{maximize}\quad  -\frac{1}{4}\|\brho\|_2^2 + \left\langle \brho, \bu\right\rangle + \tilde\rho,\\
&\mbox{subject to}\quad \blambda \geq 0,\\
&\quad\quad\quad\quad\quad\; \bX\brho + \tilde\rho\one+\blambda = 0.
\end{split}
\end{align}
Note that since \eqref{eq:subgradopt} is strictly feasible,
Slater condition holds and by strong duality the optimal value of \eqref{eq:subgraddual1} is equal to $f(\bu)$. Hence,
we have written $f(\bu)$ as pointwise supremum of functions. Therefore, subgradient 
of $f(\bu)$ can be achieved by taking the derivative of the objective function in \eqref{eq:subgraddual1} at the 
optimal solution (see Section 2.10 in \cite{mordukhovich2013easy}). Note that the derivative of this objective function at
the optimal solution is equal to $\brho^*= 2(\bu - \by^*) = 2(\bu - \bPi_{\conv(\bX)}(\bu))$ (where we used Eq.~\eqref{eq:kkt1}).
Since the dual optimum is unique (by strong convexity in $\brho$), the function $\bu\mapsto \cuD(\bu,\bX)$ is differentiable with gradient 
given by Eq.~(\ref{eq:GradientD}).
\end{proof}

\begin{proposition}
\label{prop:subgradient2}
Let $\bu\in \reals^{d}$ and $D(\bx,\by) = \|\bx-\by\|_2^2$, and assume that the rows of $\bH_0\in\reals^{r\times d}$ are affine independent.  
Then the function $\bH\mapsto \cuD(\bu,\bH)$ is differentiable at $\bH_0$ with gradient
\begin{align}
\nabla_{\bH}\cuD(\bu,\bH_0) = 
2\bpi_0(\bPi_{\conv(\bH_0)}(\bu) - \bu)^{\sT},\;\;\;\;\;\;\;
\bpi_0 = \arg\min_{\bpi \in \Delta^r} \left\|\bH_0^\sT\bpi - \bu\right\|^2_2\, . \label{eq:GradientFormula}
\end{align}
\end{proposition}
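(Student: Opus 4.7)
The plan is to apply Danskin's envelope theorem to the parametric minimization $\cuD(\bu,\bH) = \min_{\bpi \in \Delta^r} g(\bH,\bpi)$ with $g(\bH,\bpi) \equiv \|\bH^\sT\bpi - \bu\|_2^2$. The function $g$ is jointly continuous on $\reals^{r\times d}\times\Delta^r$, polynomial (hence $C^1$) in $\bH$ for each fixed $\bpi$, with jointly continuous partial gradient
\begin{align*}
\nabla_{\bH}\, g(\bH,\bpi) = 2\bpi(\bH^\sT\bpi - \bu)^\sT,
\end{align*}
and the feasible set $\Delta^r$ is compact. Danskin's theorem then states that $\cuD(\bu,\cdot)$ is directionally differentiable, with Fr\'echet derivative $\nabla_{\bH}g(\bH_0,\bpi_0)$ whenever the set of minimizers at $\bH_0$ reduces to the single point $\{\bpi_0\}$.

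The crux is therefore to use affine independence of the rows $\bh_{0,1},\dots,\bh_{0,r}$ of $\bH_0$ to establish uniqueness of this minimizer. If $\bpi,\bpi' \in \Delta^r$ both minimize $g(\bH_0,\cdot)$, then $\bH_0^\sT\bpi$ and $\bH_0^\sT\bpi'$ both equal the unique Euclidean projection $\bPi_{\conv(\bH_0)}(\bu)$, so $\bq \equiv \bpi - \bpi'$ satisfies $\bH_0^\sT\bq = 0$ and $\langle \bq,\one\rangle = 0$. Writing $q_1 = -\sum_{i \geq 2} q_i$, the relation $\bH_0^\sT\bq = 0$ becomes $\sum_{i \geq 2} q_i(\bh_{0,i} - \bh_{0,1}) = 0$. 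Affine independence of the rows is exactly linear independence of $\bh_{0,2}-\bh_{0,1},\dots,\bh_{0,r}-\bh_{0,1}$, which forces $\bq = 0$ and hence $\bpi = \bpi'$.

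Combining the two ingredients yields differentiability of $\cuD(\bu,\cdot)$ at $\bH_0$ with
\begin{align*}
\nabla_{\bH}\cuD(\bu,\bH_0) = 2\bpi_0(\bH_0^\sT\bpi_0 - \bu)^\sT = 2\bpi_0(\bPi_{\conv(\bH_0)}(\bu) - \bu)^\sT,
\end{align*}
the second equality using $\bH_0^\sT\bpi_0 = \bPi_{\conv(\bH_0)}(\bu)$, which is just the definition of $\bpi_0$. This matches Eq.~\eqref{eq:GradientFormula}. The only genuine obstacle is the affine-independence-to-uniqueness argument; the remainder is a standard envelope-theorem calculation. An alternative more in the spirit of the duality-based proof of Proposition \ref{prop:subgradient1} would be to dualize the inner problem and read the gradient off the unique dual optimum, but the Danskin route is more direct because $g$ is smooth in $\bH$.
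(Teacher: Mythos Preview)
Your proof is correct. The paper takes a closely related but more hands-on route: rather than invoking Danskin's theorem, it carries out the underlying sandwich argument directly. It bounds $\cuD(\bu,\bH_0+\bV)-\cuD(\bu,\bH_0)$ from above by plugging the suboptimal $\bpi_0$ into $g(\bH_0+\bV,\cdot)$, from below by plugging the suboptimal $\bpi_{\bV}$ into $g(\bH_0,\cdot)$, and then shows the residual is $o(\|\bV\|_F)$ using $\bpi_{\bV}\to\bpi_0$ as $\bV\to 0$. Both proofs hinge on affine independence in exactly the same way---to force a unique minimizer $\bpi_0$---and in fact you spell out that step more explicitly than the paper does. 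Your route is shorter and more conceptual; the paper's is self-contained and makes the Fr\'echet estimate explicit, which sidesteps the (minor) care needed in the general Danskin statement to upgrade directional to full differentiability.
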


\begin{proof}
We will denote by $\bG$ the right hand side of Eq.~(\ref{eq:GradientFormula}). 
For $\bV\in\reals^{r\times d}$, we have
\begin{align}
\cuD(\bu,\bH_0+\bV) = \min_{\bpi \in \Delta^r} \left\|(\bH_0+\bV)^\sT\bpi - \bu\right\|^2_2\, .
\end{align}
Note that $(\bH_0+\bV)$ has affinely independent rows for $\bV$ in a neighborhood of $\bzero$, and hence
has a unique minimizer there, that we will denote by $\bpi_{\bV}$. By optimality of $\bpi_{\bV}$, we have
\begin{align}
\cuD(\bu,\bH_0+\bV)  -\cuD(\bu,\bH_0)& =  \left\|(\bH_0+\bV)^\sT\bpi_{\bV} - \bu\right\|^2_2-\left\|(\bH_0+\bV)^\sT\bpi_0 - \bu\right\|^2_2\\
& \le \left\|(\bH_0+\bV)^\sT\bpi_0 - \bu\right\|^2_2-\left\|(\bH_0+\bV)^\sT\bpi_0 - \bu\right\|^2_2\\
& = \<\bG,\bV\>+\|\bV\bpi_0\|_2^2.
\end{align}
On the other hand, by optimality of $\bpi_0$, 
\begin{align}
\cuD(\bu,\bH_0+\bV)  -\cuD(\bu,\bH_0) & \ge  \left\|(\bH_0+\bV)^\sT\bpi_{\bV} - \bu\right\|^2_2-\left\|(\bH_0+\bV)^\sT\bpi_\bV - \bu\right\|^2_2\\
&= \<2\bpi_\bV(\bPi_{\conv(\bH_0)}(\bu) - \bu)^{\sT},\bV\>+ +\|\bV\bpi_\bV\|_2^2\\
& =  \<\bG,\bV\>+2\<(\bpi_\bV-\bpi_0) (\bPi_{\conv(\bH_0)}(\bu) - \bu)^{\sT},\bV\>+\|\bV\bpi_\bV\|_2^2\, .
\end{align}
Letting $R(\bV) = |\cuD(\bu,\bH_0+\bV)  -\cuD(\bu,\bH_0)-\<\bG,\bV\>|$ denote the residual, we get
\begin{align}
\frac{R(\bV)}{\|\bV\|_F}\le \|\bPi_{\conv(\bH_0)}(\bu) - \bu\|_2\|\bpi_{\bV}-\bpi_0\|_2 +\|\bV\|_F(\|\bpi_\bV\|_2+\|\bpi_{0}\|_2)\, .
\end{align}
Note that $\bpi_{\bV}$ must converge to $\bpi_0$ as $\bV\to 0$ because $\bpi_0$ is the unique minimizer 
for $\bV=\bzero$. Hence we get $R(\bV)/\|\bV\|_F\to 0$ as $\|\bV\|_F\to 0$, which proves our claim.
\end{proof}
%
%
\section{Proof of Proposition \ref{propo:PALM}}
\label{app:PALM}

We use the results of \cite{bolte2014proximal} to prove Proposition \ref{propo:PALM}. 
We refer the reader to \cite{bolte2014proximal} for the definitions of the technical terms in this section.
First, consider the function 
\begin{align}\label{eq:f(H)definition}
f(\bH) = \lambda\cuD(\bH,\bX).
\end{align}
Note that using the main theorem of polytope theory (Theorem 1.1 in \cite{ziegler2012lectures}), we can write
\begin{align}
\conv(\bX) = \left\{\bx \in \reals^d \,|\, \langle \ba_i,\bx\rangle \leq b_i \;\; \text{for} \;\; 1\leq i\leq m\right\}
\end{align} 
for some $\ba_i \in \reals^d$, $b_i\in \reals$ and a finite $m$. Hence, using the definition
of the semi-algebraic sets (see Definition 5 in \cite{bolte2014proximal}), the set $\conv(\bX)$ is semi-algebraic.
Therefore, the function $f(\bH)$ which is proportional to the sum of squared $\ell_2$ distances of the rows of $\bH$ from a 
semi-algebraic set, is a semi-algebraic function (See Appendix in \cite{bolte2014proximal}). Further, the function 
\begin{align}\label{eq:g(W)definition}
g(\bW) = \sum_{i=1}^n \Ind\left(\bw_i\in \Delta^r\right)
\end{align}
is the sum of indicator functions of semi-algebraic sets
(Note that using the same argument used for $\conv(\bX)$, $\Delta^r$ is semi-algebraic). Therefore, 
the function $g$ is semi-algebraic (See Appendix in \cite{bolte2014proximal}). In addition,
the function 
\begin{align}\label{eq:h(H,W)definition}
h(\bH,\bW)  = \left\|\bX - \bW\bH\right\|_F^2
\end{align}
is a polynomial. Hence, it is semi-algebraic. Therefore, we deduce that the function
\begin{align}\label{eq:Psi(H,W)definition}
\Psi (\bH,\bW) = f(\bH) + g(\bW) + h(\bH,\bW)
\end{align}
is semi-algebraic. In addition, since $\Delta^r$ is closed, $\Psi$ is proper and lower semi-continuous. Therefore, $\Psi(\bH,\bW)$ 
is a KL function (See Theorem 3 in \cite{bolte2014proximal}).

Now, we will show that the Assumptions 1,2 in \cite{bolte2014proximal} hold for our algorithm. First, note that since $\Delta^r$ is closed,
the functions $f(\bH)$ and $g(\bW)$ are proper and lower semi-continuous. 
Further, $f(\bH)\ge 0,\, g(\bW)\ge 0,\, h(\bH,\bW)\ge 0$
for all $\bH\in \reals^{r\times d},\, \bW\in \reals^{n\times r}$. In addition, the function $h(\bH,\bW)$ is 
$C^2$. Therefore, it is Lipschitz continuous over the bounded subsets of $\reals^{r\times d}\times\reals^{n\times r}$. Also, 
the partial derivatives of $h(\bH,\bW)$ are
\begin{align}
&\nabla_\bH h(\bH,\bW) = 2\bW^\sT(\bW\bH-\bX),\\
&\nabla_\bW h(\bH,\bW) = 2(\bW\bH - \bX)\bH^\sT.
\end{align}
It can be seen that for any fixed $\bW$, the function $\bH \mapsto \nabla_\bH h(\bH,\bW)$ is Lipschitz continuous with moduli 
$L_1(\bW) = 2\|\bW^\sT\bW\|_F$. 
Similarly, for any fixed $\bH$, 
the function $\bW \mapsto \nabla_\bW h(\bH,\bW)$ is 
Lipschitz continuous with moduli $L_2(\bH) = 2\|\bH\bH^\sT\|_F$. Note that since in each iteration 
of the algorithm the rows of $\bW^k$ are in $\Delta^r$. Hence, 
\begin{align}
\inf\left\{L_1(\bW^k):k\in \mathbb{N}\right\}\ge \lambda_1^-\,\quad\quad \sup\left\{L_1(\bW^k):k\in \mathbb{N}\right\}\le \lambda_1^+\,
\end{align}
for some some positive constants $\lambda_1^-,\, \lambda_1^+$.
In addition, note that because the PALM algorithm is a descent algorithm,
i.e., $\Psi(\bH^{k}, \bW^{k}) \leq \Psi(\bH^{k-1}, \bW^{k-1})$ for $k \in \mathbb N$, and since $f(\bH)\to \infty$ as $\|\bH\|_F\to \infty$,
the value of $L_2(\bH^k) = \|\bH^k\bH^{k^\sT}\|_F$ remains bounded in every iteration. Finally, note that by taking 
$\gamma_2^k > \max\left\{\left\|\bH^{k+1}\bH^{k+1^\sT}\right\|_F,\eps\right\}$ for some constant $\eps>0$, we make 
sure that the steps in the PALM algorithm remain well defined (See Remark 3(iii) in \cite{bolte2014proximal}). Hence, we have shown 
that the assumptions of Theorem 1 in \cite{bolte2014proximal} hold. Therefore, using this theorem, the 
sequence $\left\{\bH^k,\, \bW^k\right\}_{k\in \mathbb N}$ generated by the iterations in \eqref{eq:PALMITER1} - \eqref{eq:PALMITER3}
has a finite length and it converges to a stationary point $\left(\bH^*,\bW^*\right)$ of $\Psi$.
%
\section{Other optimization algorithms}
\label{app:algo}

Apart from the proximal alternating linearized minimization discussed in Section \ref{sec:PALM},
we experimented with two other algorithms, obtaining comparable results. For the sake of completeness, we describe these algorithms here.

\subsection{Stochastic gradient descent}
\label{sec:Proximal}

Using any of the initializations discussed in Section \ref{sec:Initialization} we iterate
\begin{align}
\bH^{(t+1)} = \bH^{(0)} - \gamma_t \bG^{(t)}\, .
\end{align}
The step size $\gamma_t$ is selected by backtracking line search. Ideally, the direction $\bG^{(t)}$ can be taken to be equal to 
$\nabla\cuR_{\lambda}(\bH^{(t)})$. However, for large datasets this is computationally impractical, since it requires to compute the projection of each data
point onto the set $\conv(\bH^{(t)})$. In order to reduce the complexity of the direction calculation, we estimate this sum by subsampling.
Namely, we draw a uniformly random set $S_t\subseteq [n]$ of fixed size $|S_t|=s\le n$, and compute
\begin{align}
\bG^{(t)} &=  \frac{2n}{|S_t|}\sum_{i\in S_t} \balpha_{i}^*\left(\bPi_{\conv(\bH)}\left(\bx_{i}\right)-\bx_{i}\right) +2\lambda\left(\bH - \bPi_{\conv(\bX)}\left(\bH\right)\right) \,,\\
\balpha_{i}^{*} &= \arg\min_{\balpha \in \Delta^r} \left\| \bH^\sT \balpha - \bx_{i}^\sT\right\|_2\, .
\end{align}

\subsection{Alternating minimization}

This approach generalizes the original algorithm of \cite{cutler1994archetypal}. 
We rewrite the objective as a function of $\bW = (bw_i)_{i\le n}$, $\bw_i\in\Delta^r$,$\bH = (\bh_i)_{i\le r}$, $\bh_i\in\reals^d$ and
$\bA = (\balpha_{\ell})_{\ell\le r}$, $\balpha_{\ell}\in\Delta^n$
\begin{align}
\cuR_{\lambda}(\bH) &= \min_{\bW,\bA}F(\bH,\bW,\bA)\, ,\\ 
F(\bH,\bW,\bA) & = \sum_{i=1}^n\Big\|\bx_i-\sum_{\ell=1}^r w_{i\ell}\bh_{\ell}\Big\|_2^3+\lambda
\sum_{\ell=1}^r\Big\|\bh_{\ell}-\sum_{i=1}^n\alpha_{\ell,i}\bx_i\Big\|^2_2\, .
\end{align}
The algorithm alternates between minimizing with respect to the weights $(\bw_i)_{i\le n}$ (this can be done independently across $i\in\{1,\dots,n\}$) and minimizing
over $(\bh_{\ell},\balpha_{\ell})$, which is done sequentially by cycling over $\ell\in\{1,\dots,r\}$. Minimization over $\bw_i$ can be performed by solving a non-negative  
least squares problem. 
As shown in  \cite{cutler1994archetypal}, minimization over $(\bh_{\ell},\balpha_{\ell})$ is also equivalent to non-negative least squares. Indeed, by a simple calculation
\begin{align}
F(\bH,\bW,\bA) & = w^{\rm tot}_{\ell}\big\|\bh_{\ell}-\bv_{\ell}\big\|_2^2 + \lambda\Big\|\bh_{\ell}-\sum_{i=1}^n\alpha_{\ell,i}\bx_i\Big\|^2_2+\widetilde{F}(\bH,\bW,\bA)\\
& = f_{\ell}(\bh_{\ell},\balpha_{\ell};\bH_{\neq \ell},\bW,\bA)+\widetilde{F}(\bH,\bW,\bA)\, .
\end{align}
where $\bH_{\neq \ell} = (\bh_i)_{i\neq \ell, i\le r}$, $\widetilde{F}(\bH,\bW,\bA)$ does not depend on $(\bh_{\ell},\balpha_{\ell})$, and we defined
\begin{align}
w^{\rm tot}_{\ell} & \equiv \sum_{i=1}^nw^2_{i\ell}\, ,\\
\bv_{\ell} & \equiv \frac{1}{w^{\rm tot}_{\ell}}\,\sum_{i=1}^nw_{i,\ell}\left\{\bx_i-\sum_{j\neq \ell, j\le r} w_{ij}\bh_j\right\}\, .
\end{align}
It is therefore sufficient to minimize $f_{\ell}(\bh_{\ell},\balpha_{\ell};\bH_{\neq \ell},\bW,\bA)$ with respect to its first two arguments,
which is equivalent to a non-negarive least squares problem. This can be seen by minimizing $f_{\ell}(\cdots )$ explicitly with respect to $\bh_{\ell}$ and
writing the resulting objective function.

The pseudocode for this algorithm is given below. 
\begin{center}
	\begin{tabular}{ll}
	\hline
	\vspace{-0.35cm}\\
	\multicolumn{2}{l}{ {\sc Alternating minimization}}\\
	\hline
	\vspace{-0.35cm}\\
	\multicolumn{2}{l}{ {\bf Input :}  Data $\{\bx_i\}_{i\le n}$,  $\bx_i\in\reals^d$; integer $r$; initial archetypes $\{\bh_{\ell}^{(0)}\}_{1\le \ell\le r}$; number of iterations $T$;} \\
	\multicolumn{2}{l}{ {\bf Output :} Archetype estimates $\{\bh_{\ell}^{(T)}\}_{1\le \ell\le r}$;}\\
        1: & For $\ell\in\{1,\dots,r\}$:\\
        2: &\phantom{aa} Set $\balpha^{(0)}_{\ell} = \arg\min _{\balpha\in\Delta^n}\|\bh^{(0)}_\ell - \bX\balpha_{\ell}\|_2$;\\
	3: & For $t\in \{1,\dots, T\}$:\\
        4: &\phantom{aa}  Set $\bW^{t} = \arg\min_{\bW}F(\bH^{t-1},\bW,\bA^{t-1})$\\
        5: &\phantom{aa}  For $\ell\in\{1,\dots,r\}$:\\
        6: &\phantom{aa} Set $\bh^{(t)}_{\ell},\balpha^{(t)}_{\ell} = \arg\min_{\bh_{\ell},\balpha_{\ell}}f_{\ell}(\bh_{\ell},\balpha_{\ell};\bH_{<\ell}^{t},\bH_{>\ell}^{t-1},\bW^{t},\bA_{<\ell}^{t},\bA_{>\ell}^{t-1})$;\\
	7: & End For;\\
	8: & Return $\{\hbh_{\ell}^{(T)}\}_{1\le \ell\le r}$\\
	\vspace{-0.35cm}\\
	\hline
	\end{tabular}
\end{center}

Here $\bH_{< \ell} = (\bh_i)_{ i< \ell}$, $\bH_{> \ell} = (\bh_i)_{\ell< i\le r}$, and similarly for $\bA$.

\end{document}